\def\isfinal{1}
\newcommand{\inlinesprite}[2]{
  \begingroup\normalfont\hspace{-0.5em}
  \raisebox{-.1\height}{\includegraphics[height=0.65em, clip, trim={#1}]{sprites/sprites-#2.pdf}}
  \hspace{-0.22em}\endgroup
}
\DeclareRobustCommand {\agentsprite}[1][gray]    {\inlinesprite{2cm 4cm 2cm 0cm}{#1}}
\DeclareRobustCommand {\lifesprite}[1][gray]     {\inlinesprite{0cm 3cm 4cm 1cm}{#1}}
\DeclareRobustCommand {\treesprite}[1][gray]     {\inlinesprite{3cm 1cm 1cm 3cm}{#1}}
\DeclareRobustCommand {\wallsprite}[1][gray]     {\inlinesprite{2cm 2cm 2cm 2cm}{#1}}
\DeclareRobustCommand {\cratesprite}[1][gray]     {\inlinesprite{3cm 2cm 1cm 2cm}{#1}}
\DeclareRobustCommand {\spawnsprite}[1][gray]    {\inlinesprite{1cm 1cm 3cm 3cm}{#1}}
\DeclareRobustCommand {\exitsprite}[1][gray]     {\inlinesprite{2cm 1cm 2cm 3cm}{#1}}
\newcommand*{\naive}{\texttt{Naive}}
\newcommand*{\aup}{\texttt{AUP}}
\newcommand*{\aupp}{\aup$_\text{proj}$}
\newcommand*{\dqn}{\texttt{DQN}}
\newcommand*{\ppo}{\texttt{PPO}}
\newtheorem{thm}{Theorem}
\newtheorem{prop}[thm]{Proposition}
\theoremstyle{definition}
\newtheorem{definition}{Definition}
\theoremstyle{remark}
\renewcommand{\abs}[1]{\left|#1\right|}
\renewcommand{\set}[1]{\left\{#1\right\}}
\newcommand{\Vf}[2][R]{V^{*}_{#1} \left(#2\right)}
\newcommand{\St}{\mathcal{S}}
\newcommand{\R}{\mathcal{R}}
\newcommand{\geom}[1][1]{\frac{#1}{1-\gamma}}
\title{Avoiding Side Effects in Complex Environments}
\author{
\textbf{Alexander Matt Turner\thanks{Equal contribution.} \hspace{25pt} Neale Ratzlaff$^*$
 \hspace{30pt} Prasad Tadepalli \vspace{0pt}} \\
Oregon State University\\
\texttt{\{turneale@, ratzlafn@, tadepall@eecs.\}oregonstate.edu}
}
\begin{document}

\maketitle

\setcounter{footnote}{0} 

\begin{abstract}
Reward function specification can be difficult. Rewarding the agent for making a widget may be easy, but penalizing the multitude of possible negative side effects is hard. In toy environments,  Attainable Utility Preservation (\textsc{aup}) avoided side effects by penalizing shifts in the ability to achieve randomly generated goals \citep{turner2020conservative}. We scale this approach to large, randomly generated environments based on Conway's Game of Life. By preserving optimal value for a single randomly generated reward function, \textsc{aup} incurs modest overhead while leading the agent to complete the specified task and avoid many side effects. Videos and code are available at \href{https://avoiding-side-effects.github.io/}{https://avoiding-side-effects.github.io/}.
\end{abstract} 

\section{Introduction}
Reward function specification can be difficult, even when the desired behavior seems clear-cut.  For example, rewarding  progress in a race led a reinforcement learning (\textsc{rl}) agent to collect checkpoint reward, instead of completing the race \citep{Victoria_specification}. We want to minimize the negative side effects of misspecification: from a robot which breaks equipment, to content recommenders which radicalize their users, to potential future AI systems which negatively transform the world \citep{bostrom_superintelligence_2014,russell_human_2019}. 

Side effect avoidance poses a version of the ``frame problem'': each action can have many effects, and it is impractical to explicitly penalize all of the bad ones \citep{brown2014frame}. For example, a housekeeping agent should clean a dining room without radically rearranging furniture, and a manufacturing agent should assemble widgets without breaking equipment. A general, transferable solution to side effect avoidance would ease reward specification: the agent's designers could just positively specify what should be done, as opposed to negatively specifying what should not be done. 

Breaking equipment is bad because it hampers future optimization of the intended ``true'' objective (which includes our preferences about the factory). That is, there often exists a reward function $R_{\text{true}}$ which fully specifies the agent's task within its deployment context. In the factory setting, $R_\text{true}$ might encode ``assemble widgets, but don't spill the paint, break the conveyor belt, injure workers, etc.''

We want the agent to preserve optimal value for this true reward function. While we can accept  suboptimal actions (e.g.\ pacing the factory floor), we cannot accept the destruction of value for the true task. By avoiding negative side effects which decrease value for the true task, the designers can correct any misspecification and eventually achieve low regret for $R_\text{true}$.

\paragraph{Contributions.} 
Despite being unable to directly specify $R_\text{true}$, we demonstrate a method for preserving its optimal value anyways. \citet{turner2020conservative} introduced \textsc{aup}; in their toy environments,  preserving  optimal value for many randomly generated reward functions often preserves the optimal value for $R_\text{true}$. In this paper, we generalize \textsc{aup} to combinatorially complex environments and evaluate it on four tasks from the chaotic and challenging SafeLife test suite \citep{wainwright2019safelife}. We show the rather surprising result that by preserving optimal value for a \emph{single} randomly generated reward function, \textsc{aup} preserves optimal value for $R_\text{true}$ and thereby avoids negative side effects. 


\section{Prior Work}
\textsc{Aup} avoids negative side effects in small gridworld environments while preserving optimal value for uniformly randomly generated auxiliary reward functions \citep{turner2020conservative}. While \citet{turner2020conservative} required many  auxiliary reward functions in their toy environments, we show that a single auxiliary reward function -- learned unsupervised -- induces competitive performance and discourages side effects in complex environments.

Penalizing decrease in (discounted) state reachability achieves similar results \citep{relative_original}. However, this approach has difficulty scaling: naively estimating all reachability functions is a task quadratic in the size of the state space. In \cref{sec:theory}, \cref{mut-bound} shows that preserving  the reachability of the initial state \citep{eysenbach2018leave} bounds the maximum decrease in optimal value for $R_\text{true}$. Unfortunately, due to  irreversible dynamics, initial state reachability often cannot be preserved.

\citet{shah2018the} exploit information contained in the initial state of the environment to infer which side effects are negative; for example, if vases are present, humans must have gone out of their way to avoid them, so the agent should as well. In the multi-agent setting, empathic deep Q-learning preserves  optimal value for another agent in the environment \citep{bussmann2019towards}. We neither assume nor model the presence of another agent.

Robust optimization selects a trajectory which maximizes the minimum return achieved under a feasible set of reward functions \citep{regan2010robust}. However, we do not assume we can specify the feasible set. In constrained \textsc{mdp}s, the agent obeys constraints while maximizing the observed reward function \citep{altman1999constrained, achiam2017constrained, zhang2018minimax}. Like specifying reward functions, exhaustively specifying constraints is  difficult. 

Safe reinforcement learning  focuses on avoiding catastrophic mistakes during training and ensuring that the learned policy satisfies certain constraints \citep{pecka2014safe,garcia2015comprehensive,berkenkamp2017safe,ray2019benchmarking}. While this work considers the safety properties of the learned policy, \textsc{aup} should be compatible with safe \textsc{rl} approaches.

We train value function networks separately, although \citet{schaul_universal_2015} demonstrate a value function predictor which generalizes across both states and goals.

\section{\textsc{AUP} Formalization}
Consider a Markov decision process (\textsc{mdp}) $\langle \mathcal{S},\mathcal{A}, T, R, \gamma \rangle$  with state space $\mathcal{S}$, action space $\mathcal{ A}$, transition function $T:\mathcal{ S}\times \mathcal{A} \to\Delta(\mathcal{S})$, reward function $R:\mathcal{ S}\times \mathcal{A}\to\mathbb{R}$, and discount factor $\gamma \in [0,1)$. We assume the agent may take a no-op action $\varnothing \in \mathcal{A}$. We refer to $\Vf{s}$ as the \emph{optimal value} or \emph{attainable utility}  of reward function $R$ at state $s$.

To define \textsc{aup}'s pseudo-reward function, the designer provides a finite reward function set $\mathcal{R}\subsetneq  \mathbb{R}^\mathcal{S}$, hereafter referred to as the \emph{auxiliary set}. This set does not necessarily contain $R_\text{true}$. Each auxiliary reward function $R_i\in \mathcal{R}$ has a learned Q-function $Q_{i}$. 

\textsc{Aup} penalizes average change in action value for the auxiliary reward functions. The motivation is that by not changing optimal value for a wide range of auxiliary reward functions, the agent may avoid decreasing optimal value for $R_{\text{true}}$.

\begin{definition}[\textsc{Aup} reward function \citep{turner2020conservative}] Let $\lambda\geq 0$. Then
\begin{equation}
\label{eq:aup}
    R_\textsc{aup}(s, a) \vcentcolon = R(s,a) - \frac{\lambda}{|\mathcal{R}|} \sum_{R_i\in \mathcal{R}} \left | Q^*_{i}(s,a) - Q^*_{i}(s, \varnothing) \right |.
\end{equation}
\end{definition}

The regularization parameter $\lambda$ controls penalty severity. In appendix \ref{sec:theory}, \cref{mut-bound} shows that \cref{eq:aup} only lightly penalizes easily reversible actions.  In practice, the learned auxiliary $Q_{i}$ is a stand-in for the optimal Q-function $Q^*_{i}$. 

\section{Quantifying Side Effect Avoidance with SafeLife}

In Conway's Game of Life, cells are alive or dead. Depending on how many live neighbors surround a cell, the cell comes to life, dies, or retains its state. Even simple initial conditions can evolve into complex and chaotic patterns, and the Game of Life  is Turing-complete when played on an infinite grid \citep{rendell2002turing}. 

SafeLife turns the Game of Life into an actual game. An autonomous agent moves freely through the world, which is a large finite grid. In the eight cells surrounding the agent, no cells spawn or die – the agent can disturb dynamic patterns  by merely approaching them. There are many colors and kinds of cells, many of which have unique effects (see \cref{fig:safelife-still}). 

\begin{figure}[ht]
\centering 
\subfloat[][\texttt{append-spawn}]{\label{append-spawn}
    \includegraphics[width=.4\linewidth]{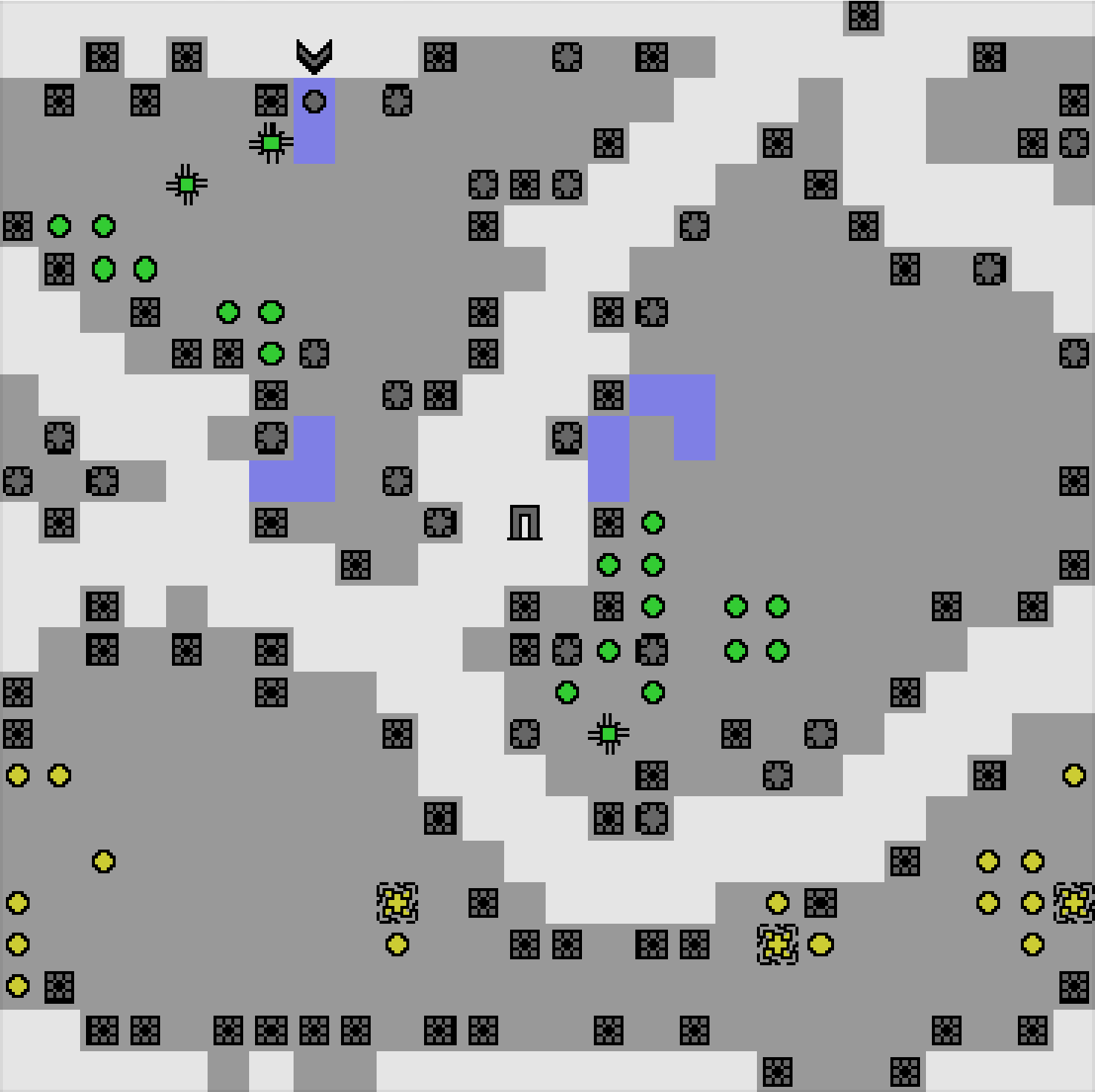}
}
\subfloat[][\texttt{prune-still-easy}]{\label{prune-still-easy}
    \includegraphics[width=.4\linewidth]{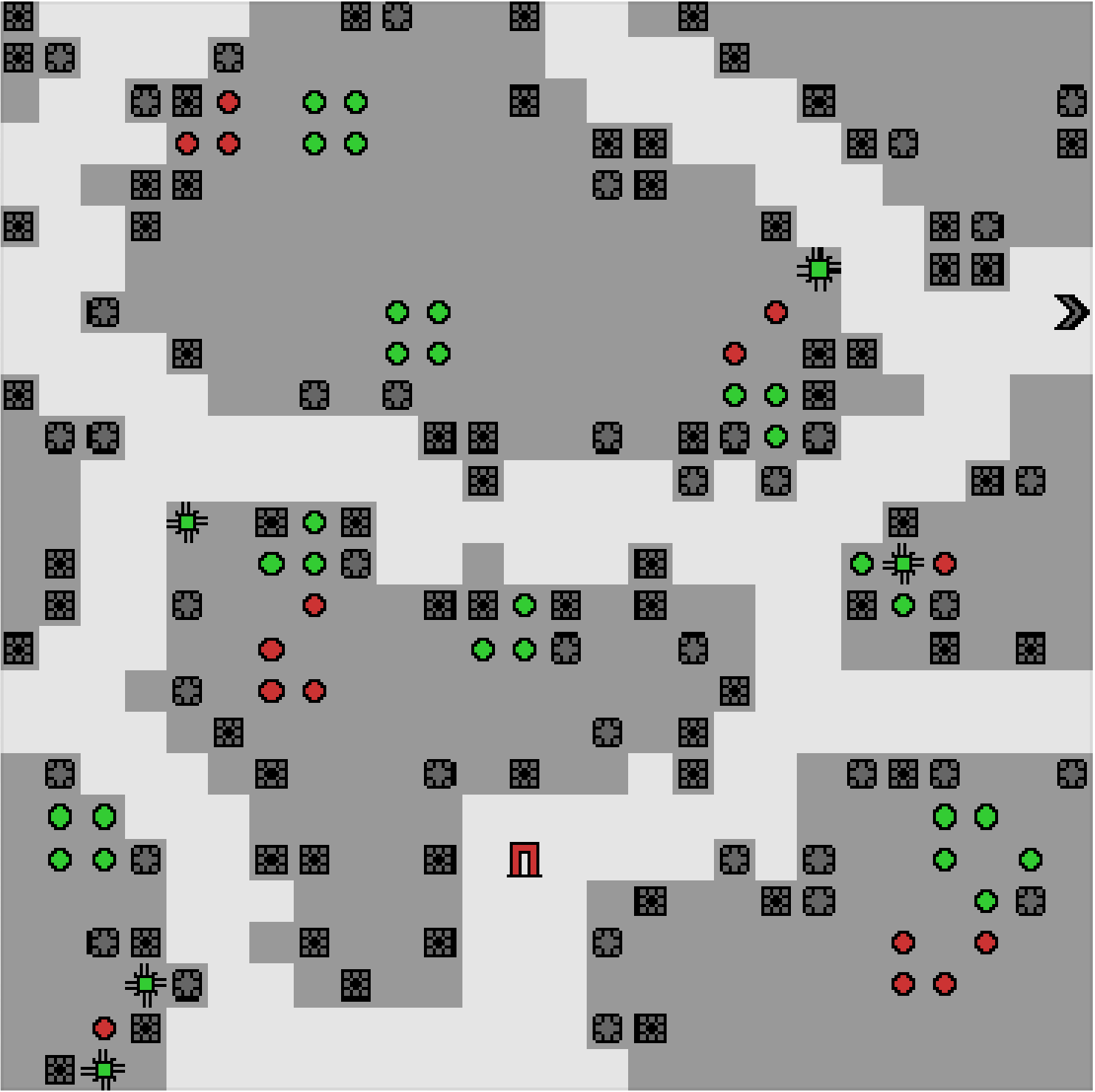}
}
\caption{Trees~\mbox{(\treesprite[green])} are permanent living cells. The agent~\mbox{(\agentsprite)} can move crates~\mbox{(\cratesprite)} but not walls~\mbox{(\wallsprite)}. The screen wraps vertically and horizontally.  \protect\subref{append-spawn}: The agent receives reward for creating gray cells~\mbox{(\lifesprite)} in the blue  areas. The goal~\mbox{(\exitsprite)} can be entered when some number of gray cells are present. Spawners~\mbox{(\spawnsprite[yellow])}  stochastically create yellow living cells. \protect\subref{prune-still-easy}: The agent receives reward for removing red cells; after some number have been removed, the goal turns red~\mbox{(\exitsprite[red])} and can be entered.\label{fig:safelife-still}}
\end{figure}

To understand the policies incentivized by \cref{eq:aup}, we now consider a simple example. \Cref{fig:example}  compares a policy which only optimizes the SafeLife reward function $R$, with an \textsc{aup} policy that also preserves the optimal value for a single auxiliary reward function ($|\mathcal{R}|=1$). 

\begin{figure}[ht]
\centering 
\subfloat[][Baseline trajectory]{\label{ex1}
    \includegraphics[width=.495\linewidth]{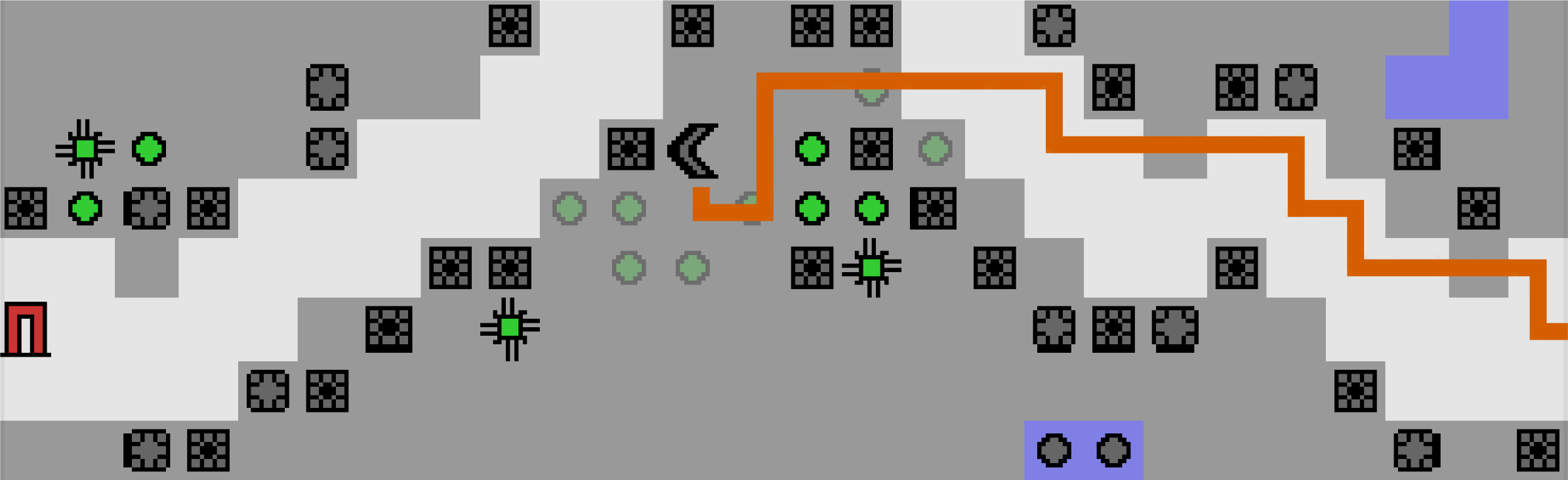}}
\subfloat[][\textsc{aup} trajectory]{\label{ex2}
    \includegraphics[width=.495\linewidth]{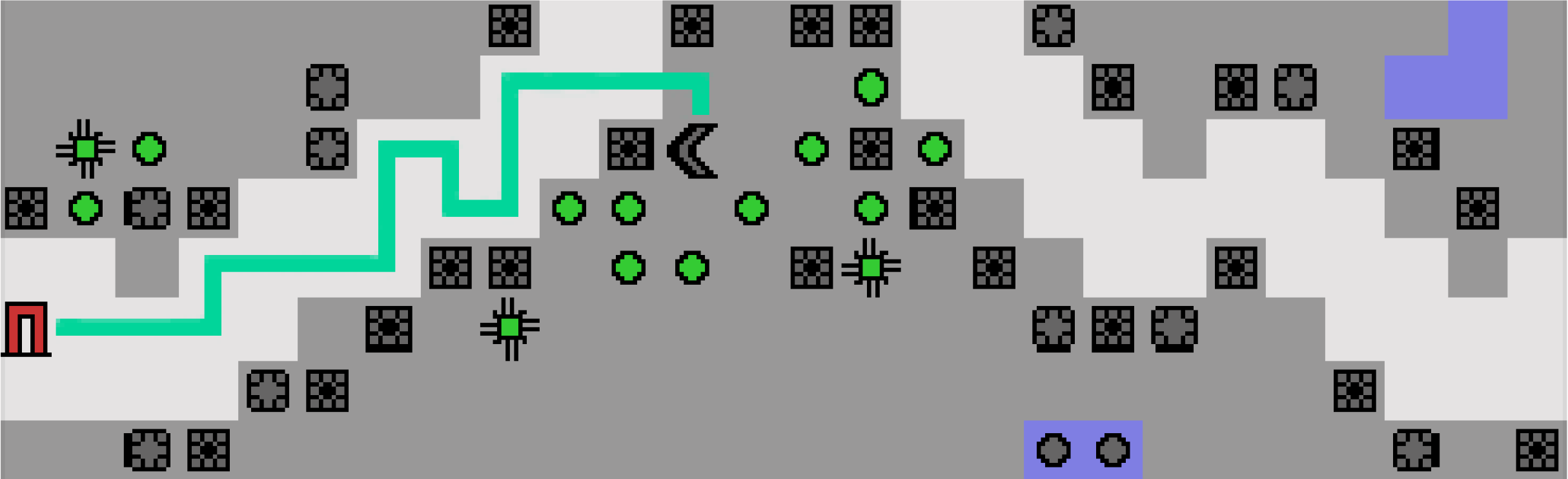}}
\caption{The agent~\mbox{(\agentsprite)} receives 1 primary reward for entering the goal~\mbox{(\exitsprite[red])}. The agent can move in the cardinal directions, destroy cells in the cardinal directions, or do nothing. Walls~\mbox{(\wallsprite)} are not movable. The right end  of the screen wraps around to the left. 
 \protect\subref{ex1}: The learned trajectory for the misspecified primary reward function $R$ destroys fragile green cells~\mbox{(\lifesprite[green])}. \protect\subref{ex2}: Starting from the same state, \textsc{aup}'s trajectory preserves the green cells.}  
    \label{fig:example}
\end{figure}

Importantly, we did not hand-select an informative auxiliary reward function in order to induce the trajectory of \cref{ex2}. Instead, the auxiliary reward was the output of a one-dimensional observation encoder, corresponding to a continuous Bernoulli variational autoencoder (\textsc{cb-vae}) \citep{loaiza2019continuous} trained through random exploration.

While  \citet{turner2020conservative}'s \textsc{aup} implementation uniformly randomly generated reward functions over the observation space, the corresponding Q-functions would have extreme sample complexity in the high-dimensional SafeLife tasks (\cref{tab:comparison}). In contrast, the \textsc{cb-vae} provides a structured and learnable auxiliary reward signal.

\begin{table}[ht]
    \centering
    \begin{tabular}{cc}
    \toprule
    AI safety gridworlds \citep{leike_ai_2017} & SafeLife \citep{wainwright2019safelife}\\
    \midrule
    Dozens of states & Billions of states \\
    Deterministic dynamics & Stochastic dynamics \\
    Handful of preset environments & Randomly generated environments \\
    One side effect per level & Many side effect opportunities\\
    Immediate side effects & Chaos unfolds over time\\
    \bottomrule
    \end{tabular}
    \vspace{8pt}
    \caption{\citet{turner2020conservative} evaluated \textsc{aup} on toy environments. In contrast, SafeLife challenges modern \textsc{rl} algorithms and is well-suited for testing side effect avoidance.}
    \vspace{-12pt}
    \label{tab:comparison}
\end{table}

\section{Experiments}
\label{sec:experiments} 

Each time step, the agent observes a $25\times 25$ grid cell window centered on its current position. The agent can move in the cardinal directions, spawn or destroy a living cell in the cardinal directions, or do nothing. 

We follow \citet{wainwright2019safelife} in scoring side effects as the degree to which the agent perturbs green cell patterns. Over an episode of $T$ time steps, side effects are quantified as the  Wasserstein 1-distance between the configuration of green cells had the state evolved naturally for $T$ time steps, and the actual configuration at the end of the episode. As the primary reward function $R$ is indifferent to green cells, this proxy measures the safety performance of learned policies. 

If the agent never disturbs green cells, it achieves a perfect score of zero; as a rule of thumb, disturbing a green cell pattern increases the score by 4. By construction, minimizing side effect score preserves $R_\text{true}$'s optimal value, since $R_\text{true}$ encodes our preferences about the existing green patterns.

\subsection{Comparison}\label{sec:line}
\paragraph{Method.}
We describe five conditions below and evaluate them on the  \texttt{append-spawn} (\cref{append-spawn}) and \texttt{prune-still-easy} (\cref{prune-still-easy}) tasks. Furthermore, we include two variants of \texttt{append-spawn}: \texttt{append-still} (no stochastic spawners and more green cells) and \texttt{append-still-easy} (no stochastic spawners). The primary, specified SafeLife  reward functions are as follows: \texttt{append-*} rewards maintaining gray cells in the blue tiles (see \cref{append-spawn}), while \texttt{prune-still-easy} rewards the agent for removing red cells (see \cref{prune-still-easy}).

For each task, we randomly generate a set of 8 environments to serve as the curriculum. On each generated curriculum, we evaluate each condition on several randomly generated seeds. The agents are evaluated on their training environments. In general, we generate 4 curricula per task; performance metrics are averaged over 5 random seeds for each curriculum. We use curriculum learning because the \textsc{ppo} algorithm seems unable to learn environments one at a time.

We have five conditions: {\ppo}, {\dqn}, {\aup}, {\aupp}, and {\naive}.\footnote{We write ``{\aup}'' to refer to the experimental condition and ``\textsc{aup}'' to refer to the method of Attainable Utility Preservation.} Excepting {\dqn}, the Proximal Policy Optimization (\textsc{ppo} \citep{schulman2017proximal}) algorithm trains each condition on a different reward signal for five million (5M) time steps. See \cref{sec:train} for architectural and training details.  

\begin{enumerate}[leftmargin=\widthof{{\aupp}}+\labelsep]
\item[{\ppo}] Trained on the primary SafeLife reward function $R$ without a side effect penalty.

\item[{\dqn}] Using \citet{mnih2015human}'s \textsc{dqn}, trained on the primary SafeLife reward function $R$ without a side effect penalty.

\item[{\aup}] For the first 100,000 (100K) time steps, the agent uniformly randomly explores to collect observation frames. These frames are used to train a continuous Bernoulli variational autoencoder with a 1-dimensional latent space and encoder network $E$. 

The auxiliary reward function is then the output of the encoder $E$; after training the encoder for the first 100K steps, we train a Q-value network for the next 1M time steps.  This learned $Q_{R_1}$ defines the $R_\textsc{AUP}$ penalty term (since $\abs{\R}=1$; see \cref{eq:aup}). 

While the agent trains on the $R_\textsc{aup}$ reward signal for the final 3.9M steps, the $Q_{R_1}$ network is fixed and $\lambda$ is linearly increased from .001 to .1. See \cref{algo:aup} in \cref{sec:train} for more details.

\item[{\aupp}] {\aup}, but the auxiliary reward function is a random projection from a downsampled observation space to $\mathbb{R}$, without using a variational autoencoder. Since there is no \textsc{cb-vae} to learn, {\aupp} learns its Q-value network for the first 1M steps and trains on the $R_\textsc{aup}$ reward signal for the last 4M steps.

\item[{\naive}] Trained on the primary reward function $R$ minus (roughly) the $L_1$ distance between the current state and the initial state. The agent is penalized when cells differ from their initial values. We use an unscaled $L_1$ penalty, which \citet{wainwright2019safelife} found to produce the best results. 

While an $L_1$ penalty induces good behavior in certain static tasks, penalizing state change  often fails to avoid crucial side effects.  State change penalties do not differentiate between moving a box and irreversibly wedging a box in a corner \citep{relative_original}. 
\end{enumerate}




We only tuned hyperparameters on \texttt{append-still-easy}  before using them on all tasks. For \texttt{append-still}, we allotted an extra 1M steps to achieve convergence for all agents. For \texttt{append-spawn}, agents pretrain  on \texttt{append-still-easy} environments for the first 2M steps and train on \texttt{append-spawn} for the last 3M steps. For {\aup} in \texttt{append-spawn}, the autoencoder and auxiliary network are trained on both tasks. $R_\textsc{aup}$ is then pretrained for 2M steps and trained for 1.9M steps, thus training for the same total number of steps. 

\paragraph{Results.}

\begin{figure}[ht!]
\centering 
\caption{Smoothed learning curves with shaded regions representing $\pm 1$ standard deviation. {\aup} begins training on the $R_\textsc{aup}$ reward signal at step 1.1M, marked by a dotted vertical line. {\aupp} begins such training at step 1M.} 

\captionsetup[subfigure]{labelformat=empty, captionskip=-50pt}
\subfloat[][]{\label{reward-append-easy-plot}
    \includegraphics[height=1.95in]{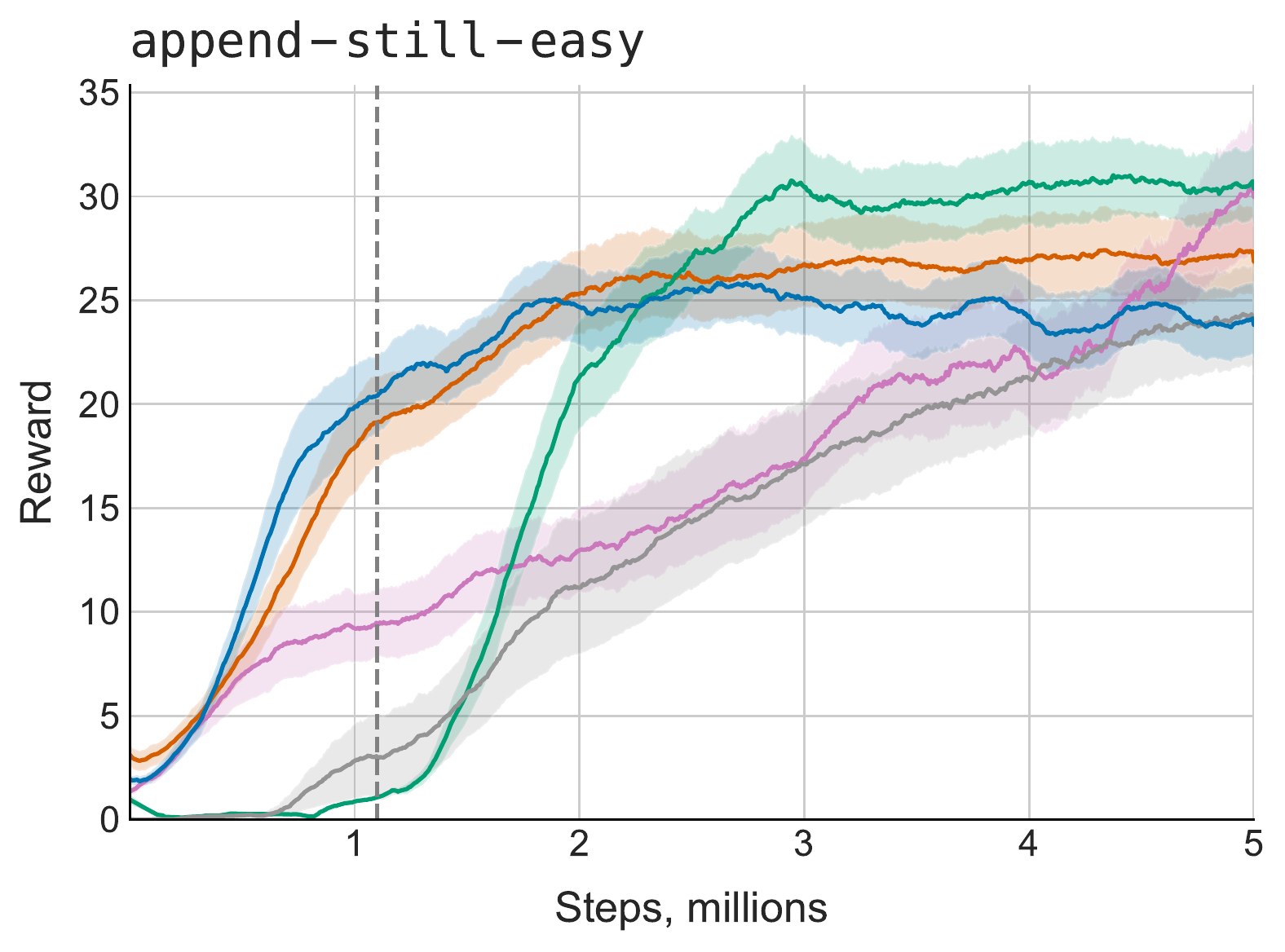}
}
\subfloat[][]{\label{side-append-easy-plot}
    \includegraphics[height=1.95in]{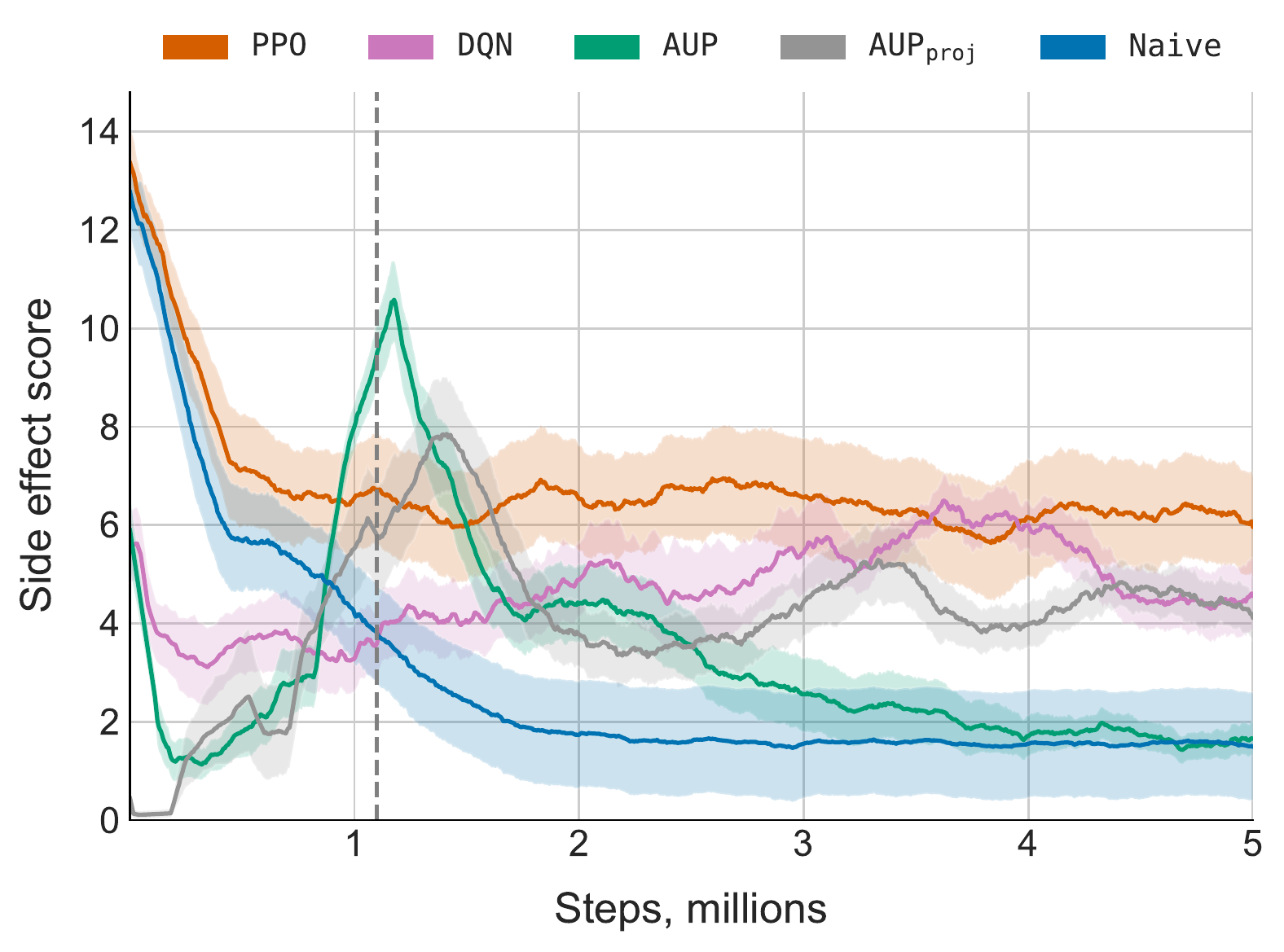}
}
\vspace{-8pt}
\subfloat[][]{\label{reward-prune-easy-plot}
    \includegraphics[height=1.95in]{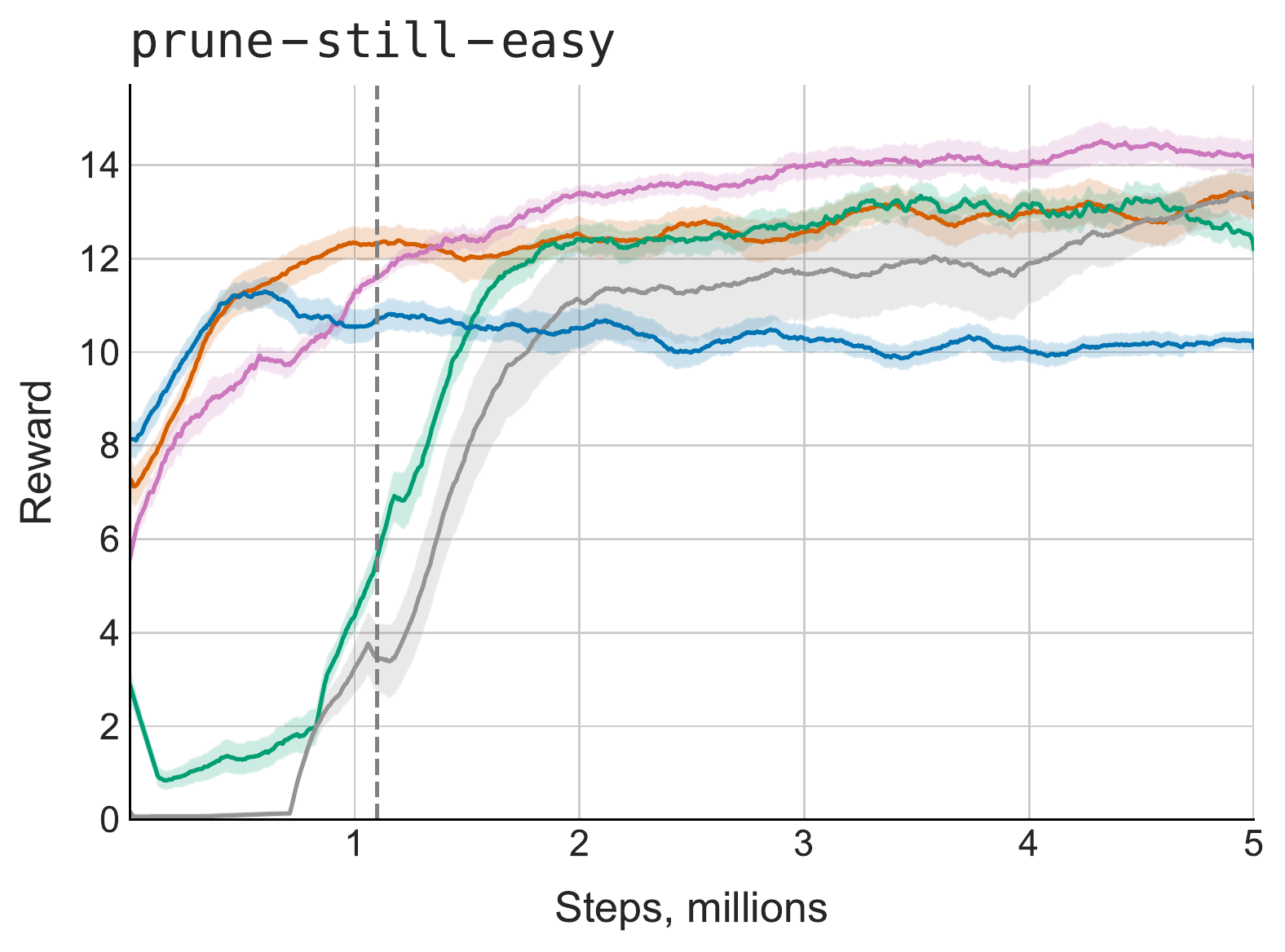}
}
\subfloat[][]{\label{side-prune-easy-plot}
    \includegraphics[height=1.95in]{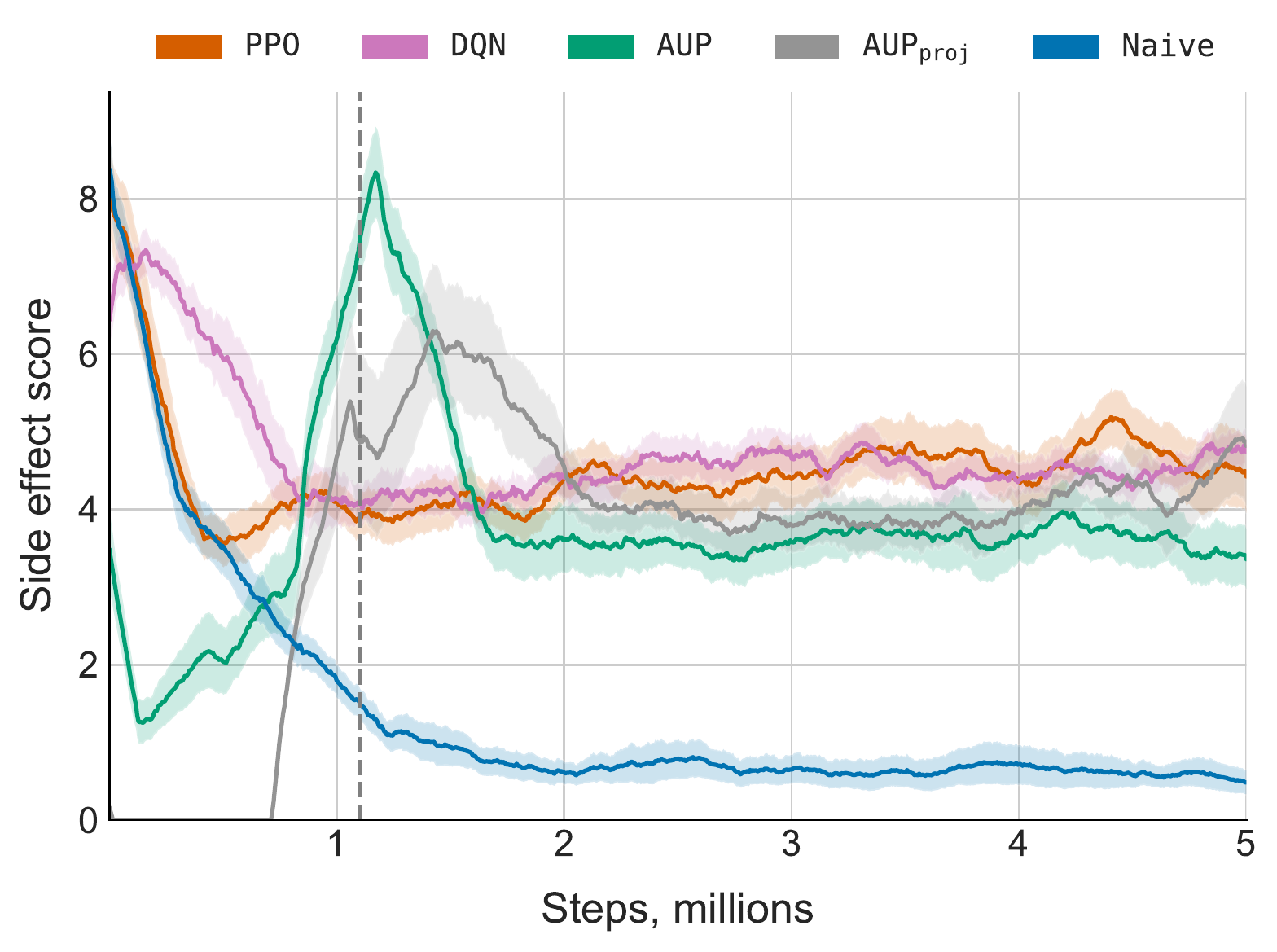}
}
\vspace{-8pt}

\subfloat[][]{\label{reward-append-plot}
    \includegraphics[height=1.95in]{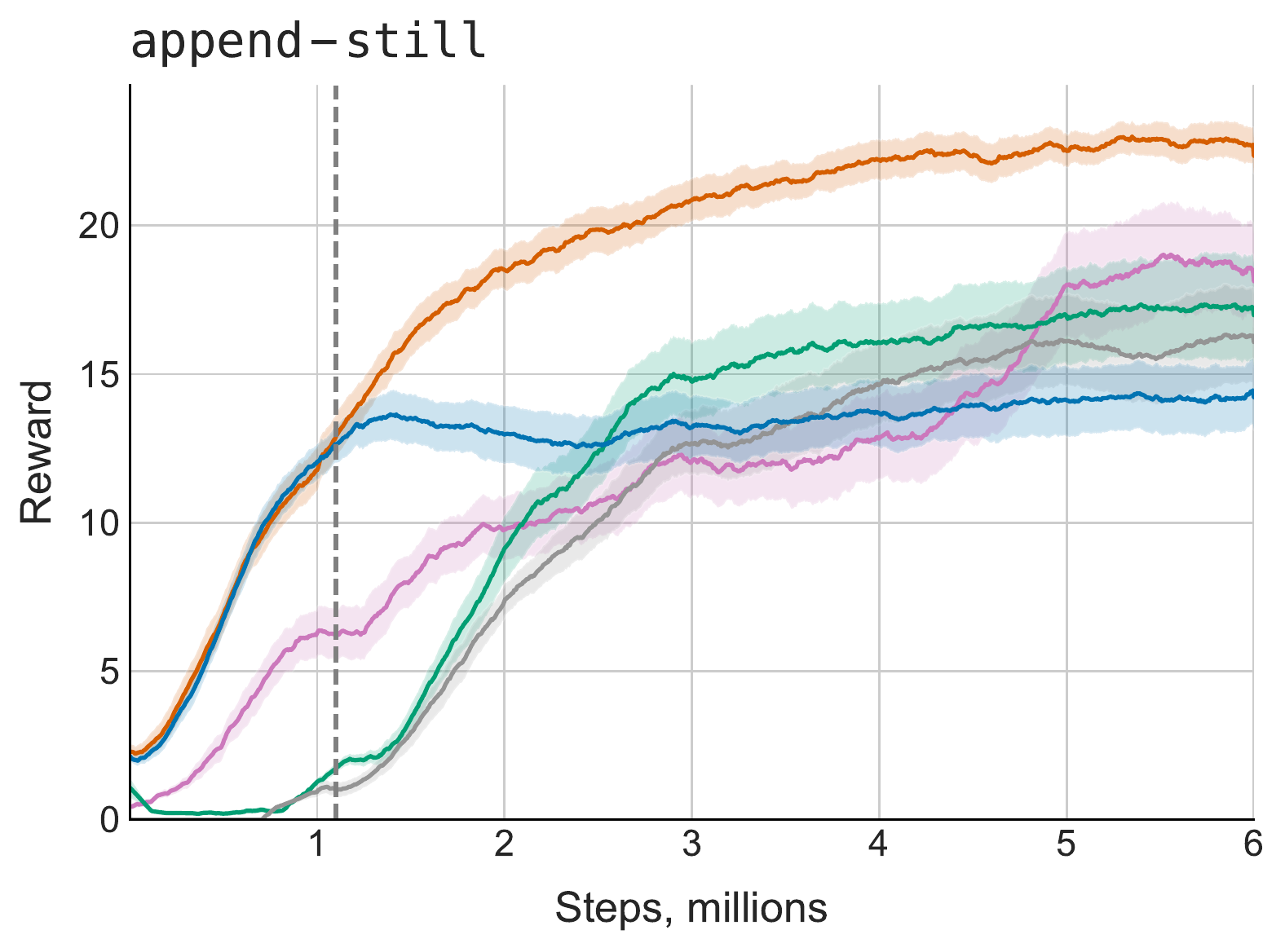}
}
\subfloat[][]{\label{side-append-plot}
    \includegraphics[height=1.95in]{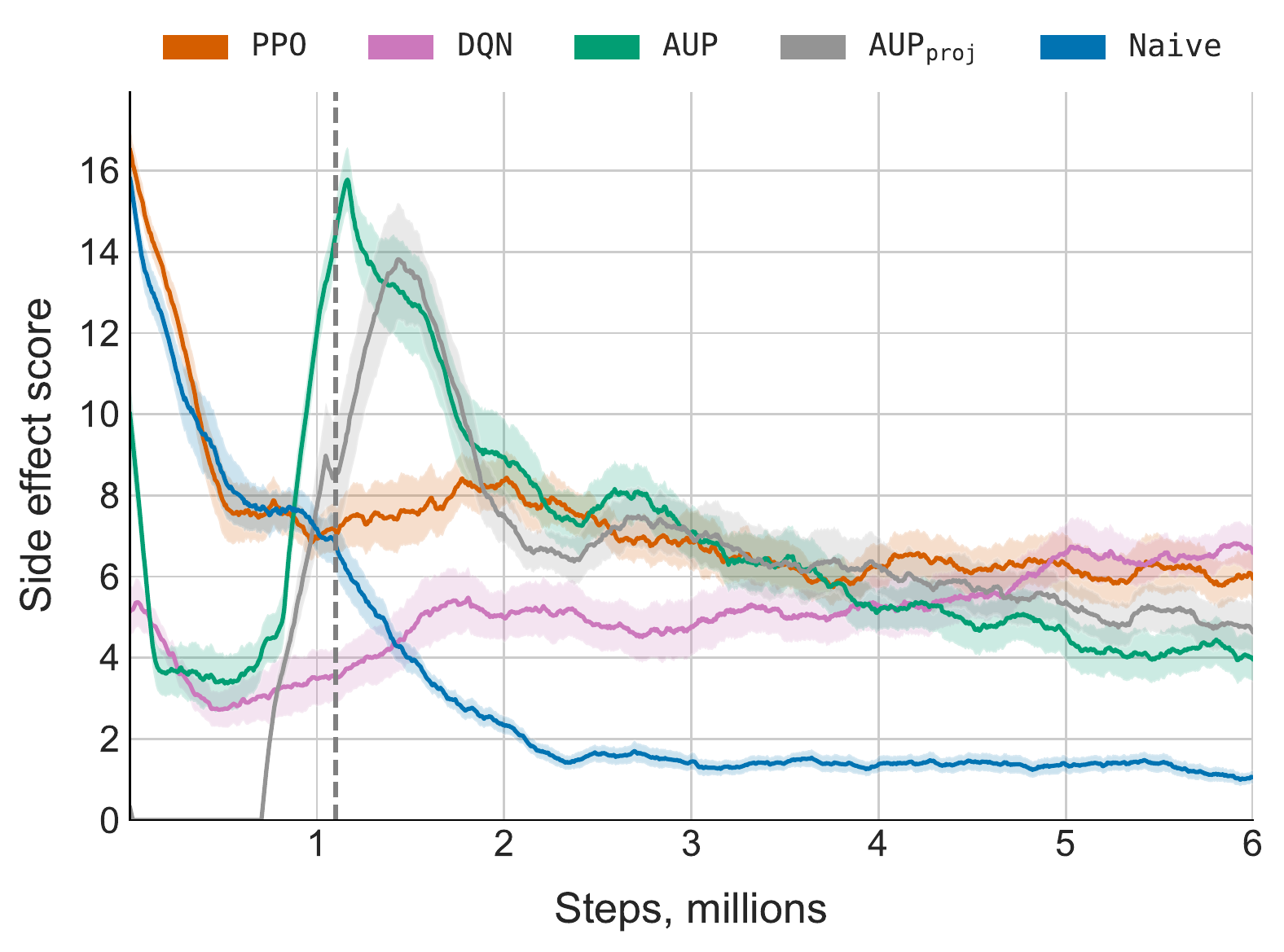}
}
\vspace{-8pt}
\subfloat[][]{\label{reward-append-spawn-plot}
    \includegraphics[height=1.95in]{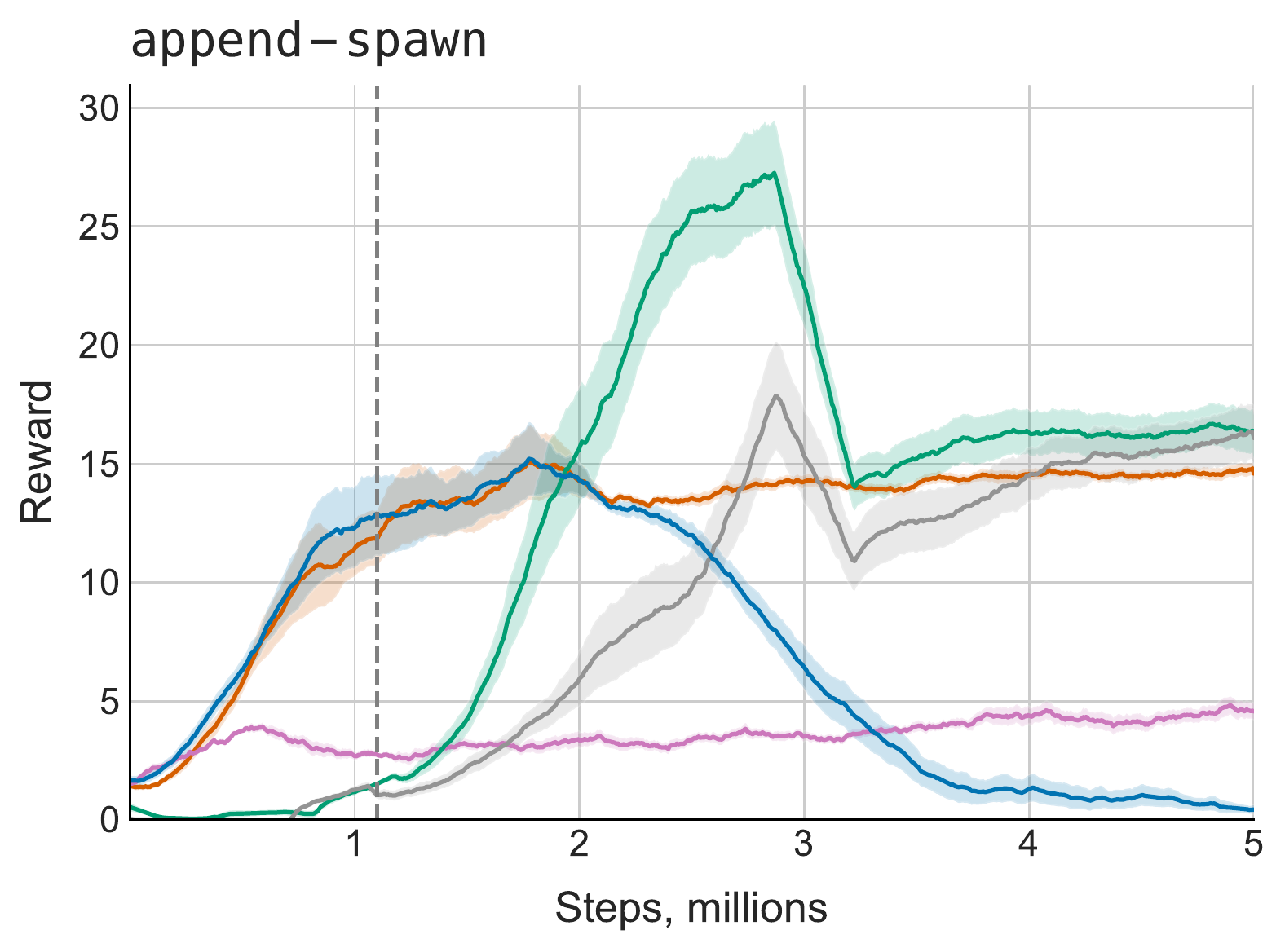}
}
\subfloat[][]{\label{side-append-spawn-plot}
    \includegraphics[height=1.95in]{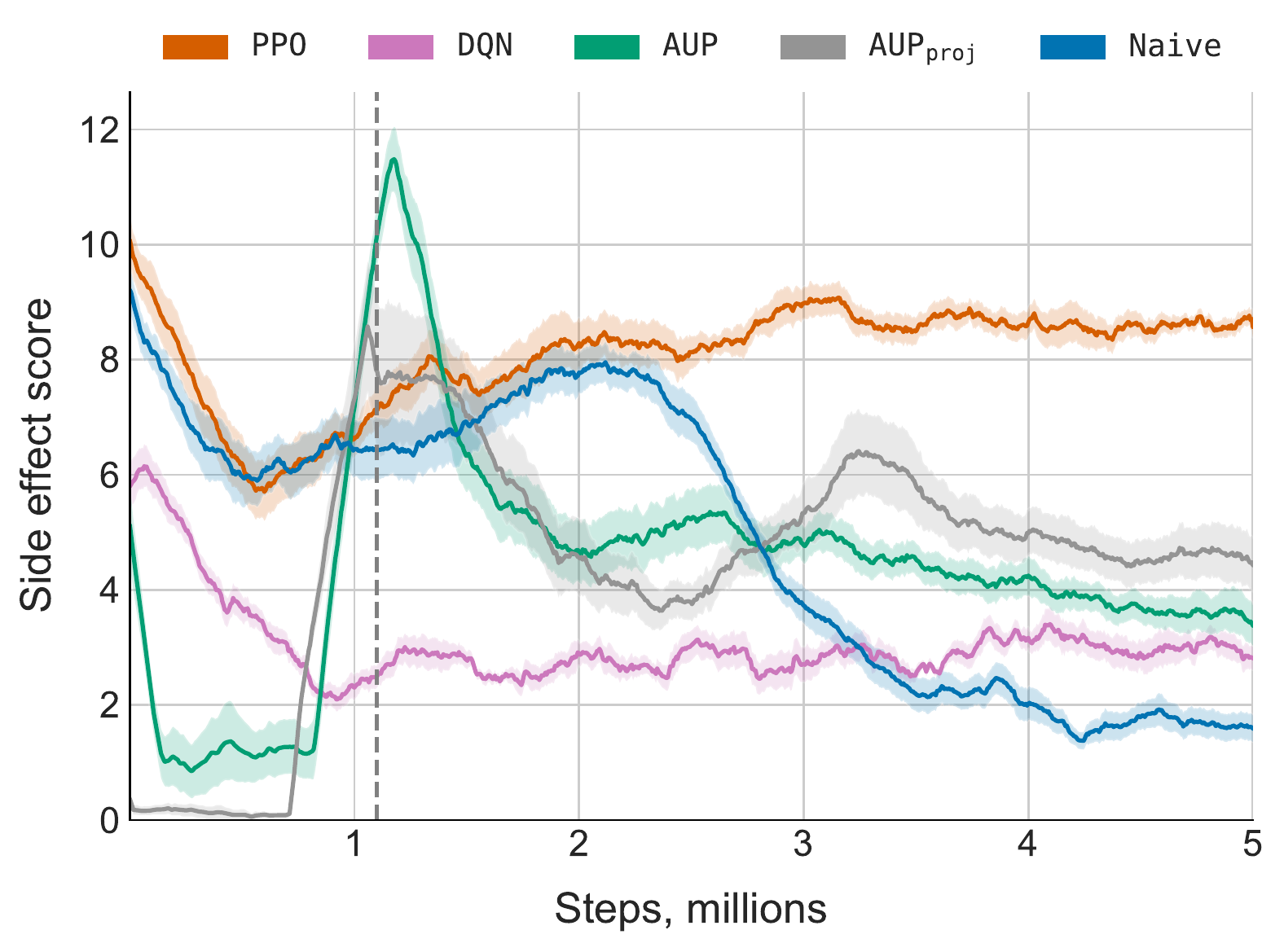}
}
\label{fig:plots}
\end{figure}

{\aup} learns quickly in \texttt{append-still-easy}. {\aup} waits 1.1M steps to start training on $R_\textsc{aup}$; while {\ppo} takes 2M steps to converge, {\aup} matches {\ppo} by step 2.5M and outperforms {\ppo} by step 2.8M (see \cref{fig:plots}). {\aup} and {\naive} both do well on side effects, with {\aup} incurring $27.8\%$ the side effects of {\ppo} after 5M steps. However, {\naive} underperforms {\aup} on reward.  {\dqn} learns more slowly, eventually exceeding {\aup} on reward. {\aupp} has lackluster performance, matching {\naive} on reward and {\dqn} on side effects, perhaps implying that the one-dimensional encoder  provides more structure than a random projection. 

In \texttt{prune-still-easy}, {\ppo}, {\dqn}, {\aup}, and {\aupp} all competitively accrue reward, while {\naive} lags behind. However, {\aup} only cuts out a quarter of {\ppo}'s side effects, while {\naive} does much better. Since all tasks but \texttt{append-spawn} are static, {\naive}'s $L_1$ penalty strongly correlates with the unobserved side effect metric (change to the green cells). {\aupp} brings little to the table, matching {\ppo} on both reward and side effects.

\texttt{append-still} environments contain more green cells than \texttt{append-still-easy} environments. By step 6M, {\aup} incurs $63\%$ of {\ppo}'s side effect score, while underperforming both {\ppo} and {\dqn} on reward. {\aupp} does slightly worse than {\aup} on both reward and side effects. Once again, {\naive} does worse than {\aup} on reward but better on side effects. In \cref{sec:addtl-data}, we display episode lengths over the course of training -- by step 6M, both {\aup} and {\naive} converge to an average episode length of about 780, while {\ppo} converges to 472.  

\texttt{append-spawn} environments contain stochastic yellow cell spawners. {\dqn} and {\aupp} both do extremely poorly. {\naive} usually fails to get \emph{any} reward, as it erratically wanders the environment. After 5M steps, {\aup} soundly improves on {\ppo}: 111\% of the reward, 39\% of the side effects, and 67\% of the  episode length. Concretely, {\aup} disturbs less than half as many green cells. Surprisingly, despite its middling performance on previous tasks, {\aupp} matches {\aup} on reward and cuts out 48\% of {\ppo}'s side effects.

\subsection{Hyperparameter sweep}
In the following, $N_\text{env}$ is the number of environments in the randomly generated curricula; when $N_\text{env}=\infty$, each episode takes place in a new environment. $Z$ is the dimensionality of the \textsc{cb-vae} latent space. While training on the $R_\textsc{aup}$ reward signal, the \textsc{aup} penalty coefficient $\lambda$ is linearly increased from .01 to $\lambda^*$. 

\paragraph{Method.}

In \texttt{append-still-easy}, we evaluate {\aup} on the following settings: $\lambda ^* \in \set{.1, .5, 1, 5}$, $\abs{\mathcal{R}}\in\set{1,2,4,8}$, and $(N_{\text{env}},Z) \in  \set{8,16,32,\infty}\times \set{1,4,16,64}$. We also evaluate  {\ppo} on each $N_{\text{env}}$ setting. We use default settings for all unmodified parameters.

For each setting, we record both the side effect score and the return of the learned policy, averaged over the last 100 episodes and over five seeds of each of three randomly generated  \texttt{append-still-easy} curricula. Curricula are held constant across settings with equal $N_\text{env}$ values.

After training the encoder, if $Z=1$, the auxiliary reward is the output of the encoder $E$. Otherwise, we draw linear functionals $\phi_i$ uniformly randomly from $(0,1)^{Z}$. The auxiliary reward function $R_i$ is defined as $\phi_i\circ E: \mathcal{S}\to \mathbb{R}$. 

For each of the $|\R|$ auxiliary reward functions, we learn a Q-value network for 1M time steps. The learned $Q_{R_i} $ define the penalty term of \cref{eq:aup}. While the agent trains on the $R_\textsc{aup}$ reward signal for the final 3.9M steps, $\lambda$ is linearly increased from .001 to $\lambda ^*$.

\begin{figure}[ht]
\centering 
\captionsetup[subfigure]{labelformat=empty}
\subfloat[][]{\label{s1}
    \includegraphics[height=1.7in,valign=t]{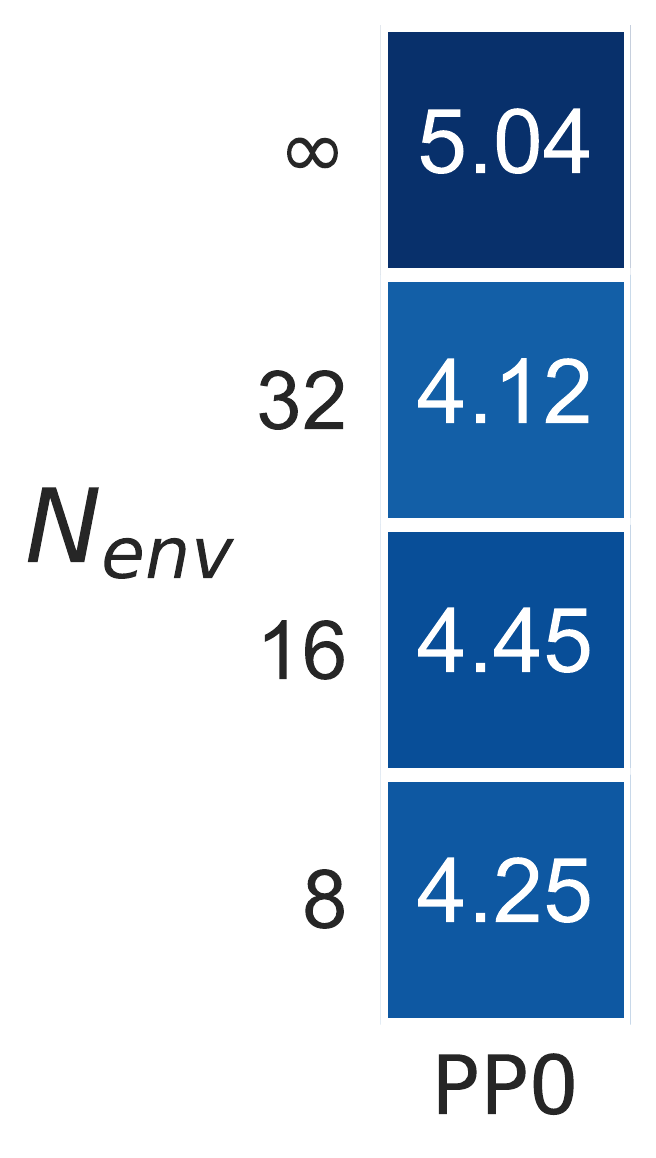}\hspace{-4pt}
    \includegraphics[height=1.9in,valign=t]{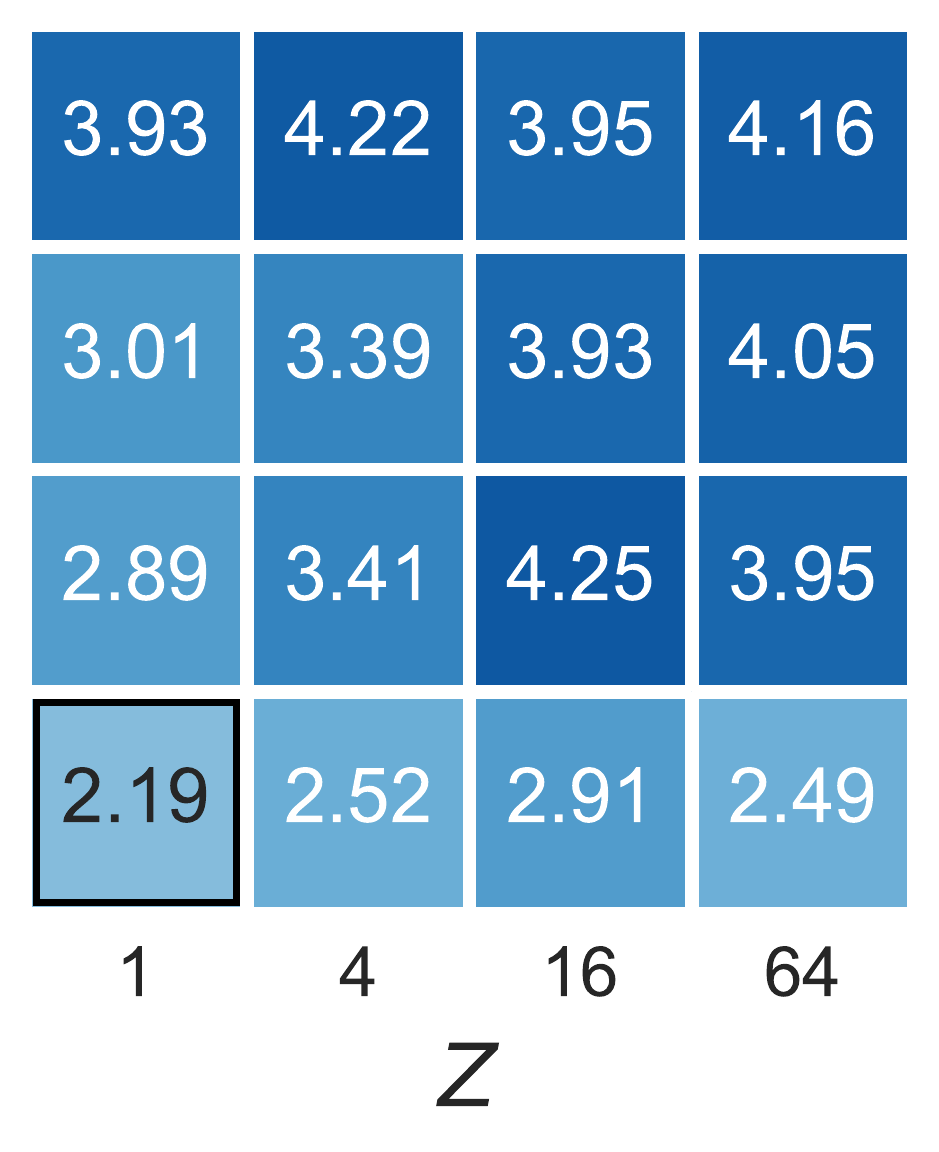}
} \quad \subfloat[][]{\label{s2}
    \includegraphics[height=1.9in, valign=t]{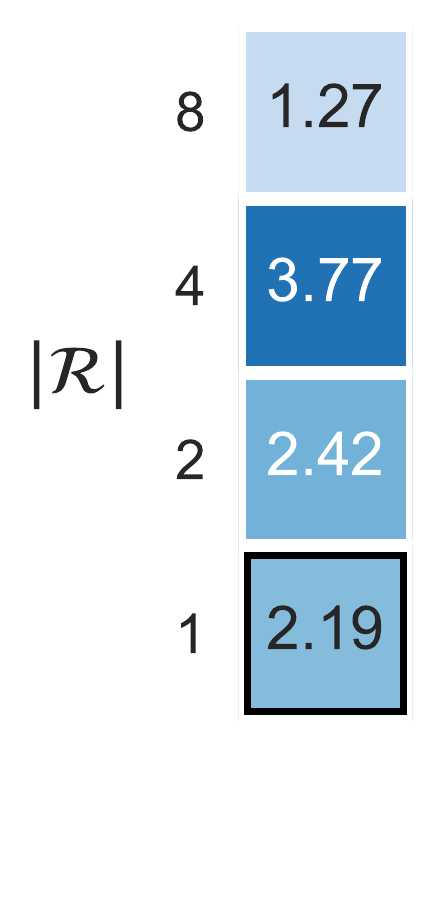}}  \qquad\subfloat[][]{\label{s3}
 \includegraphics[height=1.9in,valign=t]{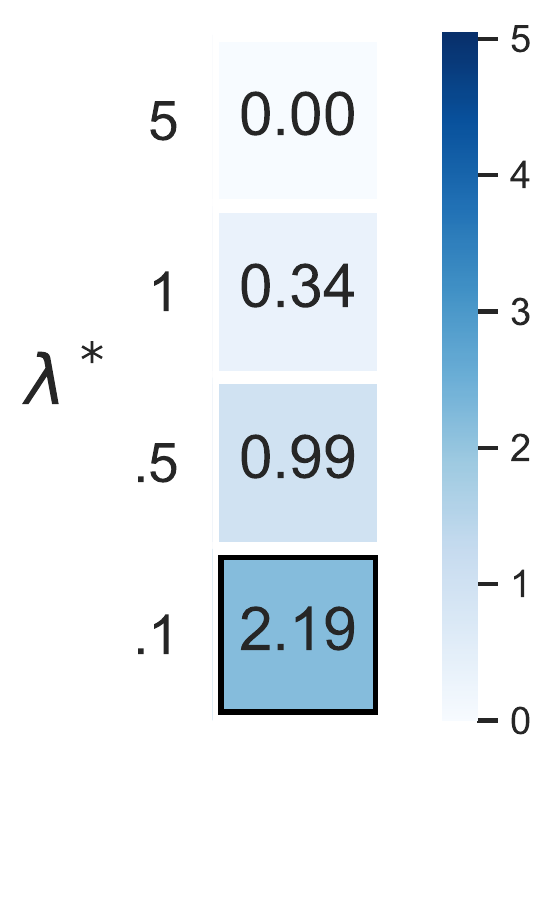}
}
\vspace{-10pt}
\caption{Side effect score for different {\aup} settings. Lower score is better. The default {\aup} setting ($Z=\abs{\R}=1,N_\text{env}=8,\lambda^*=.1$) is outlined in black. Unmodified hyperparameters take on their default settings; for example, when $\lambda^*=.5$ on the right, $Z=\abs{\R}=1,N_\text{env}=8$.\label{fig:side}}
\vspace{20pt}

\subfloat[][]{\label{r1}
    \includegraphics[height=1.7in,valign=t]{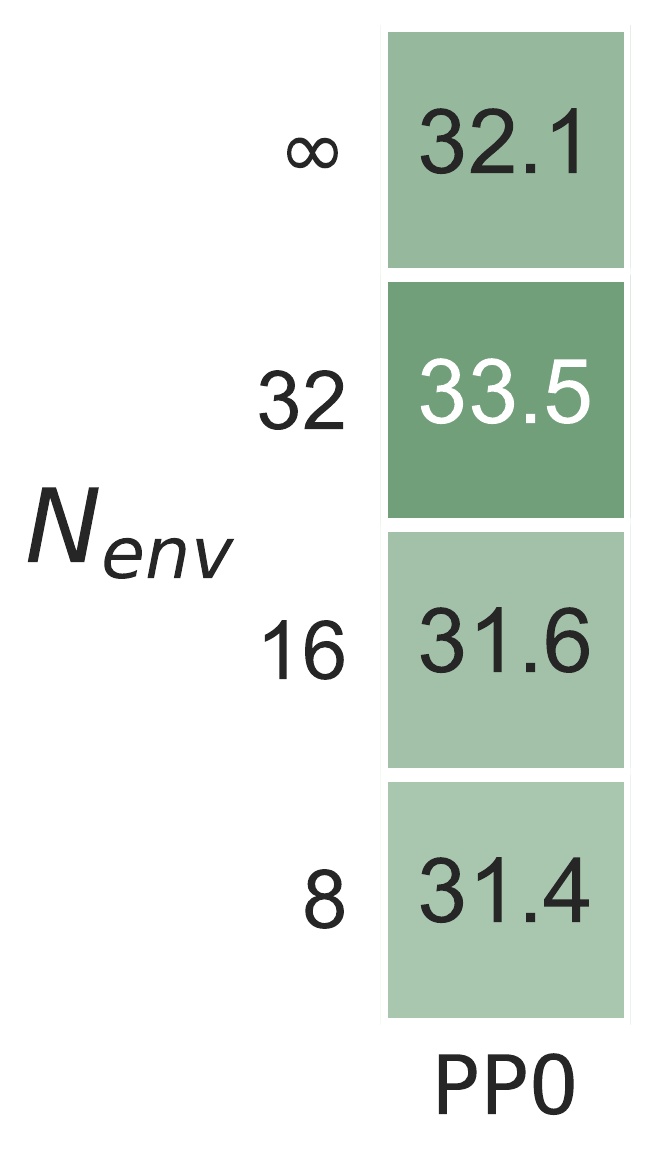}\hspace{-4pt}
    \includegraphics[height=1.9in,valign=t]{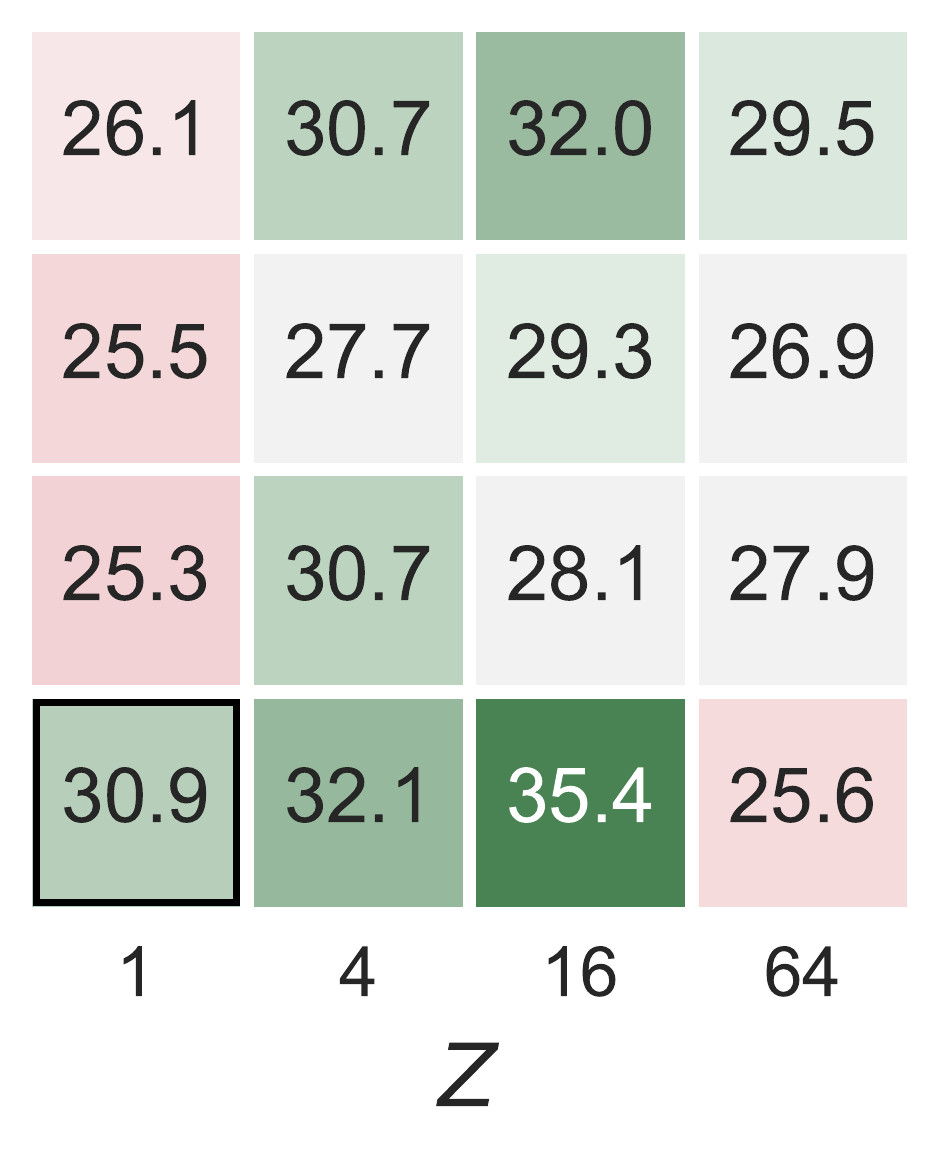}
} \quad \subfloat[][]{\label{r2}
    \includegraphics[height=1.9in, valign=t]{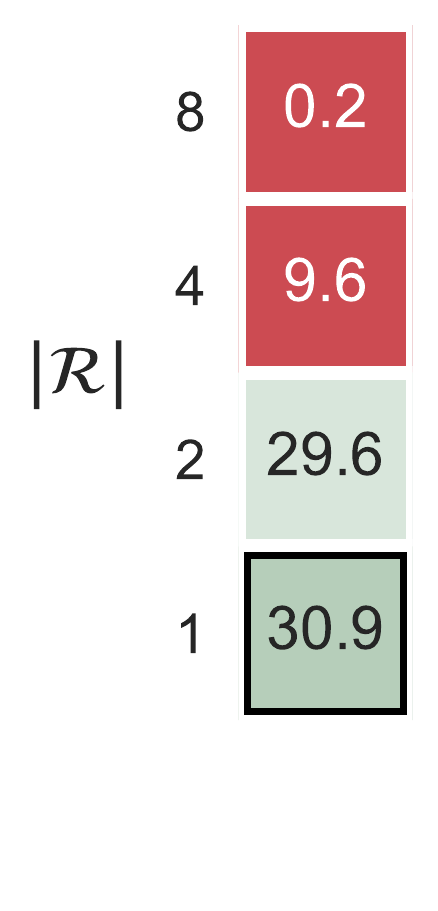}}  \qquad\subfloat[][]{\label{r3}
 \includegraphics[height=1.9in,valign=t]{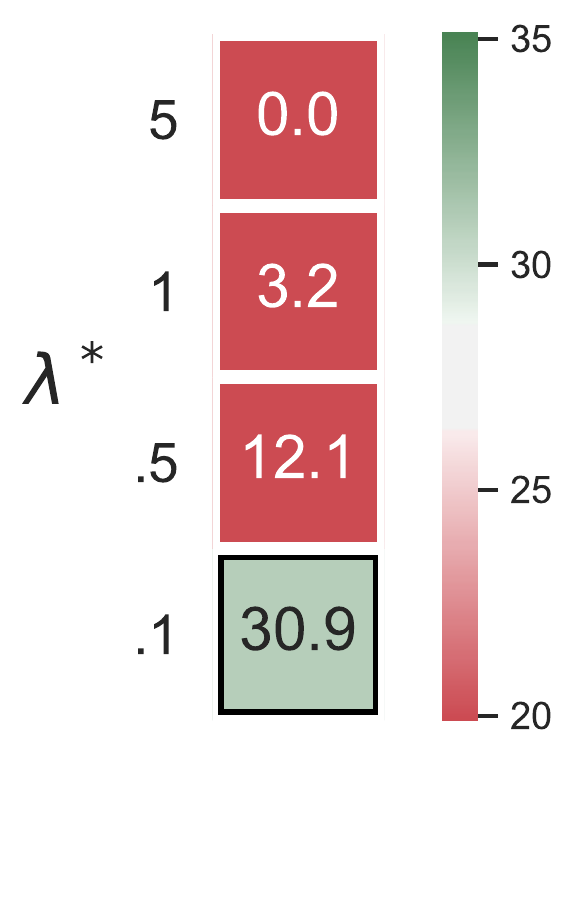}
}

\vspace{-10pt}
\caption{Episodic reward for different {\aup} settings. Higher reward  is better.\label{fig:reward}}
\end{figure}

\paragraph{Results.}

As $N_\text{env}$ increases, side effect score tends to increase. {\aup} robustly beats {\ppo} on side effect score: for each $N_\text{env}$ setting, {\aup}'s worst configuration has lower score than {\ppo}. Even when $N_\text{env}=\infty$, {\aup} ($Z=16$) shows the potential to significantly reduce side effects without reducing episodic return.  

{\aup} does well with a single latent space dimension ($Z=1$). For most $N_\text{env}$ settings, $Z$ is positively correlated with {\aup}'s side effect score. In \cref{sec:addtl-data}, data show that higher-dimensional auxiliary reward functions are harder to learn, presumably resulting in a poorly learned auxiliary Q-function. 

When $Z=1$, reward decreases as $N_\text{env}$ increases. We speculate that the \textsc{cb-vae} may be unable to properly encode large numbers of environments using only a 1-dimensional latent space. This would make the auxiliary reward function noisier and harder to learn, which could make the \textsc{aup} penalty term less meaningful. 

{\aup}'s default configuration achieves 98\% of {\ppo}'s episodic return, with just over half of the side effects. The fact that {\aup} is generally able to match {\ppo} in episodic reward leads us to hypothesize that the \textsc{aup} penalty term might be acting as a shaping reward. This would be intriguing – shaping usually requires knowledge of the desired task, whereas the auxiliary reward function is randomly generated. Additionally, after {\aup} begins training on the $R_\textsc{aup}$ reward signal at step 1.1M, {\aup} learns more quickly than {\ppo} did (\cref{fig:plots}), which supports the shaping hypothesis. {\aup} imposed minimal overhead: due to apparently increased sample efficiency, {\aup} reaches {\ppo}'s asymptotic episodic return at the same time as {\ppo} in \texttt{append-still-easy} and \texttt{append-spawn} (\cref{fig:plots}).

Surprisingly, {\aup} does well with a single auxiliary reward function ($\abs{\R}=1$). We hypothesize that destroying patterns decreases optimal value for a wide range of reward functions. Furthermore, we suspect that decreasing optimal value in general often decreases optimal value for any given single auxiliary reward function. In other words, we suspect that optimal value at a state is heavily correlated across reward functions, which might explain \citet{schaul_universal_2015}'s success in learning regularities across value functions. This potential correlation might explain why {\aup} does well with one auxiliary reward function. 

We were surprised by the results for $\abs{\R}=4$ and $\abs{\R}=8$. In \citet{turner2020conservative}, increasing $\abs{\R}$ reduced the number of side effects without impacting performance on the primary objective. We believe that work on better interpretability of AUP's $Q_{R_i}$ will increase understanding of these results.

When $\lambda^*=.5$, {\aup} becomes more conservative. As $\lambda^*$ increases further, the learned \texttt{AUP} policy stops moving entirely. 

\section{Discussion}

We successfully scaled \textsc{aup} to complex environments without providing task-specific knowledge -- the auxiliary reward function was a one-dimensional variational autoencoder trained through uniformly random exploration. To the best of our knowledge, \textsc{aup} is the first task-agnostic approach which reduces side effects and competitively achieves reward in complex environments. 

\citet{wainwright2019safelife} speculated that avoiding side effects must necessarily decrease performance on the primary task. This may be true for optimal policies, but not necessarily for learned policies. {\aup} improved performance on \texttt{append-still-easy} and \texttt{append-spawn}, matched performance on \texttt{prune-still-easy}, and underperformed on \texttt{append-still}. Note that since \textsc{aup} only regularizes learned policies, {\aup}  can still make expensive mistakes during training. 

{\aupp} worked well on \texttt{append-spawn}, while only slightly reducing side effects on the other tasks. This suggests that \textsc{aup} works (to varying extents) for a wide range of uninformative reward functions. 

While {\naive} penalizes every state perturbation equally, \textsc{aup} theoretically applies penalty in proportion to irreversibility (\cref{mut-bound}). For example, the agent could move crates around (and then put them back later). {\aup} incurred little penalty for doing so, while {\naive} was more constrained and consistently earned less reward than {\aup}. We believe that \textsc{aup} will continue to scale to  useful applications, in part because it naturally accounts for irreversibility.

\paragraph{Future work.}

Off-policy learning could allow simultaneous training of the auxiliary $R_i$ and of $R_\textsc{aup}$. Instead of learning an auxiliary Q-function, the agent could just learn the auxiliary advantage function with respect to inaction. 

Some environments do not have a no-op action, or the agent may have more spatially distant effects on the world which are not reflected in its auxiliary action values. In addition, separately training the auxiliary networks may be costly, which might necessitate off-policy learning.  We look forward to future work investigating these challenges.

The SafeLife suite includes more challenging variants of \texttt{prune-still-easy}. SafeLife also includes difficult \texttt{navigation} tasks, in which the agent must reach the goal by wading either through fragile green patterns or through robust yellow patterns. Additionally, \textsc{aup} has not yet been evaluated in partially observable domains.

{\aup}'s strong performance when $|\R|=Z=1$ raises interesting questions. \citet{turner2020conservative}'s small ``\texttt{Options}'' environment required $\abs{\R}\approx 5$ for good performance. SafeLife environments are much larger than \texttt{Options} (\cref{tab:comparison}), so why does $\abs{\R}=1$ perform well, and why does $\abs{\R}>2$ perform poorly? To what extent does the \textsc{aup} penalty term provide reward shaping?  Why do one-dimensional encodings provide a learnable reward signal over states?

\paragraph{Conclusion.}
To realize the full potential of \textsc{rl}, we need more than algorithms which train policies -- we need to be able to train policies which actually do what we want. Fundamentally, we face a frame problem: we often know what we want the agent to do, but we cannot list everything we want the agent \emph{not} to do. \textsc{Aup} scales to challenging domains, incurs modest overhead, and induces competitive performance on the original task while significantly reducing side effects -- without explicit information about what side effects to avoid. 

\section*{Broader Impact}

A scalable side effect avoidance method would ease the challenge of reward specification and aid deployment of \textsc{rl} in situations where mistakes are costly, such as embodied robotics tasks for which sim2real techniques are available.  Conversely, developers should  carefully consider how \textsc{rl} algorithms might produce policies with catastrophic impact. Developers should not blindly rely on  even a well-tested side effect penalty. 

In some applications, \textsc{aup} may decrease performance on the primary objective. In this case, developers may be incentivized to ``cut corners'' on safety in order to secure competitive advantage. 

\begin{ack}
We thank Joshua Turner for help compiling \cref{fig:side}  and \cref{fig:reward}. We thank Chase Denecke for improving \href{https://avoiding-side-effects.github.io/}{https://avoiding-side-effects.github.io/}. We thank Stuart Armstrong, Andrew Critch, Chase Denecke, Evan Hubinger, Victoria Krakovna, Dylan Hadfield-Menell, Matthew Olson, Rohin Shah, Logan Smith, and Carroll Wainwright for their ideas and feedback. We thank our reviewers for improving the paper with their detailed comments.

Alexander Turner is supported by a grant from the Long-Term Future Fund via the Center for Effective Altruism, and reports additional revenues related to this work from employment at the Center for Human-Compatible AI.
Neale Ratzlaff was partially supported by the Defense Advanced Research Projects Agency (DARPA) under Contract No. HR001120C0011 and HR001120C0022. Prasad Tadepalli acknowledges the support of NSF grants IIS-1619433, IIS-1724360, and DARPA contract N66001-19-2-4035. Any opinions, findings and conclusions or recommendations expressed in this material are those of the author(s) and do not necessarily reflect the views of DARPA. 
\end{ack}

\bibliographystyle{plainnat}
\bibliography{aup_rl.bib}
\newpage
\appendix

\section{Theoretical Results}
\label{sec:theory}

Consider a rewardless \textsc{mdp} $\langle \mathcal{S},\mathcal{A},T,\gamma\rangle$. 
Reward functions $R\in\mathbb{R}^\St$ have corresponding optimal value functions $\Vf{s}$.

\begin{prop}[Communicability bounds maximum change in optimal value]\label{mut-bound}
If $s$ can reach $s'$ with probability 1 in $k_1$ steps and $s'$ can reach $s$ with probability 1 in $k_2$ steps, then $\max_{R\in [0, 1]^{\St}} \left|V^*_R(s)-V^*_R(s')\right|\leq \geom[1-\gamma^{\max(k_1,k_2)}]< \geom$.
\end{prop}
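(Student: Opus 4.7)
The plan is to get two-sided bounds relating $V^*_R(s)$ and $V^*_R(s')$ for each fixed $R\in[0,1]^\St$, using the reachability hypotheses to build suboptimal policies whose values lower bound the optimal value.

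First I would fix an arbitrary $R\in[0,1]^\St$ and note the elementary a priori bound $0 \leq V^*_R(t) \leq \frac{1}{1-\gamma}$ for every state $t$, which follows from $R\in[0,1]^\St$ and the geometric sum $\sum_{t\geq 0}\gamma^t$. The strict inequality $<\geom$ in the conclusion will come from this, combined with $\gamma^{\max(k_1,k_2)}>0$ (since $\gamma\in[0,1)$ and $\max(k_1,k_2)$ is finite).

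Next I would use the reachability hypothesis in the direction $s\to s'$. By assumption, there is a (non-stationary) policy $\pi_{s\to s'}$ that drives the process from $s$ to $s'$ in exactly $k_1$ steps with probability $1$. Consider the composite policy that executes $\pi_{s\to s'}$ for $k_1$ steps and then switches to an optimal policy for $R$ from $s'$. Its value at $s$ equals the discounted reward accumulated on the first $k_1$ steps plus $\gamma^{k_1} V^*_R(s')$; since $R\geq 0$, the first term is $\geq 0$, so $V^*_R(s)\geq \gamma^{k_1}V^*_R(s')$, which rearranges to $V^*_R(s')-V^*_R(s)\leq (1-\gamma^{k_1})V^*_R(s')\leq \frac{1-\gamma^{k_1}}{1-\gamma}$. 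The symmetric argument, using the $s'\to s$ reachability in $k_2$ steps, yields $V^*_R(s)-V^*_R(s')\leq \frac{1-\gamma^{k_2}}{1-\gamma}$.

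Finally, taking absolute values and noting that $1-\gamma^k$ is monotone increasing in $k$, the two inequalities combine into
\[
\bigl|V^*_R(s)-V^*_R(s')\bigr|\ \leq\ \frac{1-\gamma^{\max(k_1,k_2)}}{1-\gamma}\ <\ \frac{1}{1-\gamma}.
\]
Since the bound does not depend on $R$, the supremum over $R\in[0,1]^\St$ satisfies the same inequality. I do not anticipate a real obstacle here; the only subtle point is making sure that the construction of the composite ``reach then act optimally'' policy is legitimate (it requires only that reachability holds with probability $1$, so the value of the tail matches $V^*_R(s')$ in expectation) and that the nonnegativity of $R$ is used to discard the pre-$k_1$ discounted reward term—without $R\geq 0$ the simple one-sided bound would fail.
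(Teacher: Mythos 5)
Your proof is correct and is essentially the paper's own argument: you lower-bound $V^*_R(s)$ by $\gamma^{k_1}V^*_R(s')$ via the ``reach $s'$ then act optimally'' policy (discarding the nonnegative en-route reward), bound $V^*_R(s')\leq \frac{1}{1-\gamma}$, apply the symmetric argument with $k_2$, and take the maximum using monotonicity of $1-\gamma^k$ in $k$. The only cosmetic difference is that the paper writes the discarded en-route reward explicitly as $0\cdot\frac{1-\gamma^{k_1}}{1-\gamma}$ inside the same chain of inequalities.
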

\begin{proof}
We first bound the maximum increase.
\begin{align}
    \max_{R\in [0, 1]^{\St}} V^*_R(s')-V^*_R(s) &\leq \max_{R\in [0, 1]^{\St}} V^*_R(s') - \left(0\cdot \geom[1-\gamma^{k_1}]+\gamma^{k_1}V^*_R(s')\right)\label{eq:max-inc}\\
    &\leq \geom[1] - \left(0\cdot \geom[1-\gamma^{k_1}]+\gamma^{k_1}\geom[1]\right)\\
    &=\geom[1-\gamma^{k_1}].
\end{align}
\Cref{eq:max-inc} holds because even if we make $R$ equal  $0$ for as many states as possible, $s'$ is still reachable from $s$. The case for maximum decrease is similar.
\end{proof}

\section{Training Details}\label{sec:train}
In \cref{sec:line}, we aggregated performance from 3 curricula with 5 seeds each, and 1 curriculum with 3 seeds.

We detail how we trained the {\aup} and {\aup}$_\text{proj}$ conditions. An algorithm describing the training process can be seen in algorithm \ref{algo:aup}.

\subsection{Auxiliary reward training}

For the first phase of training, our goal is to learn $Q_{\text{aux}}$, allowing us to compute the \textsc{aup} penalty in the second phase of training. Due to the size of the full SafeLife state $(350 \times 350 \times 3)$, both conditions downsample the observations with average pooling and convert to intensity values. 

Previously, \citet{turner2020conservative} learned $Q_{\text{aux}}$ with tabular Q-learning. They used environments small enough such that reward could be assigned to each state. Because SafeLife environments are too large for tabular Q-learning, we demonstrated two methods for randomly generating an auxiliary reward function.

\begin{enumerate}
\item[{\aup}] We acquire a low-dimensional state representation by training a continuous Bernoulli variational autoencoder  \citep{loaiza2019continuous}. To train the \textsc{cb-vae}, we collect a buffer of observations by acting randomly for $\frac{100,000}{N_{\text{env}}}$ steps in each of the $N_{\text{env}}$ environments. This gives us 100K total observations with an $N_{\text{env}}$-environment curriculum.  We train the \textsc{cb-vae} for 100 epochs, preserving the encoder $E$ for downstream auxiliary reward training.  

For each auxiliary reward function, we draw a linear functional uniformly from $(0, 1)^Z$ to serve as our auxiliary reward function, where $Z$ is the dimension of the \textsc{cb-vae}'s latent space. The auxiliary reward for an observation is the composition of the linear functional with an observation's latent representation. 

\item[{\aupp}] Instead of using a \textsc{cb-vae}, {\aup}$_\text{proj}$ simply downsamples the input observation. At the beginning of training, we generate a linear functional over the unit hypercube (with respect to the downsampled observation space). The auxiliary reward for an observation is the composition of the linear functional with the downsampled observation.
\end{enumerate}

The auxiliary reward function is learned after it is generated. To learn $Q_{\text{aux}}$, we modify the value function in \textsc{ppo} to a Q-function. Our training algorithm for phase 1 only differs from \textsc{ppo} in how we calculate reward. We train each auxiliary reward function for 1M steps.

\subsection{\textsc{Aup} reward training}
In phase 2, we train a new \textsc{ppo} agent on $R_\textsc{aup}$ (\cref{eq:aup}) for the corresponding SafeLife task. Each step, the agent selects an action $a$ in state $s$ according to its policy $\pi_{{\aup}}$, and receives reward $R_\textsc{aup}(s,a)$ from the environment. We compute $R_\textsc{aup}(s,a)$ with respect to the learned Q-values $Q_{\text{aux}}(s, \varnothing)$ and $Q_{\text{aux}}(s, a)$.

The penalty term is modulated by the hyperparameter $\lambda$, which is linearly scaled from $10^{-3}$ to some final value $\lambda^*$ (default $10^{-1}$). Because $\lambda$ controls the relative influence of the penalty, linearly increasing $\lambda$ over time will prioritize primary task learning in early training and slowly encourage the agent to obtain the same reward while avoiding side effects. If $\lambda$ is too large -- if side effects are too costly -- the agent won't have time to adapt its current policy and will choose inaction ($\varnothing$) to escape the penalty. A careful $\lambda$ schedule helps induce a successful policy that avoids side effects.

\begin{algorithm}[H]
    \SetAlgoLined
    \textbf{Initialize} Exploration buffer $S$\\
    \textbf{Initialize} CB-VAE $\mathcal{F}$ with encoder $E$, decoder $D$ \\    \textbf{Initialize} Exploration buffer $S$\\
    \textbf{Initialize} Auxiliary reward functions $\phi$ \\
    \textbf{Initialize} Auxiliary policy $\psi_{\text{aux}}$, AUP policy $\pi_{\text{AUP}}$\\
    \textbf{Require} CB-VAE training epochs $T$\\
    \textbf{Require} AUP penalty $\lambda$ \\
    \textbf{Require} Exploration buffer size $k$\\
    \textbf{Require} Auxiliary model training steps $L$\\
    \textbf{Require} AUP model training steps $N$\\
    \textbf{Require} PPO update function PPO-Update\\
    \textbf{Require} CB-VAE update function VAE-Update\\

    \For{Step $k = 1, \dots K$}{
        Sample random action $a$ \\
        $s \leftarrow \text{Act}(a)$\\
        $S = s \cup S$ \\
    }
    \For{Epoch $t = 1, \dots T$}{
        \normalfont{Update-VAE $(\mathcal{F}, S)$} \\
    }
    \For{Step $i = 1, \dots L+N$}{
        $s \leftarrow$ Starting state\\
        \For{Step $l = 1, \dots L$}{
            $a = \psi_{\text{aux}}(s)$ \\
            $s' = \text{ Act}(a)$ \\
            $r = \phi \cdot E(s)$ \\
            PPO-Update $(\psi_{\text{aux}}, s, a, r, s')$\\
            $s = s'$ \\
        }
        $s \leftarrow$ Starting state\\
        \For{Step $n = 1, \dots N$}{
            $a = \pi_{\text{AUP}}(s)$ \\
            $s', r = \text{ Act}(a)$ \\
            $r = r + R_{\text{AUP}}(\psi_{\text{aux}}, \, \pi_{\text{AUP}}, \,s, a, \lambda)$ \quad (\Cref{eq:aup}) \\
            PPO-Update $(\pi_{\text{AUP}}, s, a, r, s')$\\
            $s = s'$ \\
            
        }
     }
    \caption{\textsc{Aup} Training Algorithm}
    \label{algo:aup}
\end{algorithm}
\vspace{-0.05in}

\section{Hyperparameter Selection}
Table \ref{tab:params} lists the hyperparameters used for all conditions, which generally match the default SafeLife settings. \emph{Common} refers to those hyperparameters that are the same for each evaluated condition. \emph{AUX} refers to hyperparameters that are used only when training on $R_\text{AUX}$, thus, it only pertains to {\aup} and {\aup}$_\text{proj}$. The conditions {\ppo} and {\naive} use the \emph{PPO} hyperparameters for the duration of their training, while {\aup}, {\aup}$_\text{proj}$ use them when training with respect to $R_\textsc{aup}$. \emph{DQN} refers to the hyperparameters used to train the model for {\dqn}. 

\begin{longtable}[ht]{rr l }
        \toprule
        \multicolumn{2}{r}{Hyperparameter} &  Value\\
        \midrule
        \multicolumn{2}{r}{\it{Common}}& \\
        & Learning Rate & $3 \cdot 10^{-4}$\\
        & Optimizer & Adam \\
        & Gamma ($\gamma$) &  $0.97$\\
        & Lambda (\textsc{ppo}) & $0.95$\\
        & Lambda (\textsc{aup}) & $10^{-3} \rightarrow 10^{-1}$\\
        & Entropy Clip & $1.0$\\
        & Value Coefficient & $0.5$\\
        & Gradient Norm Clip & $5.0$\\
        & Clip Epsilon & $0.2$ \\
        
        \midrule
        \multicolumn{2}{r}{\emph{AUX}}& \\
        & Entropy Coefficient & $0.01$\\
        & Training Steps & $1 \cdot 10^6$\\

        \midrule
        \multicolumn{2}{r}{\emph{AUP\textsubscript{proj}}}& \\
        & Lambda (\textsc{aup}) & $10^{-3}$\\

        \midrule
        \multicolumn{2}{r}{\emph{PPO}}& \\
        & Entropy Coefficient & $0.1$\\
        
        \midrule
        \multicolumn{2}{r}{\emph{DQN}}& \\
        & Minibatch Size & $64$ \\
        & SGD Update Frequency & $16$\\
        & Target Network Update Frequency & $1 \cdot 10^3$\\
        & Replay Buffer Capacity & $1 \cdot 10^4$\\
        & Exploration Steps & $4 \cdot 10^3$\\
        
        \midrule
        \multicolumn{2}{r}{\emph{Policy}}& \\
        & Number of Hidden Layers & $3$\\
        & Output Channels in Hidden Layers & $(32, 64, 64)$\\
        & Nonlinearity & ReLU\\
        \midrule
        \multicolumn{2}{r}{\emph{cb-vae}}& \\
        & Learning Rate & $10^{-4}$\\
        & Optimizer & Adam \\
        & Latent Space Dimension ($Z$) & $1$\\
        & Batch Size & $64$ \\ 
        & Training Epochs & $50$\\
        & Epsilon & $10^{-5}$\\
        
        & Number of Hidden Layers (encoder) & $6$\\
        & Number of Hidden Layers (decoder) & $5$\\
        & Hidden Layer Width (encoder) & $(512, 512, 256, 128, 128, 128)$\\
        & Hidden Layer Width (decoder) & $(128, 256, 512, 512, \text{output})$\\
        & Nonlinearity & \textsc{elu}\\
        \bottomrule
   \caption{Chosen hyperparameters.\label{tab:params}}
\end{longtable}
\section{Compute Environment}
\begin{wraptable}{r}{6cm}
    \centering
    \vspace{-40pt}
    \begin{tabular}{cc}
    \toprule
    Condition & GPU-hours per trial\\
    \midrule
    {\ppo} & 6 \\
    
    {\dqn} & 16 \\
    
    {\aup} & 8 \\
    
    {\aupp} & 7.5 \\
    
    {\naive} & 6\\
    \bottomrule
    \end{tabular}
    \caption{Compute time for each condition.}
    \vspace{-25pt}
    \label{tab:hours}
\end{wraptable}

For data collection, we only ran the experiments once. All experiments were performed on a combination of NVIDIA GTX 2080TI GPUs, as well as NVIDIA V100 GPUs. No individual experiment required more than 3GB of GPU memory. We did not run a 3-seed {\dqn} curriculum for the experiments in \cref{sec:line}.

The auxiliary reward functions were trained on down-sampled rendered game screens, while all other learning used the internal SafeLife state representation. Incidentally, \cref{tab:hours} shows that \texttt{AUP}'s preprocessing turned out to be computationally expensive (compared to \texttt{PPO}'s).

\section{Additional Data} \label{sec:addtl-data}

\Cref{fig:length} plots episode length and \cref{fig:aux-reward} plots auxiliary reward learning. \Cref{fig:individual} and \cref{fig:individ-length} respectively plot reward/side effects and episode lengths for each {\aup} seed. \Cref{fig:batch} and \cref{fig:batch-length} plot the same, averaged over each curriculum; these data suggest that {\aup}'s performance is sensitive to the randomly generated curriculum of environments.

\begin{figure}[h]
\centering 
\caption{Smoothed episode length curves with shaded regions representing $\pm 1$ standard deviation.  {\aup} and {\aupp} begin training on the $R_\textsc{aup}$ reward signal at steps 1.1M and 1M,  respectively.}\label{fig:length}

\captionsetup[subfigure]{labelformat=empty, captionskip=-50pt}
\subfloat[][]{
    \includegraphics[height=2.1in]{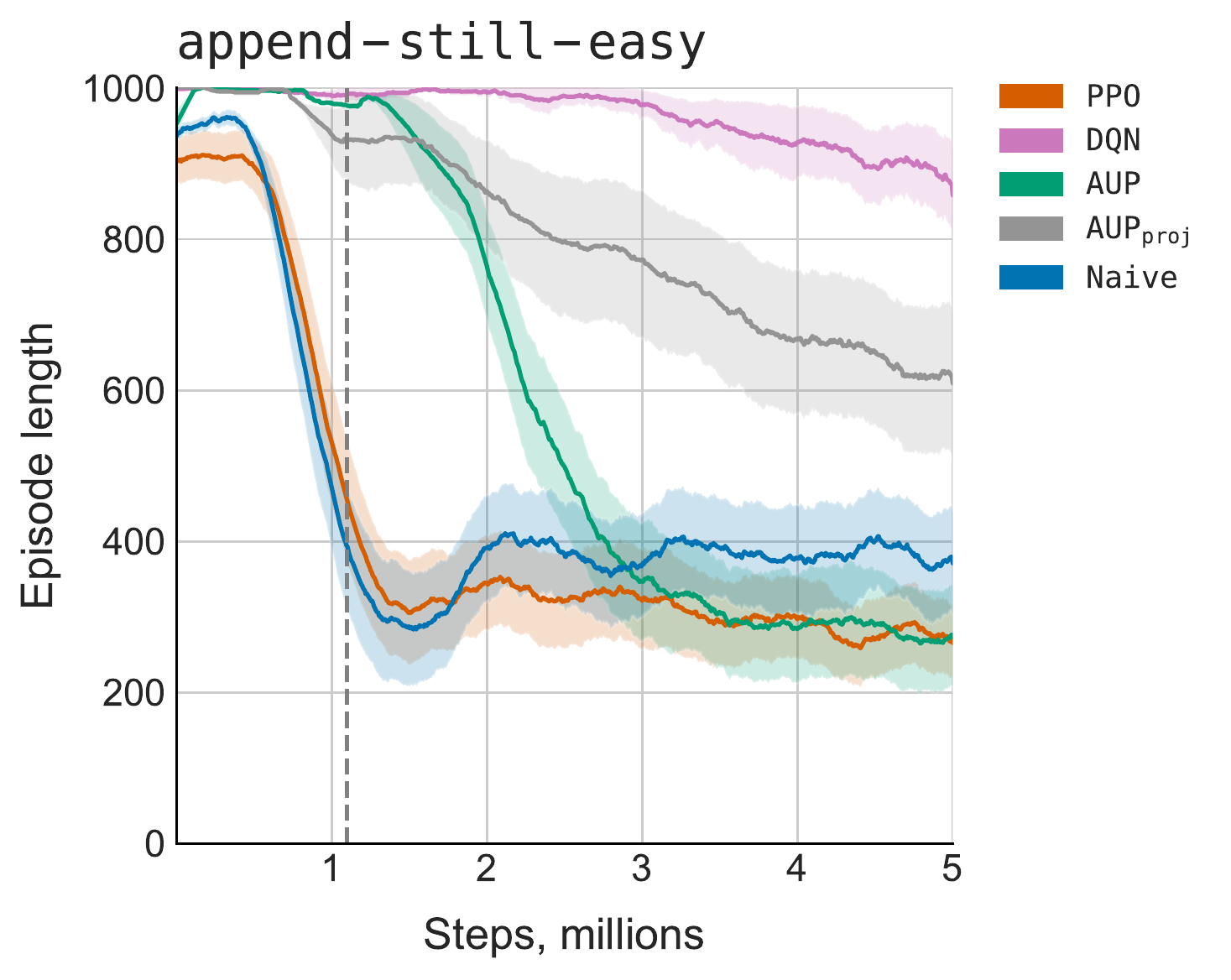}
}
\subfloat[][]{
    \includegraphics[height=2.1in]{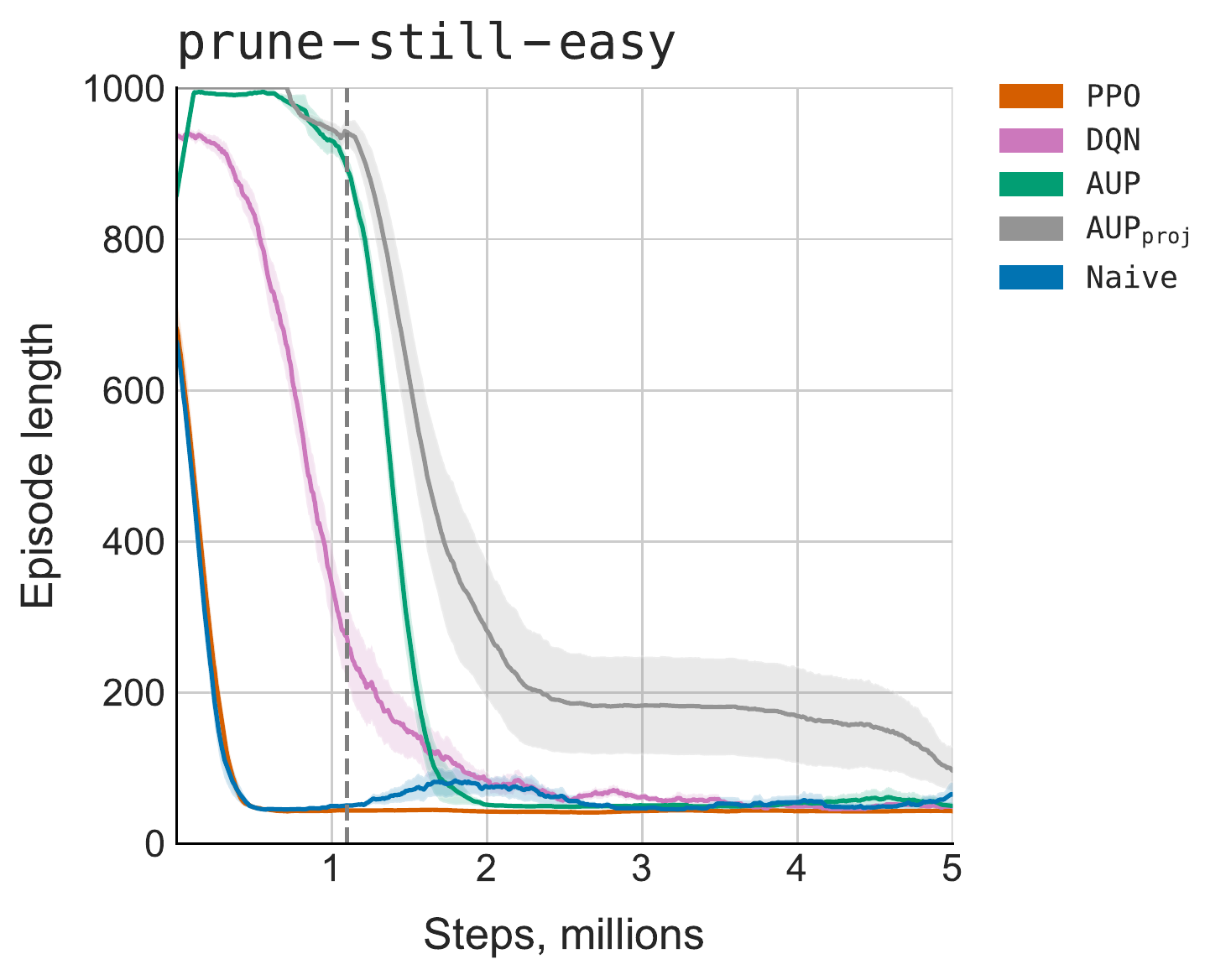}
}

\subfloat[][]{
    \includegraphics[height=2.1in]{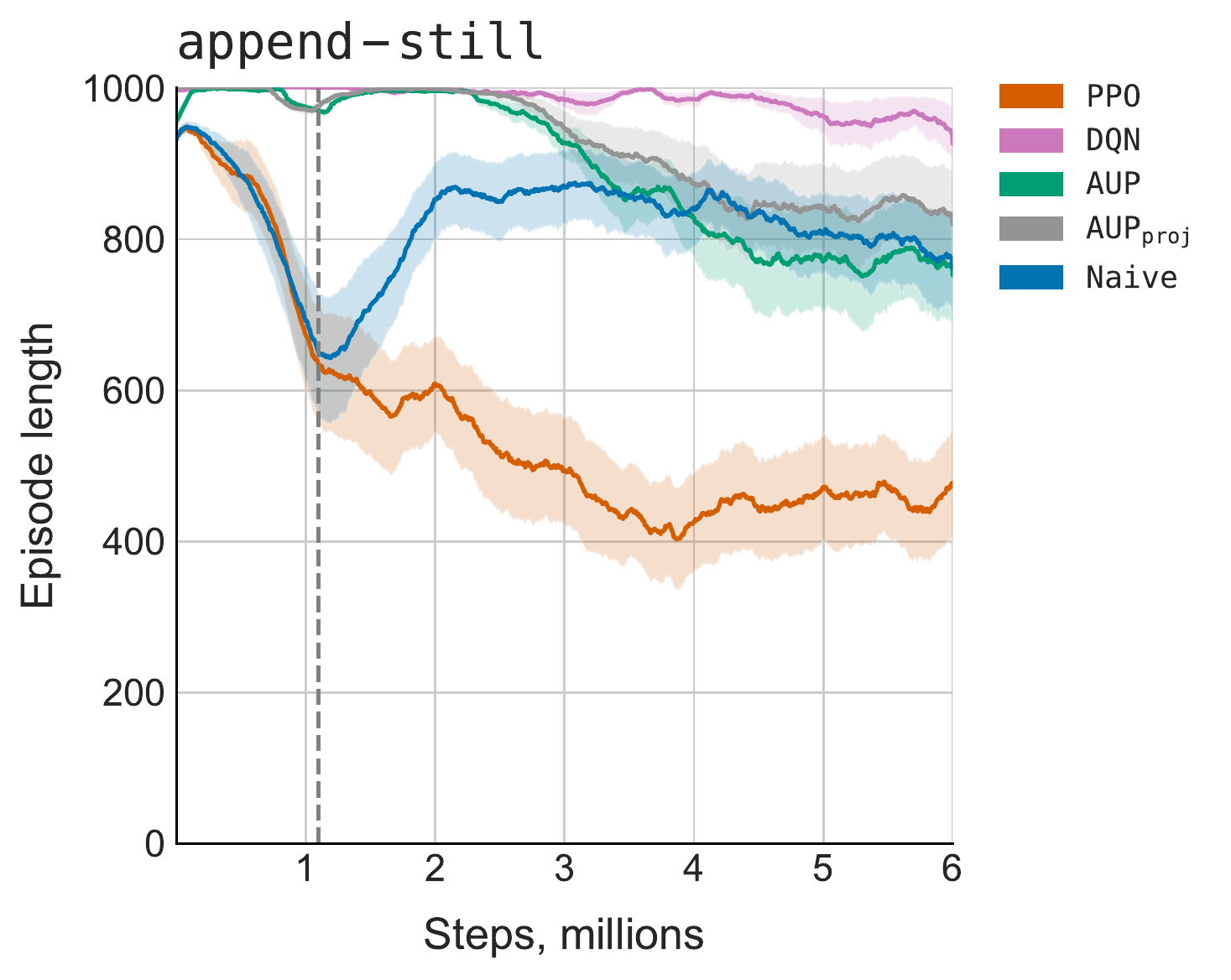}
}
\subfloat[][]{
    \includegraphics[height=2.1in]{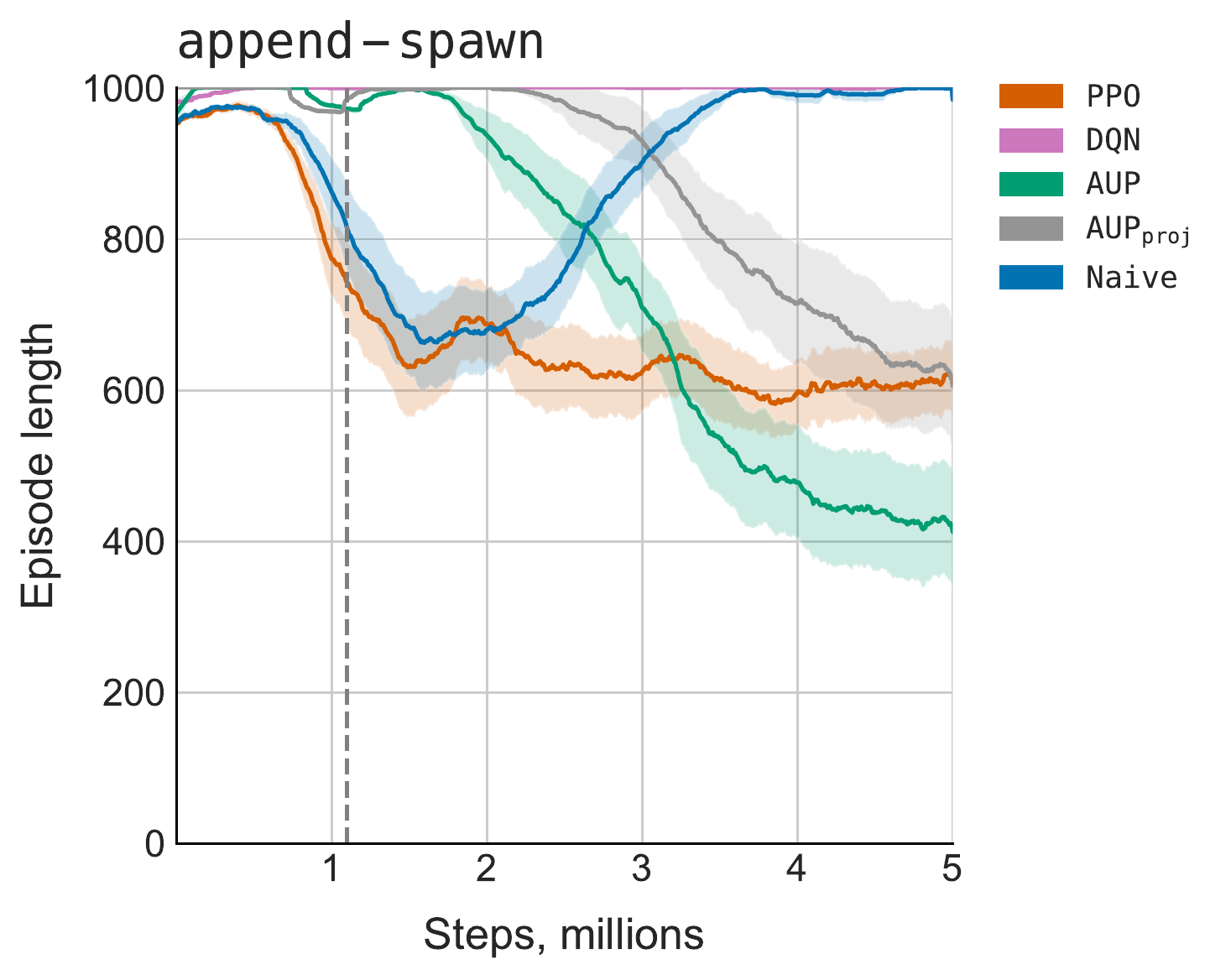}
}
\end{figure}

\begin{figure}[ht!]
\centering
\caption{Reward curves for auxiliary reward functions with a $Z$-dimensional latent space. Shaded regions represent $\pm 1$ standard deviation. Auxiliary reward is not comparable across trials, so learning is expressed by the slope of the curves.} \label{fig:aux-reward}
\includegraphics[height=2.5in]{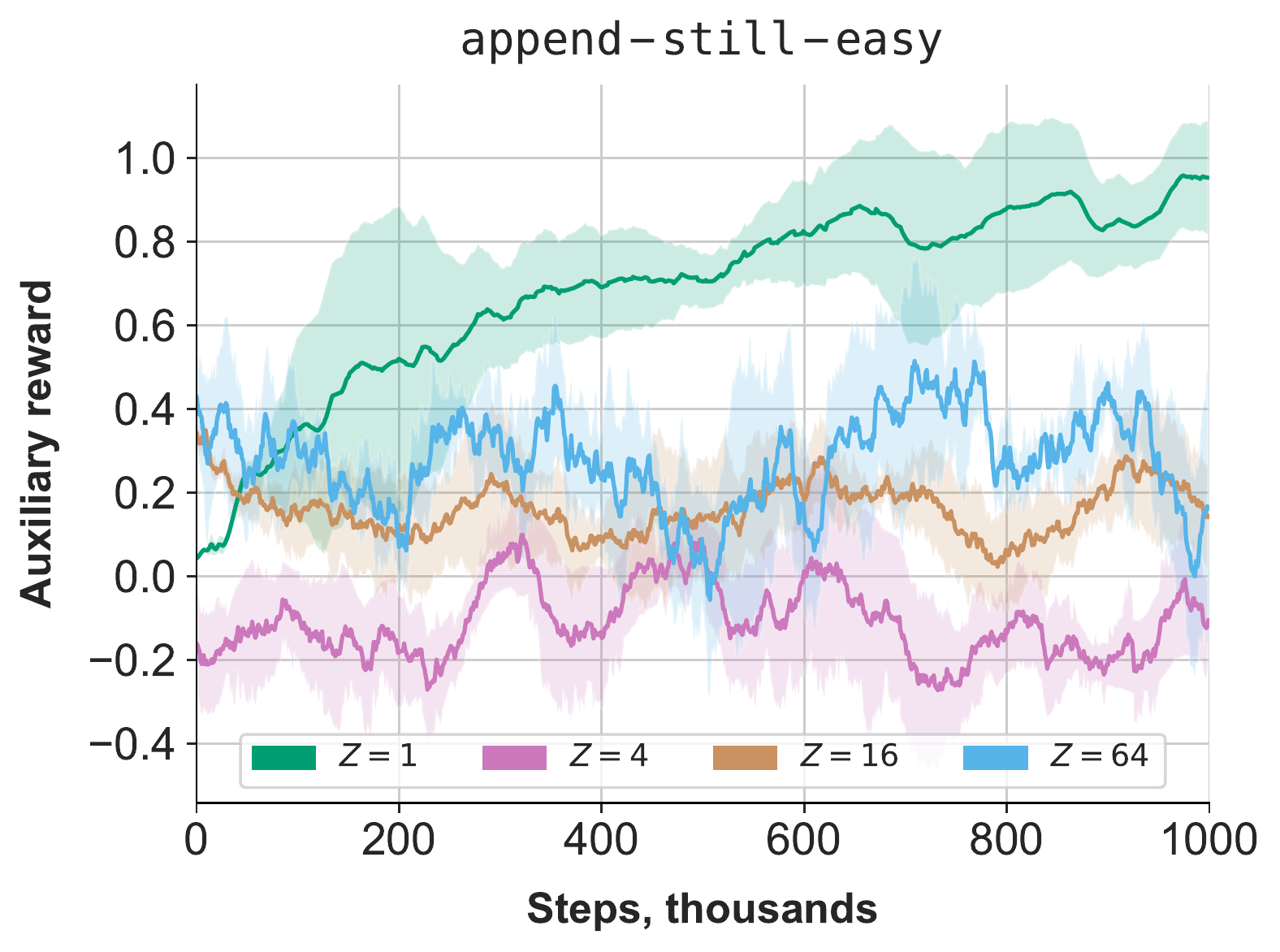}
\end{figure}

\begin{figure}[h]
\caption{Smoothed learning curves for individual {\aup} seeds. {\aup} begins training on the $R_\textsc{aup}$ reward signal at step 1.1M, marked by a dotted vertical line.}\label{fig:individual} 

\captionsetup[subfigure]{labelformat=empty, captionskip=-50pt}
\subfloat[][]{
    \includegraphics[height=2.05in]{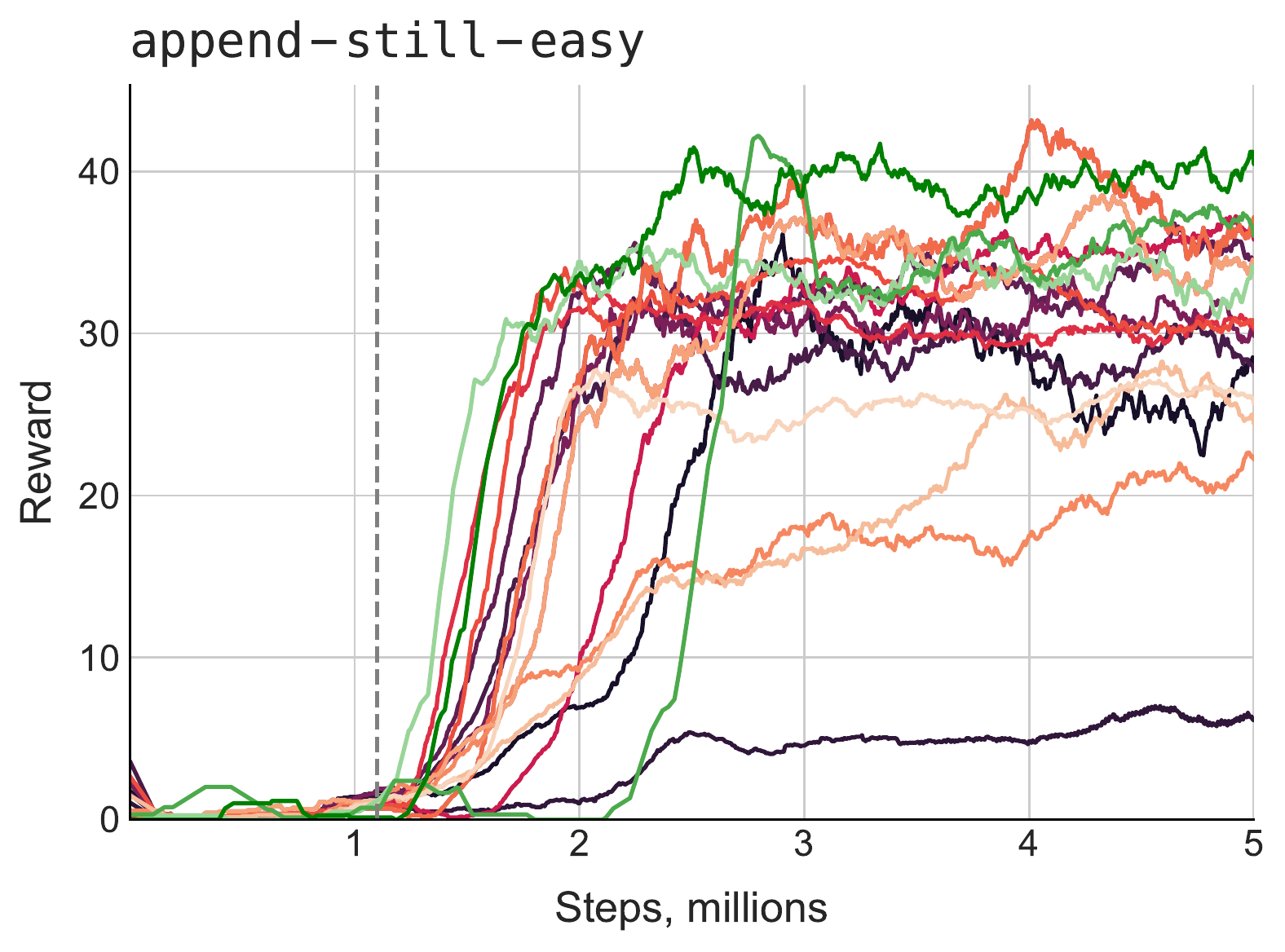}
}
\subfloat[][]{
    \includegraphics[height=2.05in]{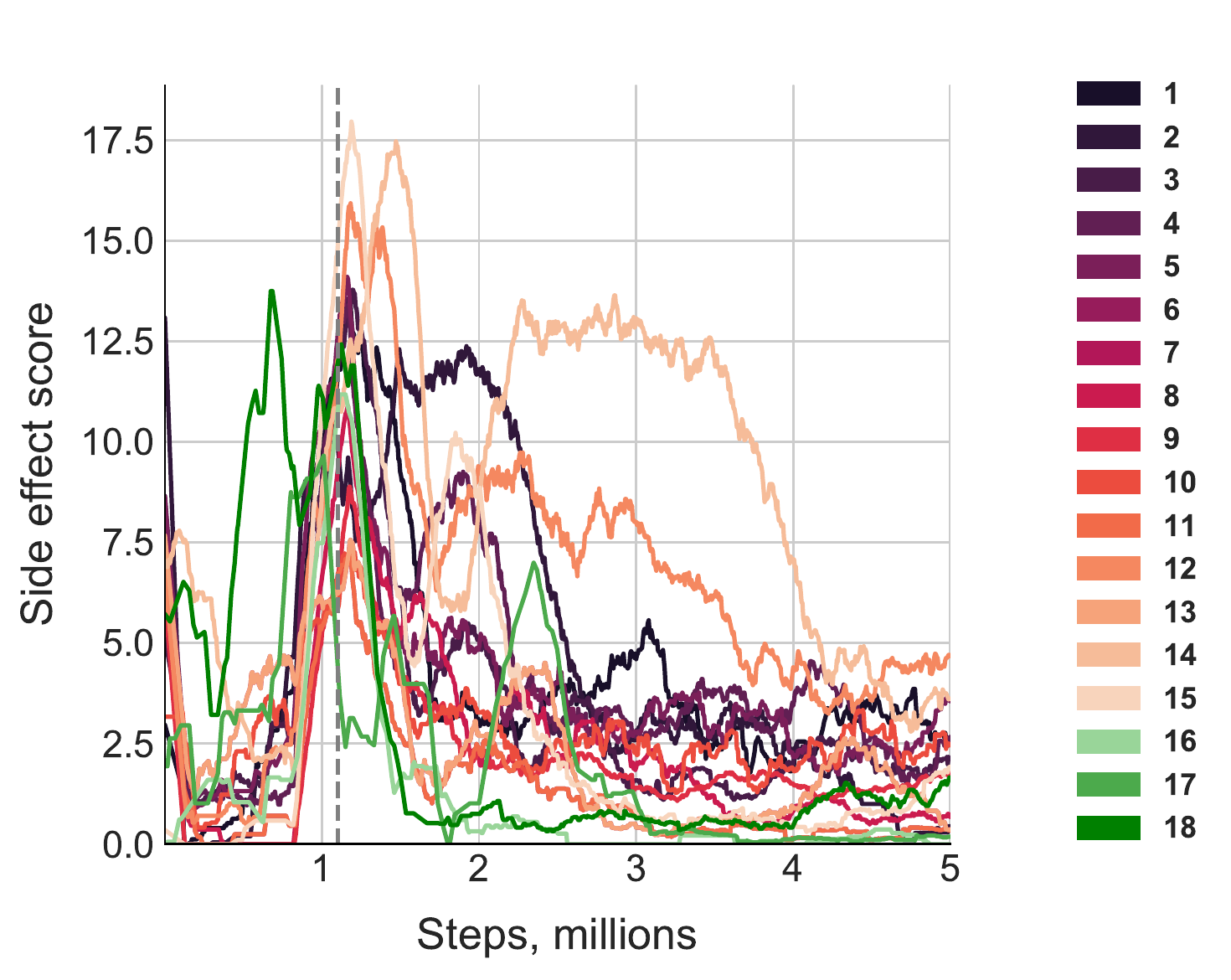}
}
\vspace{-8pt}
\subfloat[][]{
    \includegraphics[height=2.05in]{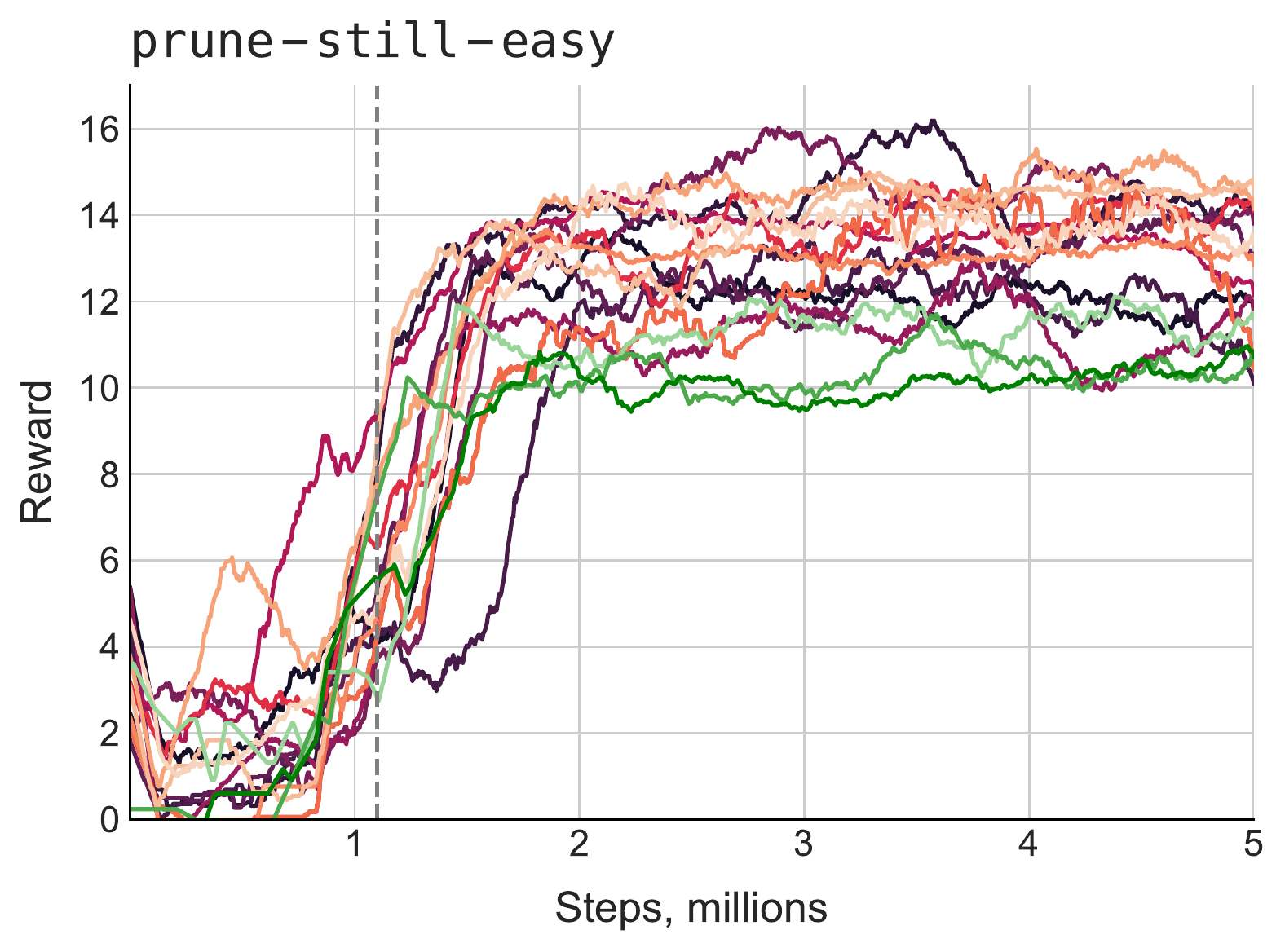}
}
\subfloat[][]{
    \includegraphics[height=2.05in]{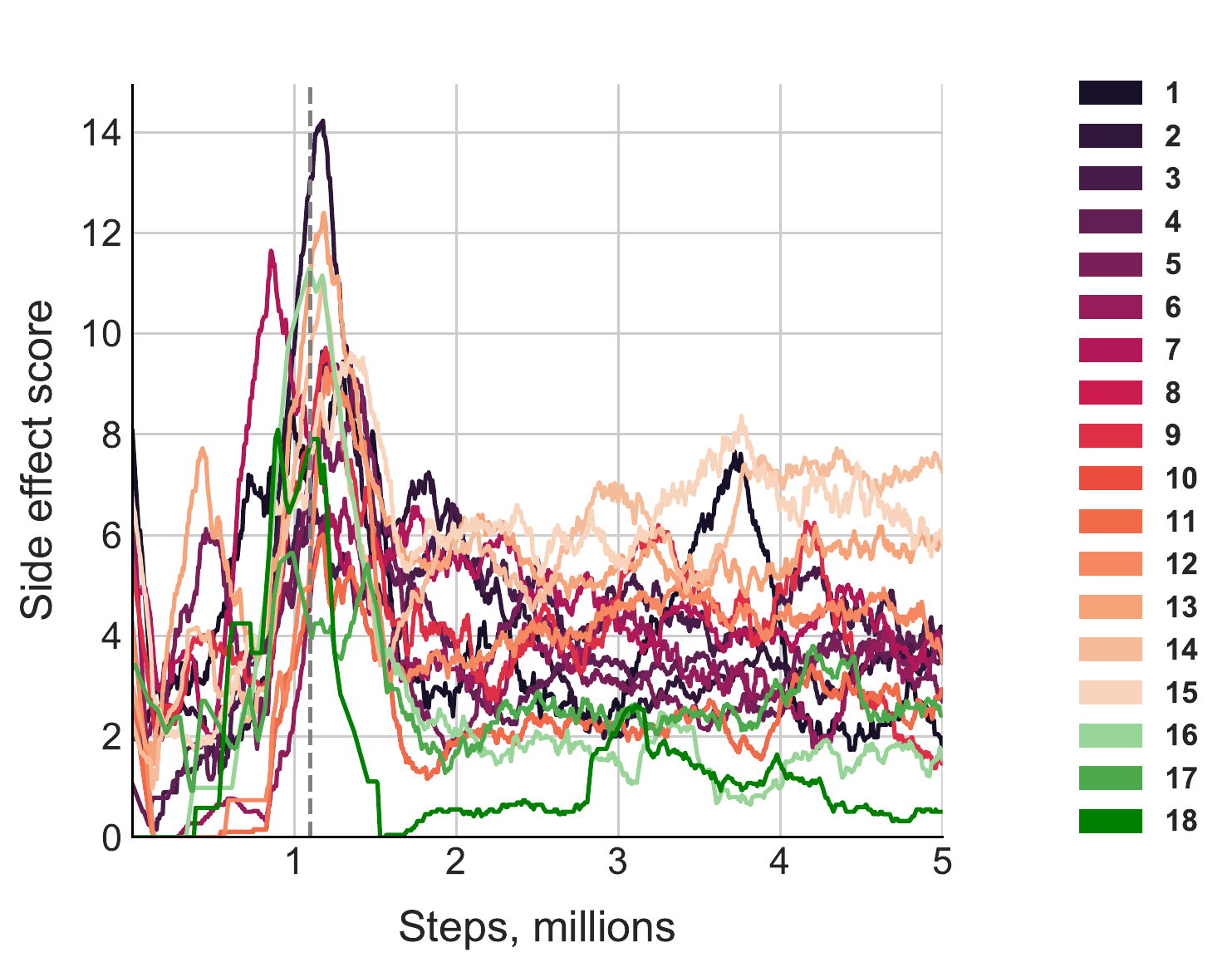}
}
\vspace{-8pt}

\subfloat[][]{
    \includegraphics[height=2.05in]{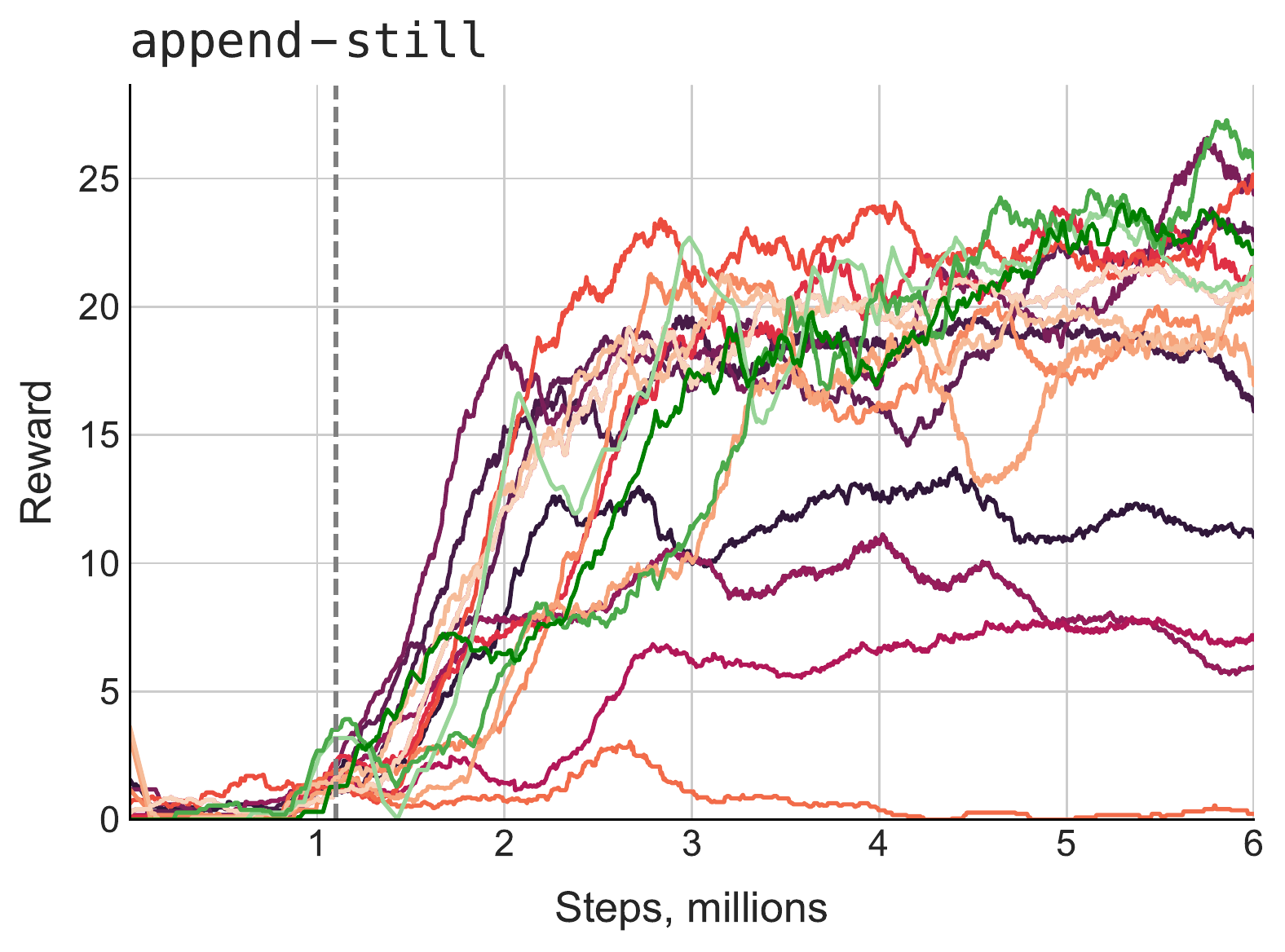}
}
\subfloat[][]{
    \includegraphics[height=2.05in]{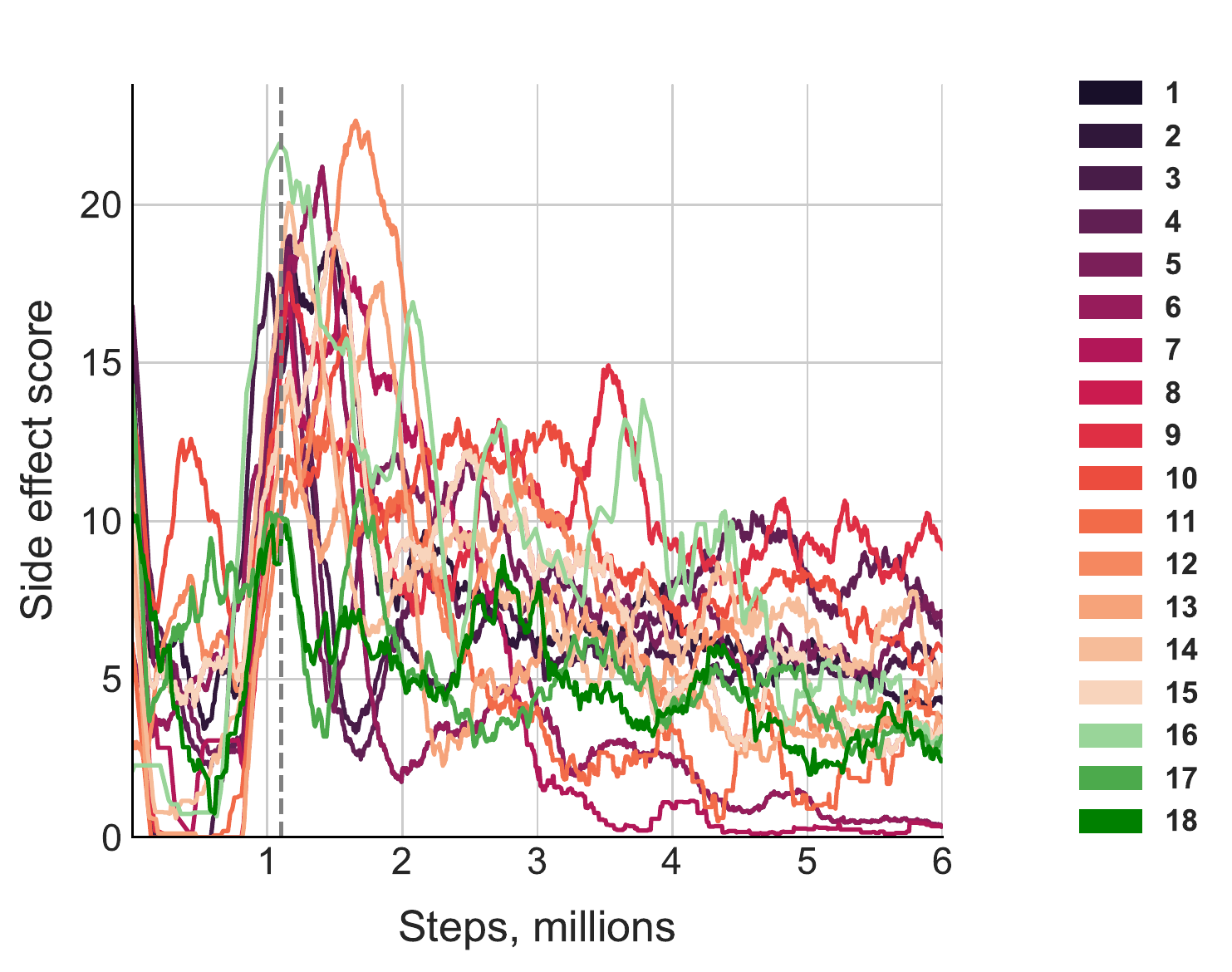}
}
\vspace{-8pt}
\subfloat[][]{
    \includegraphics[height=2.05in]{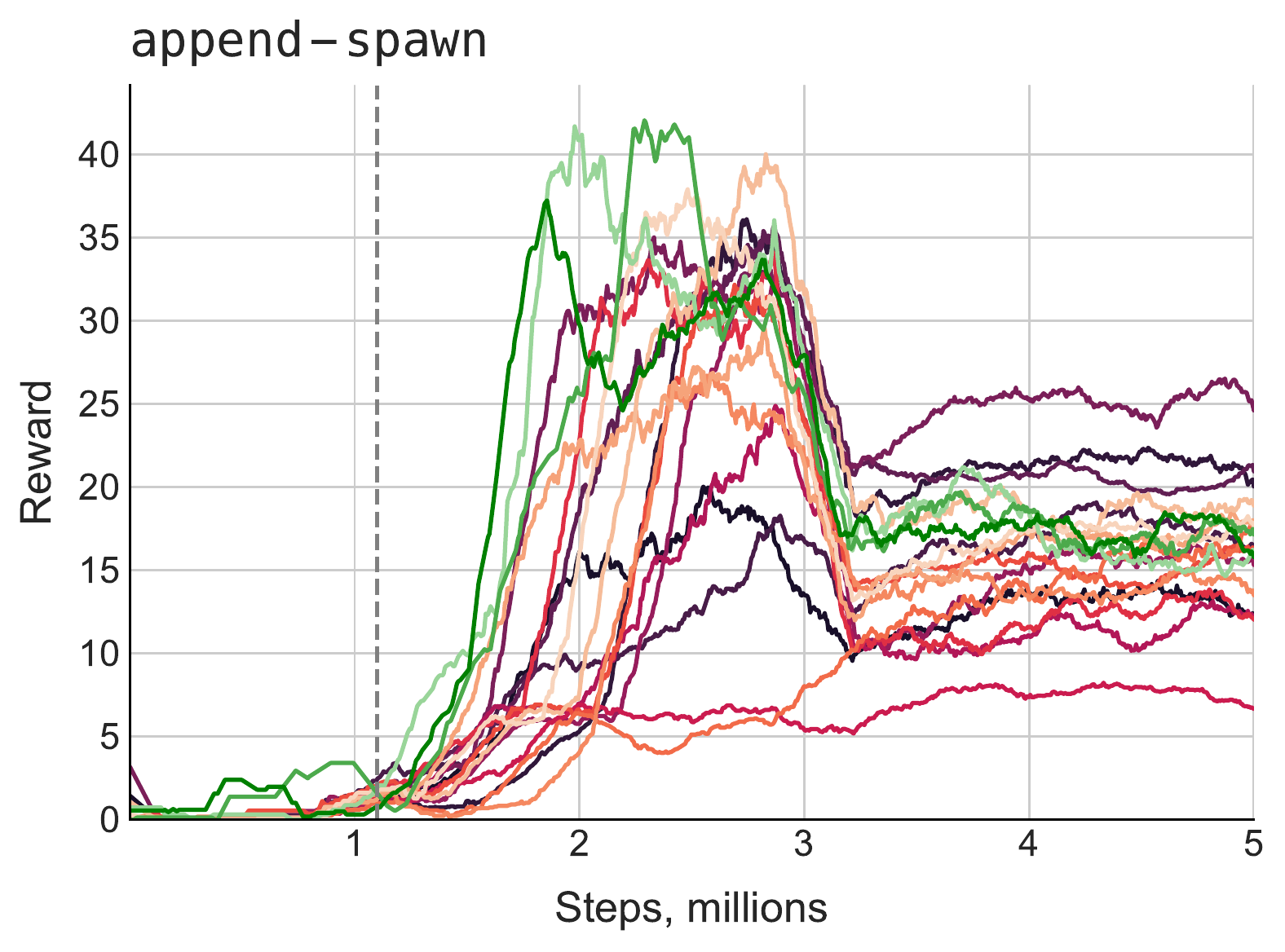}
}
\subfloat[][]{
    \includegraphics[height=2.05in]{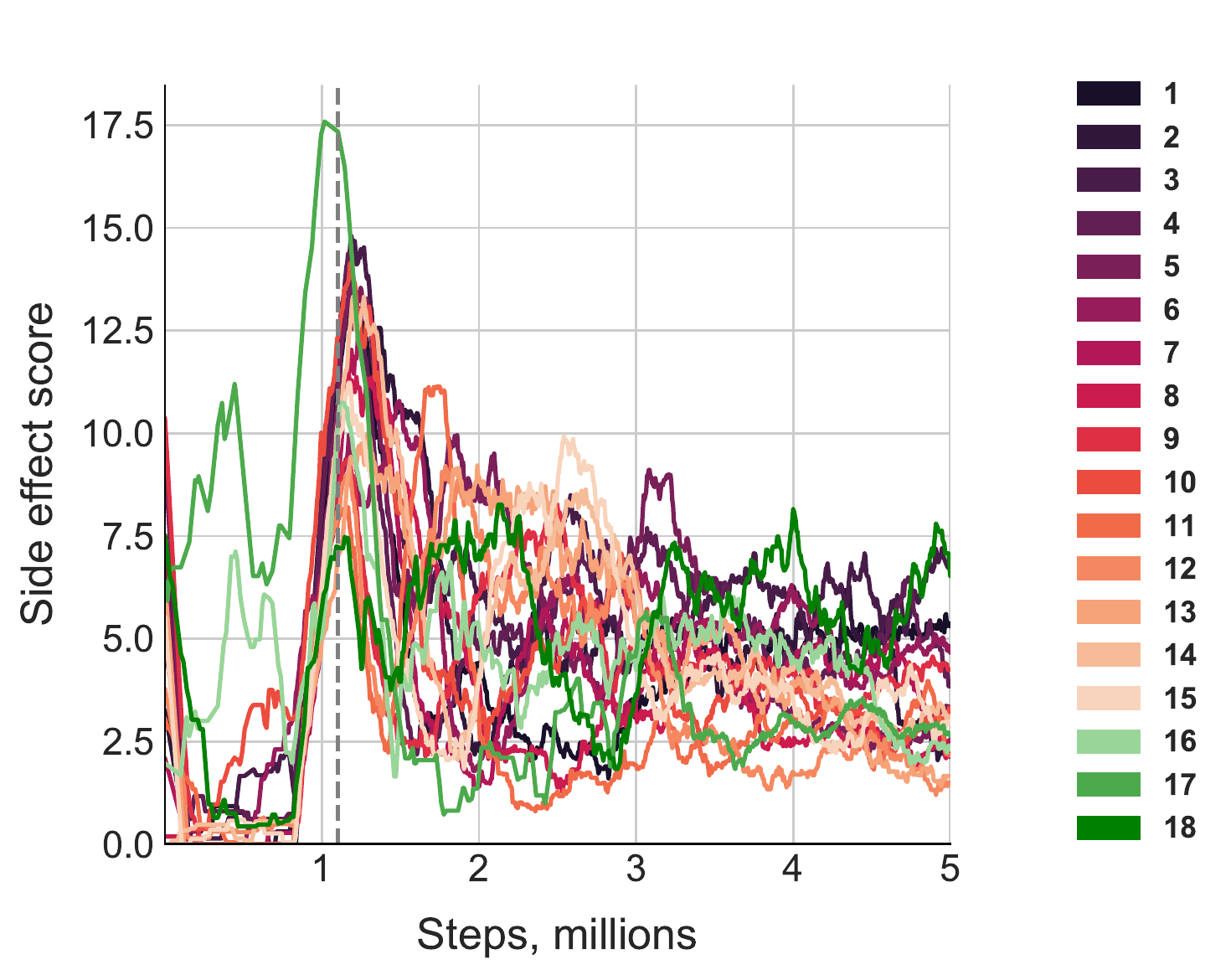}
}
\end{figure}

\begin{figure}[h]
\centering 
\caption{Smoothed episode length curves for individual {\aup} seeds.}\label{fig:individ-length}

\captionsetup[subfigure]{labelformat=empty, captionskip=-50pt}
\subfloat[][]{
    \includegraphics[height=2in]{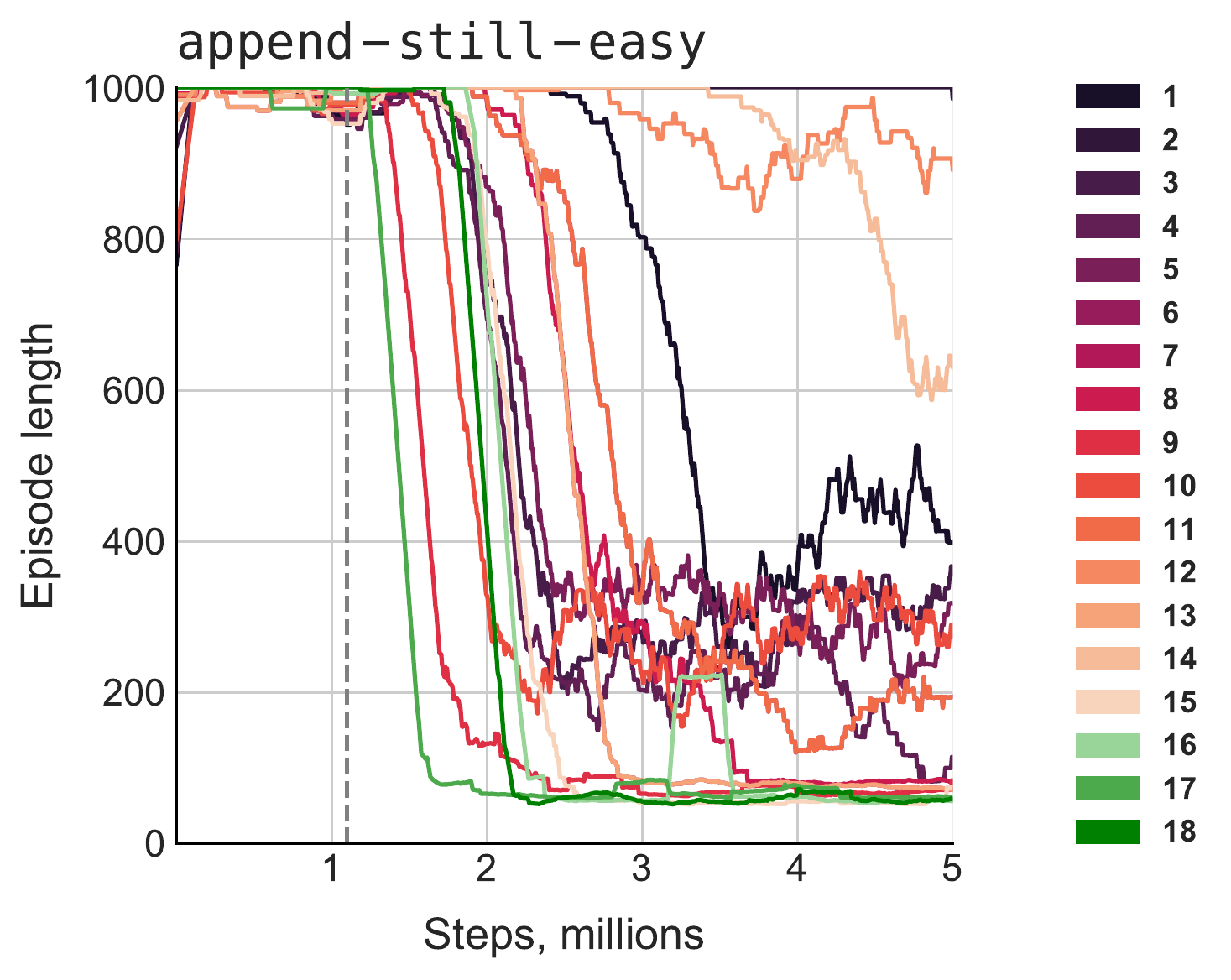}
}
\subfloat[][]{
    \includegraphics[height=2in]{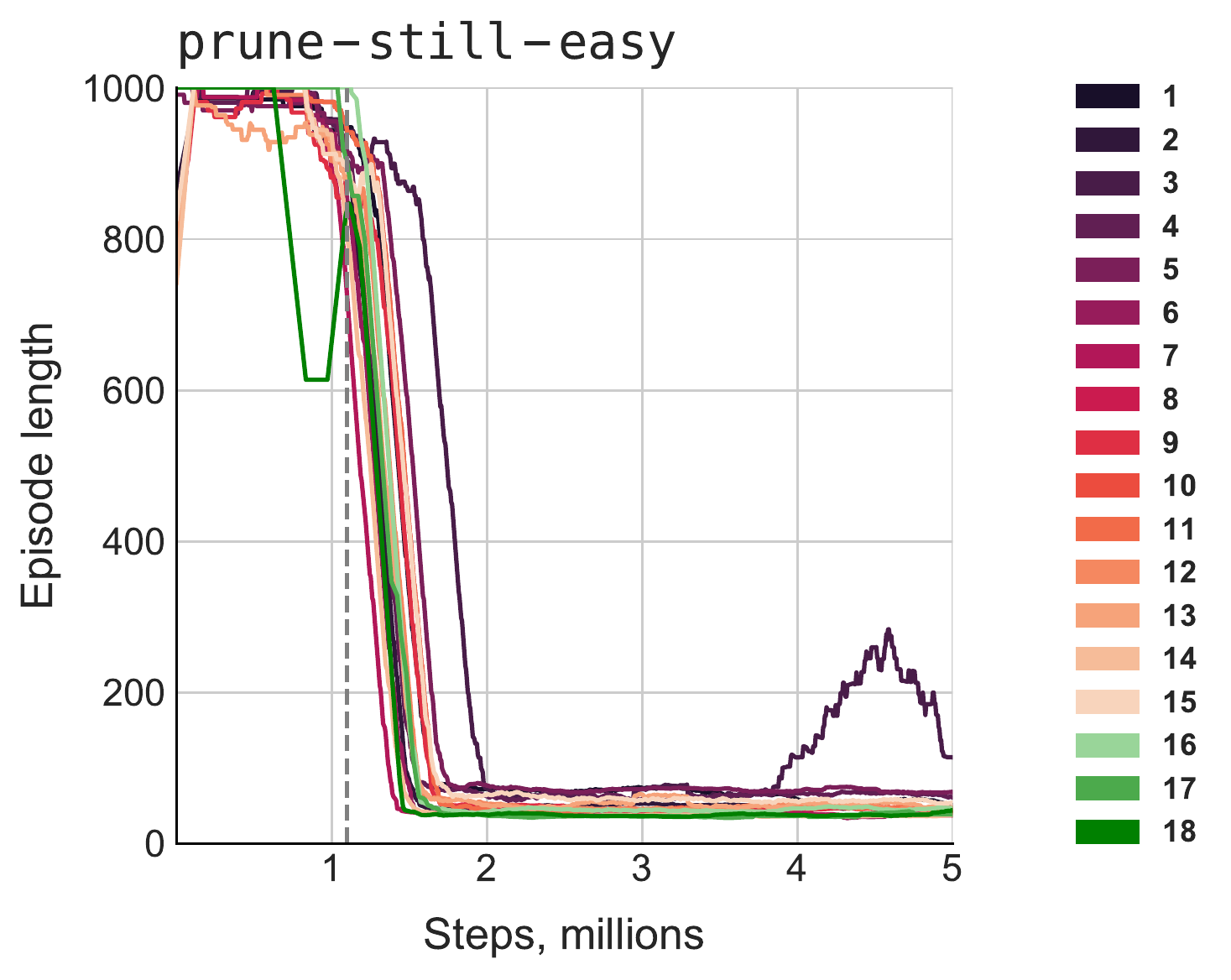}
}

\subfloat[][]{
    \includegraphics[height=2in]{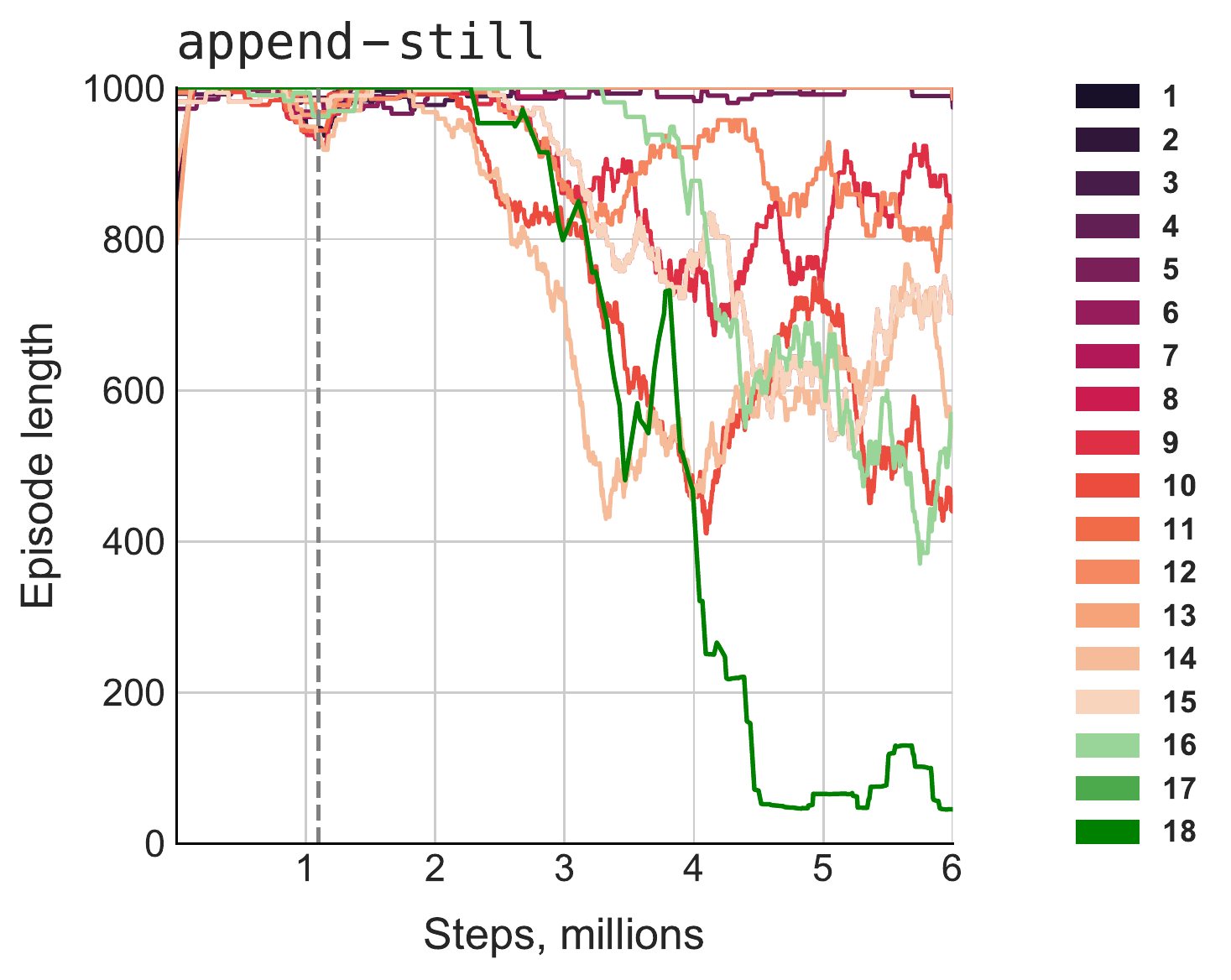}
}
\subfloat[][]{
    \includegraphics[height=2in]{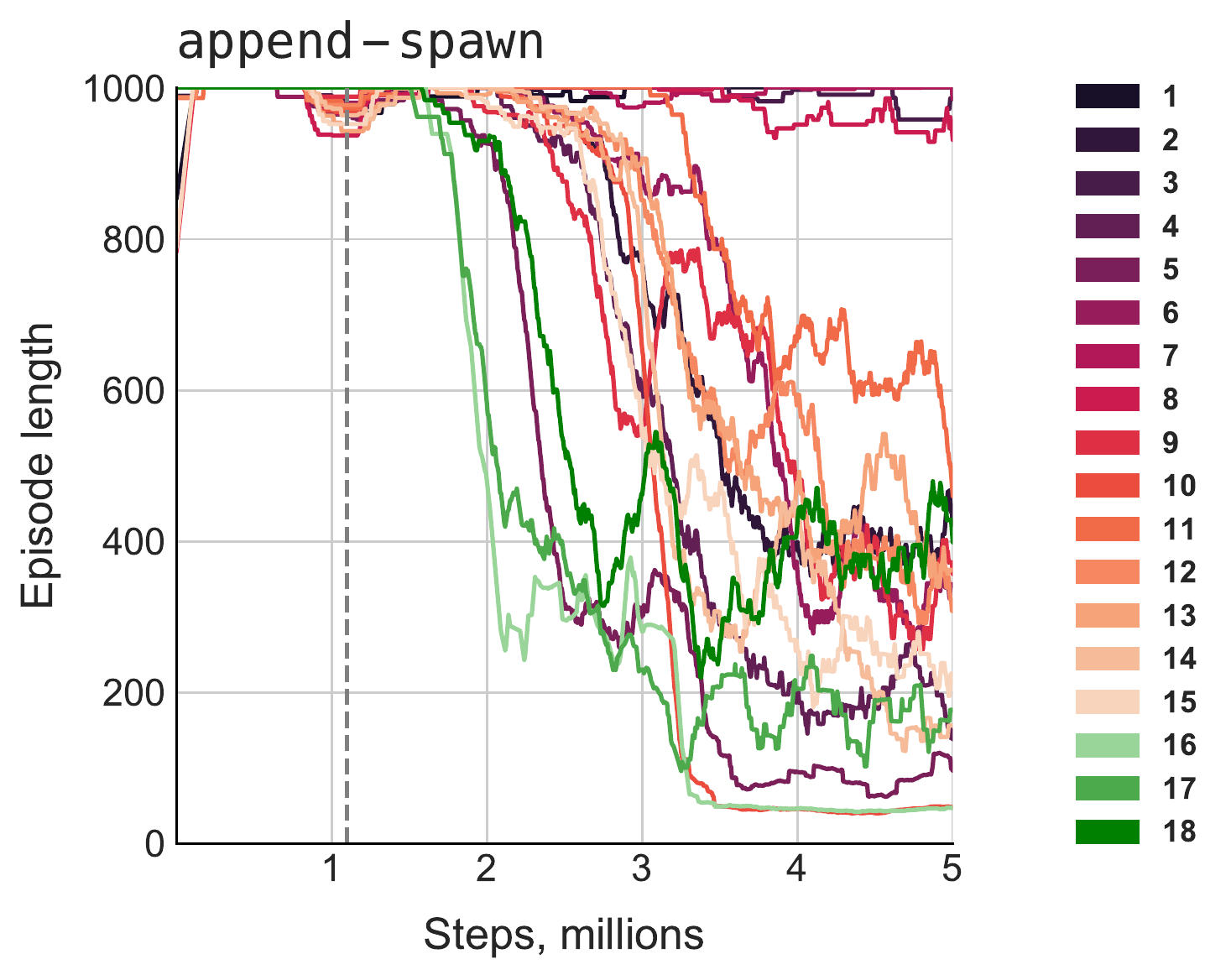}
}
\end{figure}

\begin{figure}[h]
\caption{Smoothed learning curves for {\aup} on its four curricula. {\aup} begins training on the $R_\textsc{aup}$ reward signal at step 1.1M, marked by a dotted vertical line.}\label{fig:batch} 

\captionsetup[subfigure]{labelformat=empty, captionskip=-50pt}
\subfloat[][]{
    \includegraphics[height=2.05in]{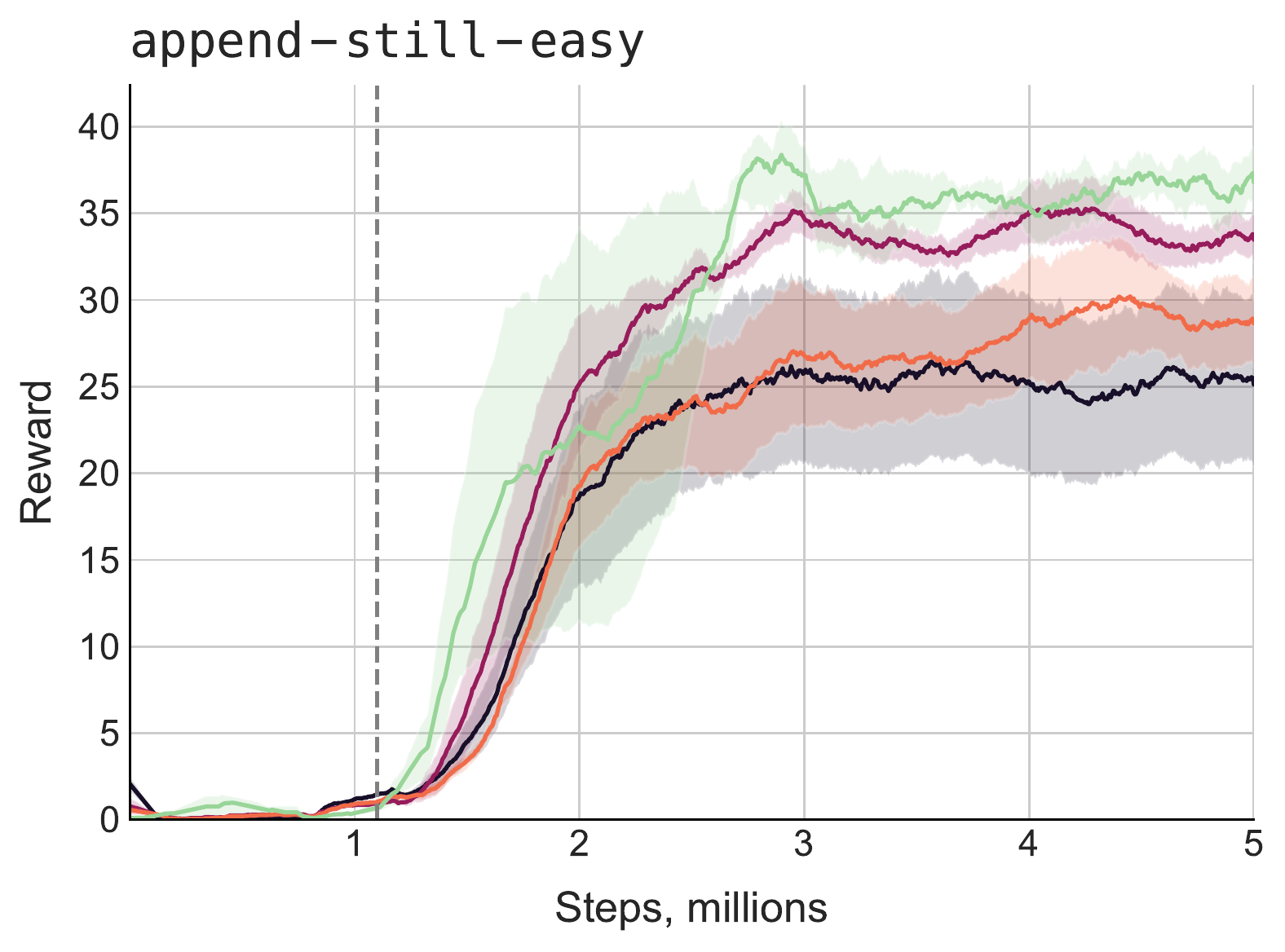}
}
\subfloat[][]{
    \includegraphics[height=2.05in]{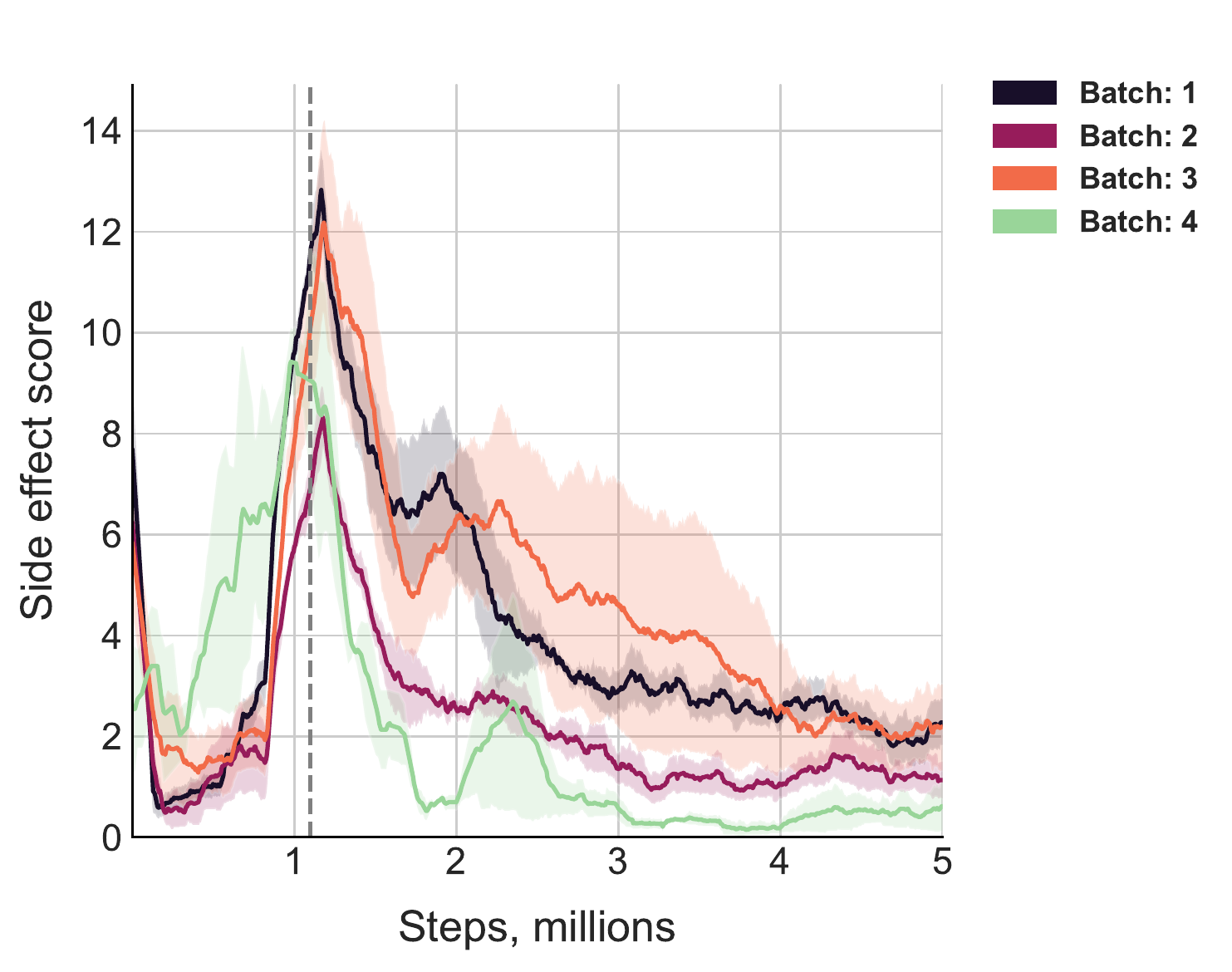}
}
\vspace{-8pt}
\subfloat[][]{
    \includegraphics[height=2.05in]{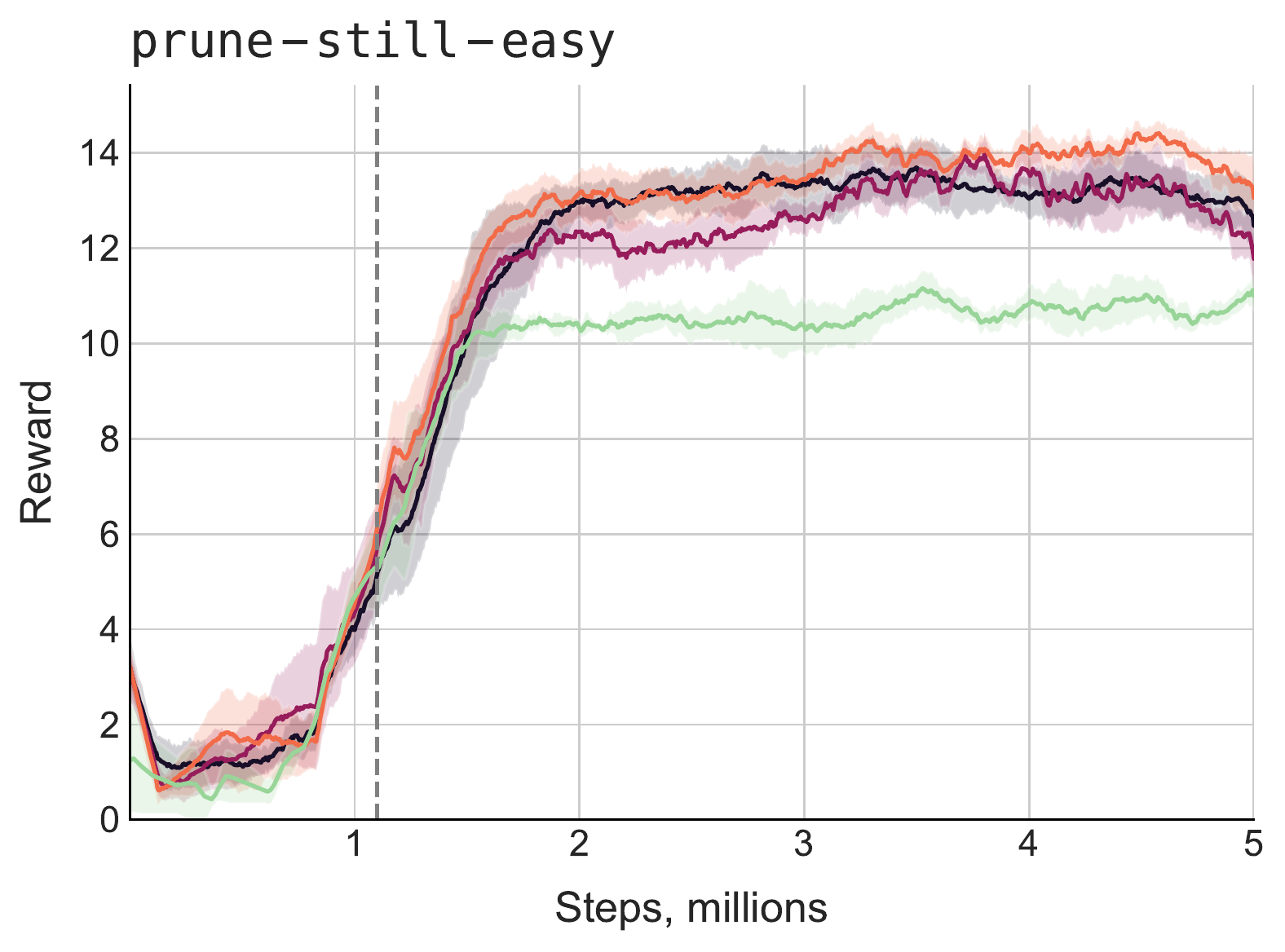}
}
\subfloat[][]{
    \includegraphics[height=2.05in]{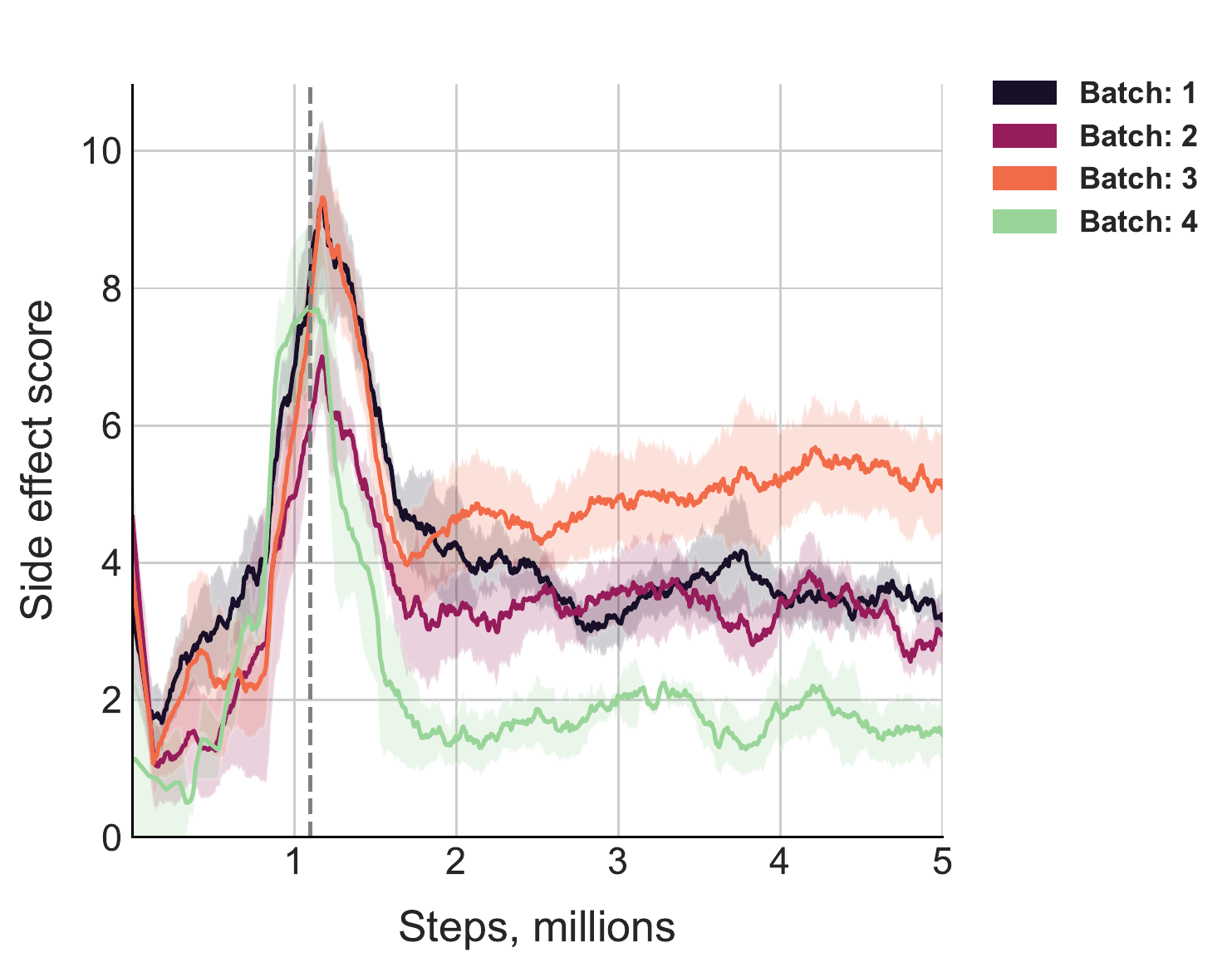}
}
\vspace{-8pt}

\subfloat[][]{
    \includegraphics[height=2.05in]{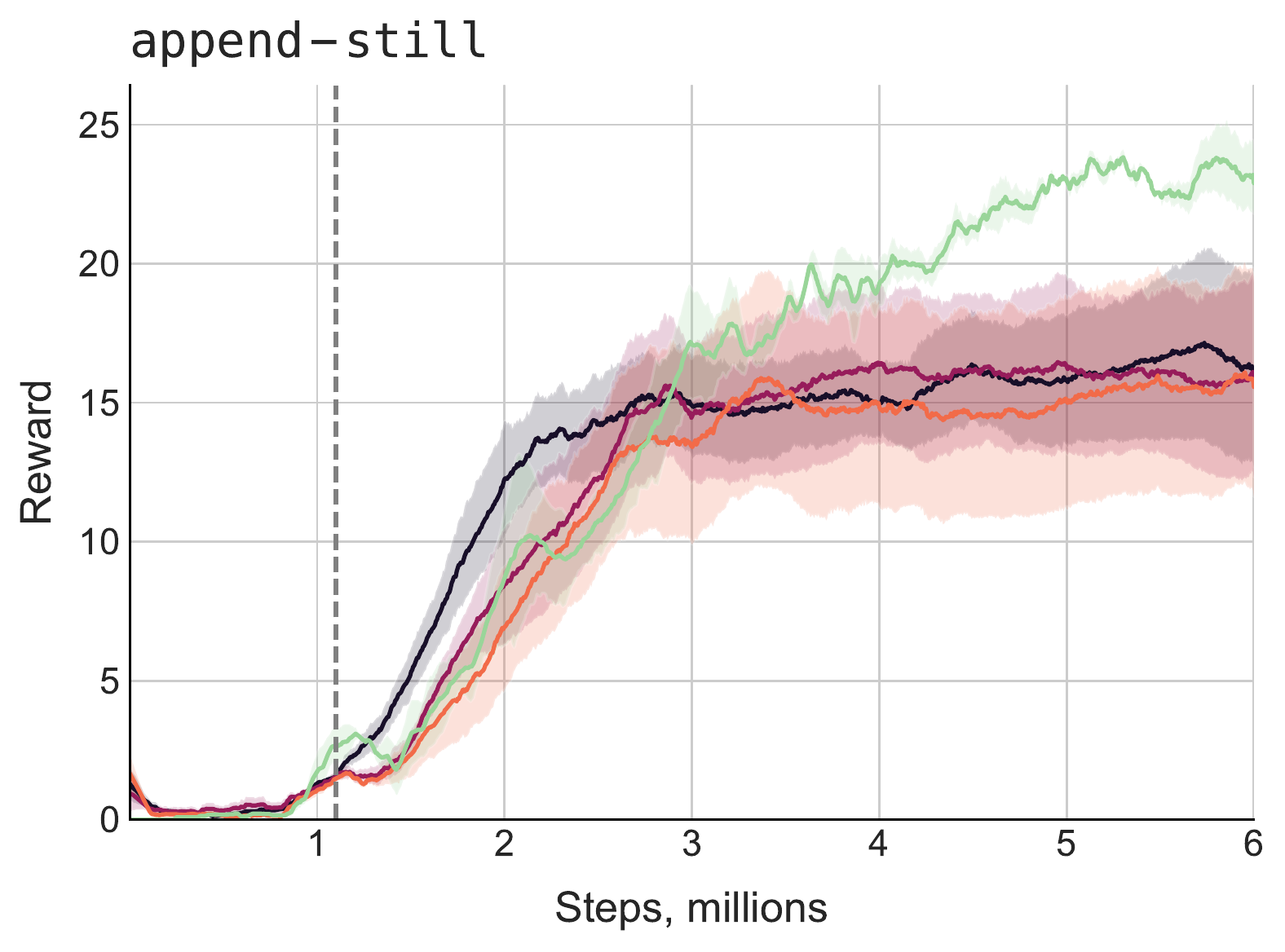}
}
\subfloat[][]{
    \includegraphics[height=2.05in]{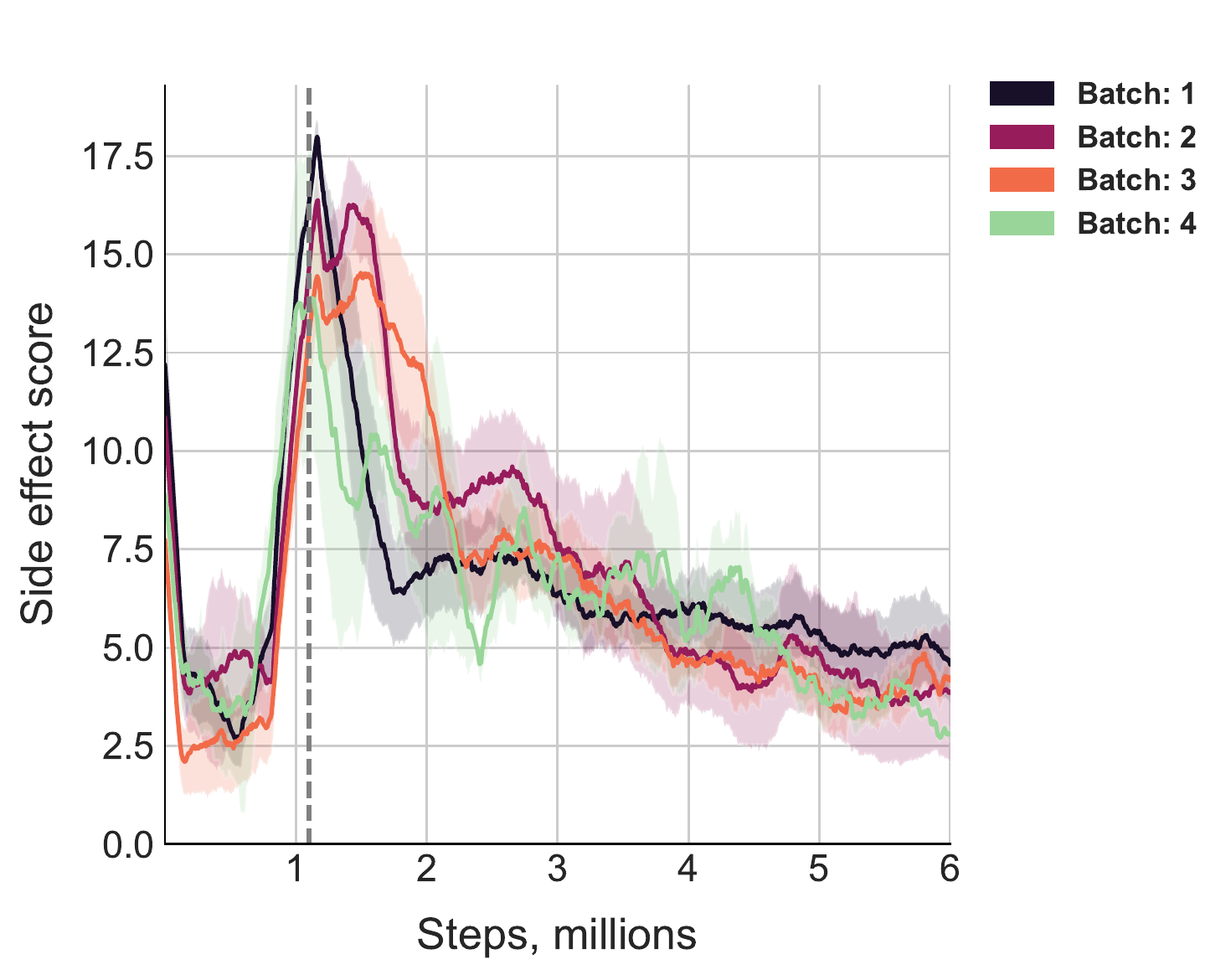}
}
\vspace{-8pt}
\subfloat[][]{
    \includegraphics[height=2.05in]{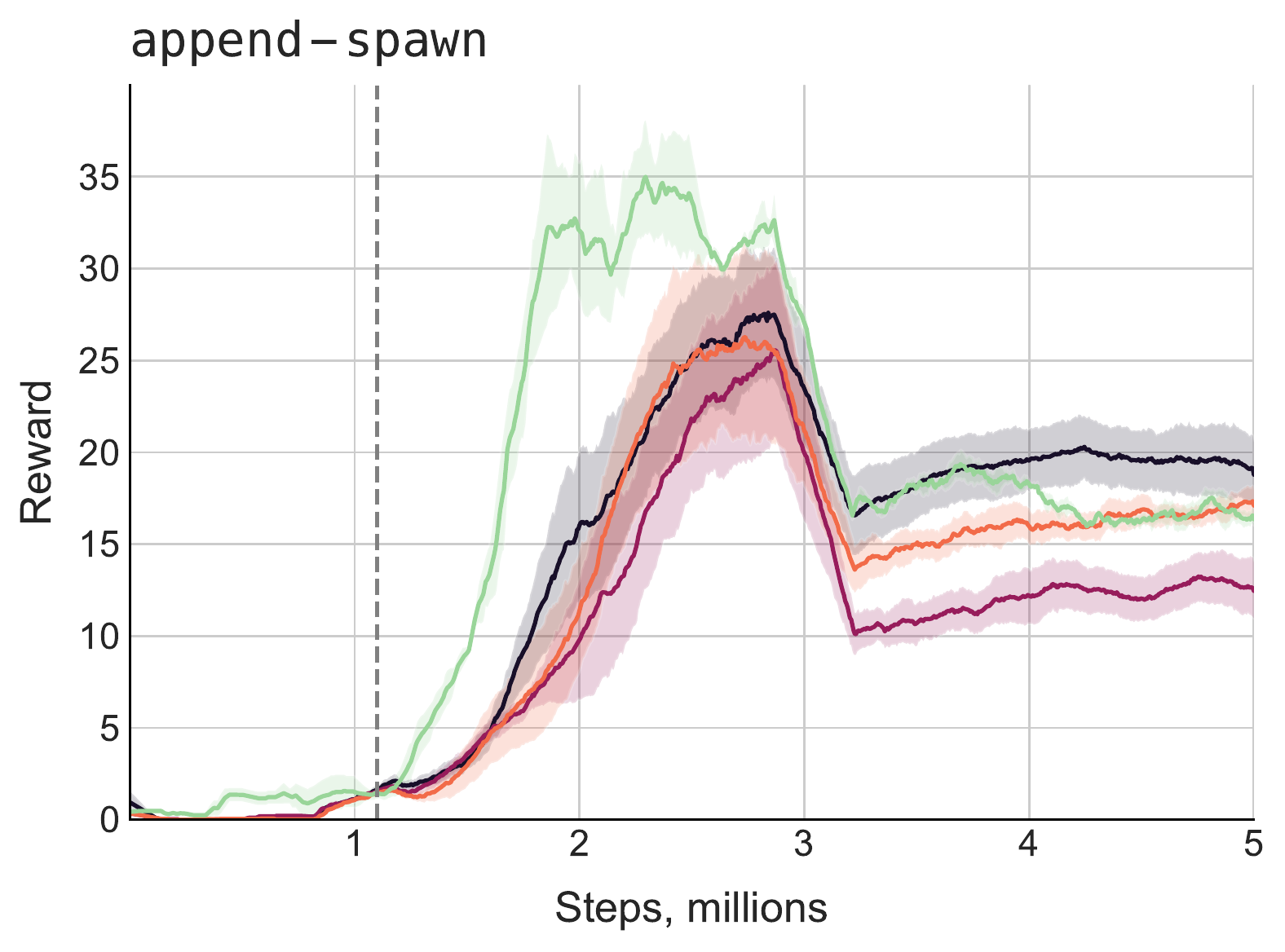}
}
\subfloat[][]{
    \includegraphics[height=2.05in]{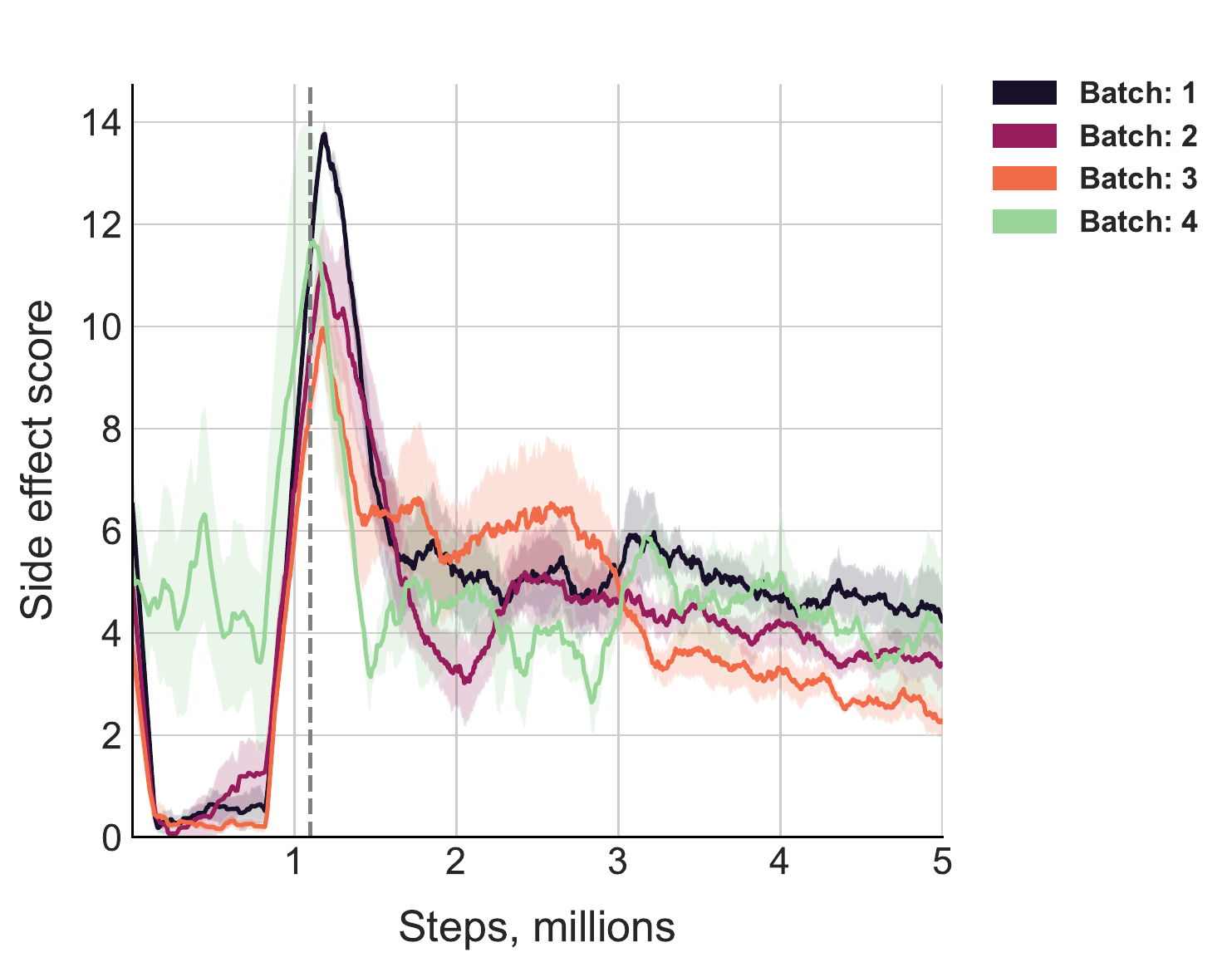}
}
\end{figure}

\begin{figure}[h]
\centering 
\caption{Smoothed episode length curves for {\aup} on each of the four curricula.}\label{fig:batch-length}

\captionsetup[subfigure]{labelformat=empty, captionskip=-50pt}
\subfloat[][]{
    \includegraphics[height=2in]{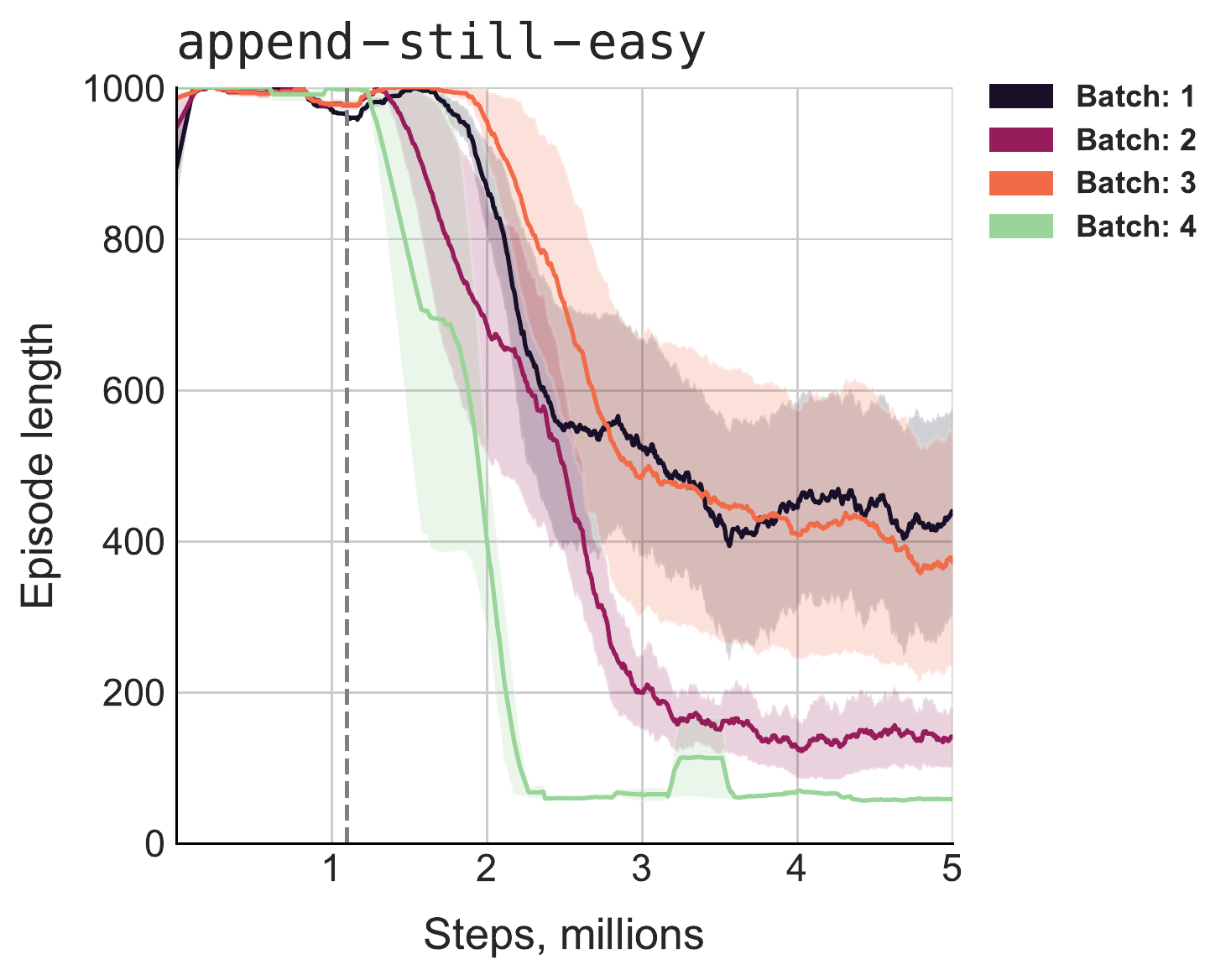}
}
\subfloat[][]{
    \includegraphics[height=2in]{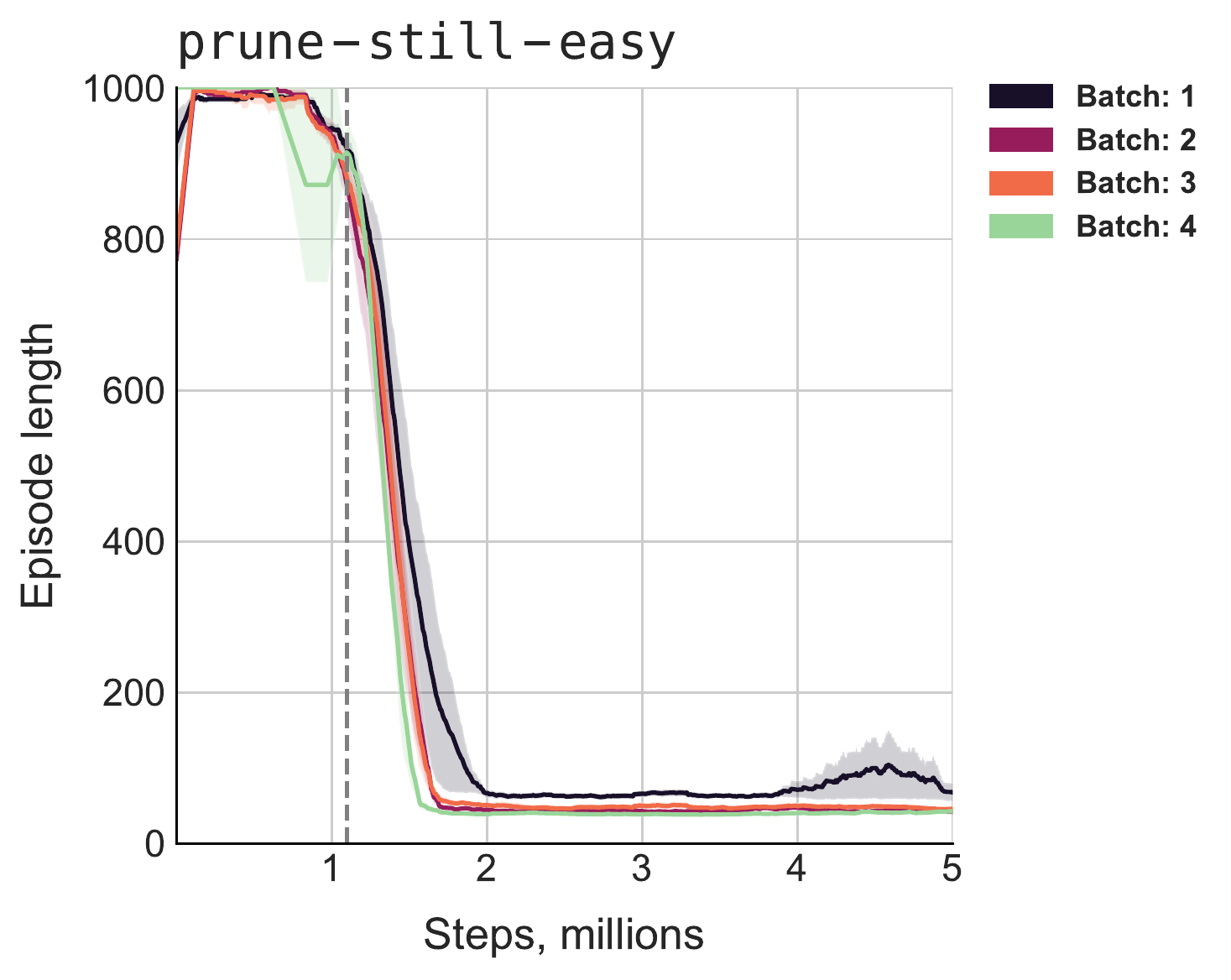}
}

\subfloat[][]{
    \includegraphics[height=2in]{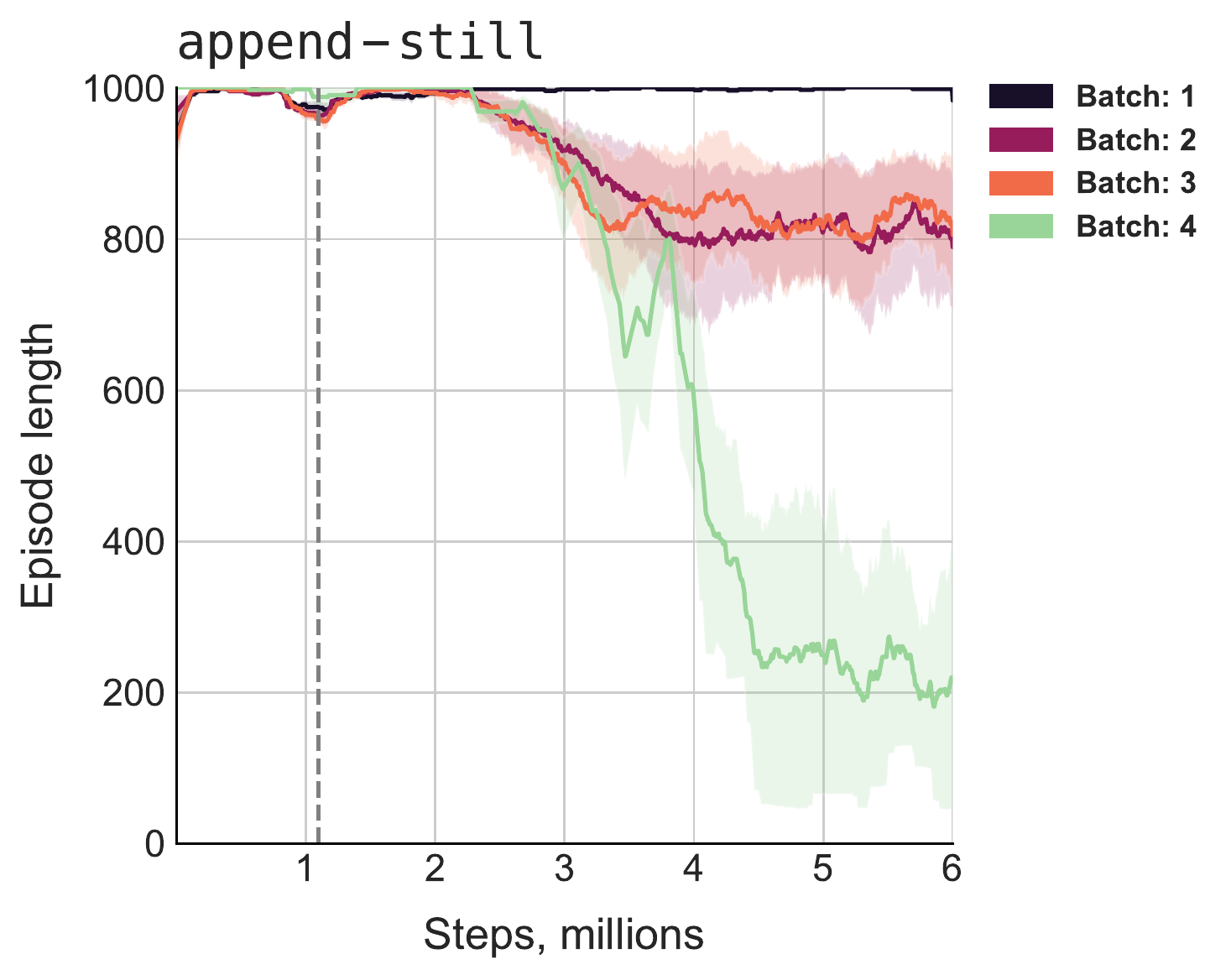}
}
\subfloat[][]{
    \includegraphics[height=2in]{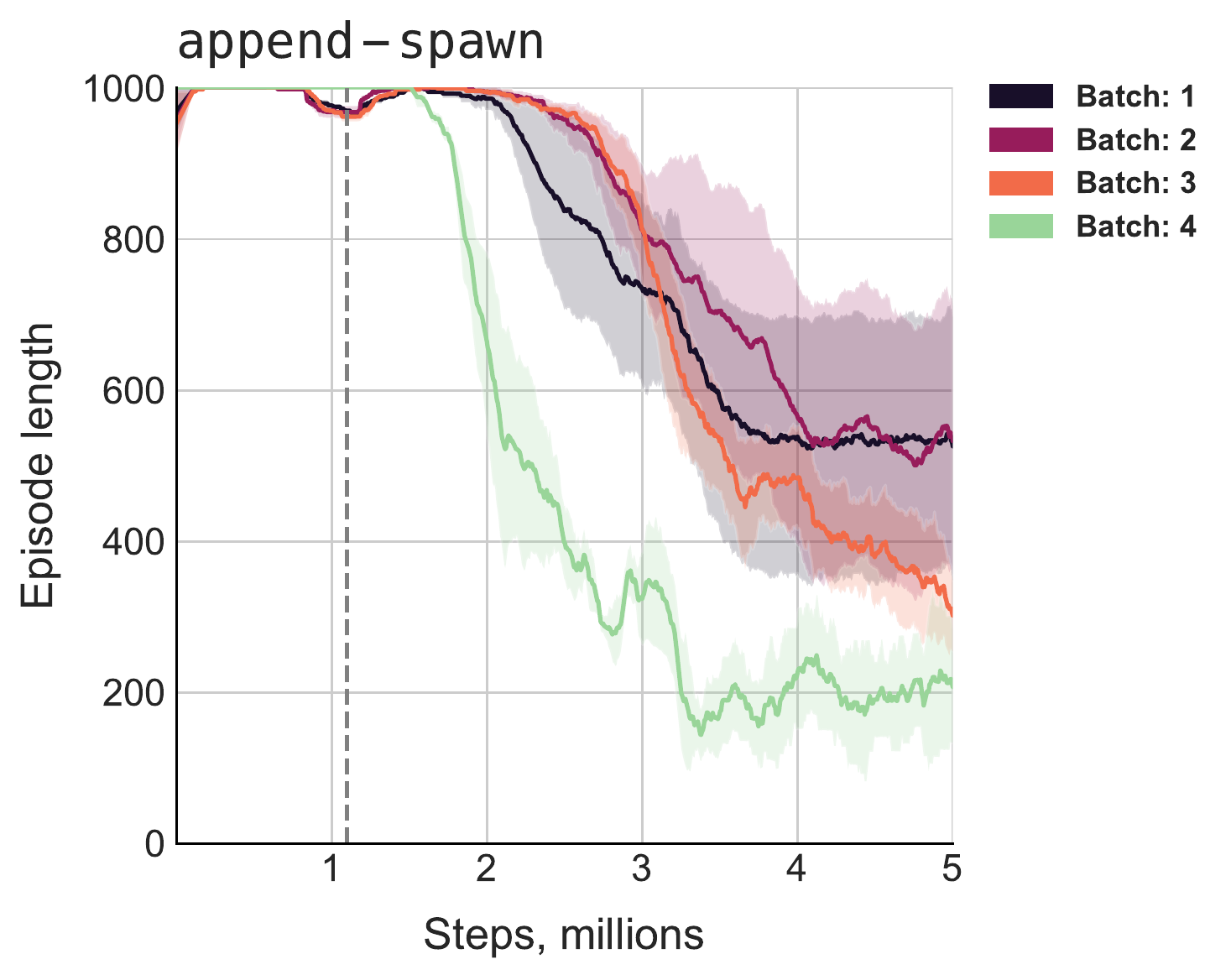}
}
\end{figure}
\end{document}